\RequirePackage[svgnames,table]{xcolor}
\PassOptionsToPackage{numbers,sort&compress}{natbib}
\documentclass[11pt,letterpaper,logo]{yalearxiv}

\usepackage{mathtools}
\usepackage{nicefrac}
\usepackage{cancel}
\usepackage{subcaption}
\usepackage{multirow}
\usepackage{algorithm}
\usepackage{algpseudocode}
\usepackage{comment}
\usepackage{natbib}

\setcitestyle{numbers,square,sort&compress}
\hypersetup{
  colorlinks=true,
  linkcolor=blue!50!black,
  citecolor=blue!50!black,
  urlcolor=blue!50!black,
  hyperfootnotes=false
}

\newtheorem{theorem}{Theorem}
\theoremstyle{remark}

\newtcbtheorem[number within=section]{thm}{Theorem}
{colback=blue!2,colframe=blue!50!black,fonttitle=\bfseries,coltitle=white,boxrule=0.5pt}{thm}
\newtcbtheorem[use counter from=thm]{lem}{Lemma}
{colback=gray!4,colframe=gray!60!black,fonttitle=\bfseries,coltitle=white,boxrule=0.5pt}{lem}
\newtcbtheorem[use counter from=thm]{prop}{Proposition}
{colback=purple!2,colframe=purple!60!black,fonttitle=\bfseries,coltitle=white,boxrule=0.5pt}{prop}
\newtcbtheorem[use counter from=thm]{defn}{Definition}
{colback=green!2,colframe=green!40!black,fonttitle=\bfseries,coltitle=white,boxrule=0.5pt}{defn}
\newtcbtheorem[use counter from=thm]{asm}{Assumption}
{colback=orange!2,colframe=orange!60!black,fonttitle=\bfseries,coltitle=white,boxrule=0.5pt}{asm}

\makeatletter
\newcommand{\LongState}[1]{%
  \State \begin{minipage}[t]{\dimexpr\linewidth-\ALG@tlm\relax}
  #1
  \end{minipage}%
}
\makeatother

\newcommand{\answerTODO}[1][]{\textcolor{red}{\bfseries [TODO]}}
\newcommand{\justificationTODO}[1][]{\textcolor{red}{\bfseries [TODO]}}

\title{RTPO: Reverse-Turn Policy Optimization for\\Stabilizing Agentic RL Training}
\runningtitle{RTPO: Reverse-Turn Policy Optimization}
\keywords{multi-turn reinforcement learning, agentic workflows, tool-using agents, turn-level credit assignment}

\author{%
Yugu Li$^{1}$, Zehong Cao$^{1,*}$, Jianglin Qiao$^{2}$, and Siyi Hu$^{3}$\\
$^{1}$School of CSIT, Adelaide University, Adelaide, SA 5000, Australia\\
$^{2}$ACFR, The University of Sydney, Camperdown, NSW 2050, Australia\\
$^{3}$School of EECMS, Curtin University, Bentley, WA 6102, Australia\\
\texttt{yugu.li@adelaide.edu.au};
\texttt{jimmy.cao@adelaide.edu.au};
\texttt{jianglin.qiao@sydney.edu.au};
\texttt{siyi.hu@curtin.edu.au}
}
\correspondingauthor{Zehong Cao (\texttt{jimmy.cao@adelaide.edu.au})}
\begin{document}

\begin{abstract}
\vspace{-1mm}
{\centering\section*{Abstract}\par}
Training multi-turn agentic workflows with reinforcement learning (RL) enables large language models to perform complex reasoning, use external tools, and conduct iterative search beyond single-turn settings. Yet multi-turn RL training remains highly unstable, often causing severe performance degradation as the number of turns increases. Through theoretical analysis, we identify three tightly coupled sources of instability: rollout–training context mismatch, weak turn-level credit assignment under sparse terminal rewards, and asynchronous policy drift when short and long trajectories are optimized under different policy versions. We show that these issues share a common structural origin in flattened trajectory optimization and address them through a unified reverse-turn formulation. We propose Reverse-Turn Policy Optimization (RTPO), which organizes multi-turn rollouts as sparse reverse trees and performs turn-level policy updates in temporal reverse order, aligning each decision with its downstream continuation. RTPO enables causally consistent turn-level credit assignment and on-policy continuation to control asynchronous drift. We provide theoretical guarantees showing that RTPO eliminates context mismatch and asynchronous drift under the proposed turn-level formulation, reduces credit bias, and converges to recursive optimality. Experiments on multi-turn agentic RL benchmarks show that RTPO improves upon trajectory- and turn-level baselines by 21.50\% and 10.76\%, respectively, highlighting its potential to support more stable training for tool-using agents.
\end{abstract}

\maketitle

\section{Introduction}\label{introduction}

Reinforcement learning (RL) has become a central paradigm for post-training large language models (LLMs), especially when supervision comes from final outcome rewards rather than dense token-level labels. Through outcome-based optimization, RL encourages behaviors such as planning, self-reflection, and verification, achieving strong results in single-turn mathematical reasoning~\citep{shao2024deepseekmath,yang2024qwen2,guo2025deepseek,li2026ssvpo} and code generation~\citep{le2022coderl,shojaee2023executionbased,team2025kimi}. Motivated by these advances, recent work extends RL to multi-turn agentic workflows, especially Tool-Integrated Reasoning (TIR)~\citep{yao2023react,gao2023pal,shinn2023reflexion,chen2026toward,chang2026thor,liu2026roga}, where agents iteratively reason, call tools, receive feedback, and refine their behavior across turns.

Despite this promise, \textbf{RL training for multi-turn workflows remains unstable}: models that improve on short workflows often degrade as the number of turns increases. This degradation is not merely due to longer sequences; it is amplified by temporal dependencies across turns, where each decision reshapes the context, environment state, and future decision distribution. Existing methods partially mitigate this issue but provide limited analysis of its causes. Trajectory-level methods such as PPO~\citep{schulman2017proximal}, GRPO~\citep{shao2024deepseekmath}, and GSPO \citep{zheng2025group} retain the flattened trajectory paradigm. Turn-decomposition methods such as SeeUPO~\citep{hu2026seeupo} improve update granularity but still condition on flattened histories. Tree-based methods such as TreeGRPO~\citep{ji2026tree} and ARPO~\citep{dong2026agentic} use shared prefixes or branching rollouts, but their advantage estimates are not fully aligned with each turn’s causal contribution to downstream continuation. Overall, the sources of instability remain underexplored at the training-pipeline level, and existing methods (see \textbf{Appendix~\ref{related work}} for details) do not jointly address their shared origin.

In this paper, \textbf{we provide a theoretical analysis that identifies three coupled sources of instability}: context mismatch between rollout and training, which breaks consistency between generated and optimized turn-level contexts; weak turn-level credit assignment, where sparse terminal rewards obscure the contribution of individual decisions; and asynchronous policy drift, where short and long trajectories are optimized under different versions of an evolving policy. Although these issues arise from different components of the training pipeline, we show that they share a common structural origin. We illustrate these sources in \textbf{Figure~\ref{fig:1}} and analyze them formally in Sec.~\ref{theory}. 
\vspace{-3pt}

\begin{center}

\setlength{\fboxsep}{4pt}
\fbox{%
\begin{minipage}{0.94\linewidth}
\setlength{\parskip}{0pt}
\setlength{\parindent}{0pt}
\textbf{Core Challenge.}
Motivated by our theoretical analysis, we ask: How can we improve multi-turn agentic RL performance by stabilizing turn-wise training with rollout--training consistency, turn-level credit assignment, and controlled asynchronous policy drift?
\end{minipage}
}

\end{center}

\vspace{-3pt}
To address this challenge, \textbf{we propose Reverse-Turn Policy Optimization (RTPO), a policy optimization framework for stabilizing multi-turn agentic RL training}, with theoretical details provided in Sec.~\ref{sec:method}. The key idea is to formalize sampled interactions as sparse trees and optimize turn-level policies in temporal reverse order, propagating continuation values from later turns to earlier ones through the reverse optimality guarantee. By constructing sibling continuations for each turn, RTPO estimates turn-level advantages under matched downstream conditions. This removes context inconsistency induced by flattened trajectory optimization, reduces turn-level credit bias through causal action alignment, and controls asynchronous policy drift through on-policy continuation. 

 \textbf{Our contributions are fourfold}:
(i) We identify coupled sources of instability in multi-turn agentic RL: context mismatch, weak turn-level credit, and asynchronous policy drift.
(ii) We provide a theoretical analysis showing that these sources share a common structural origin in flattened trajectory optimization. 
(iii) We propose RTPO, a reverse-turn policy optimization framework with sparse reverse trees, turn-level on-policy updates, and theoretical guarantees on recursive optimality, context consistency, and reduced credit bias.
(iv) We validate RTPO on multi-turn agentic RL benchmarks, where it outperforms strong baselines, including GRPO, TreeGRPO, ARPO, and SeeUPO, while further stabilizing the training pipeline.

\begin{figure}[h]
    \centering
    \includegraphics[width=1.0\linewidth]{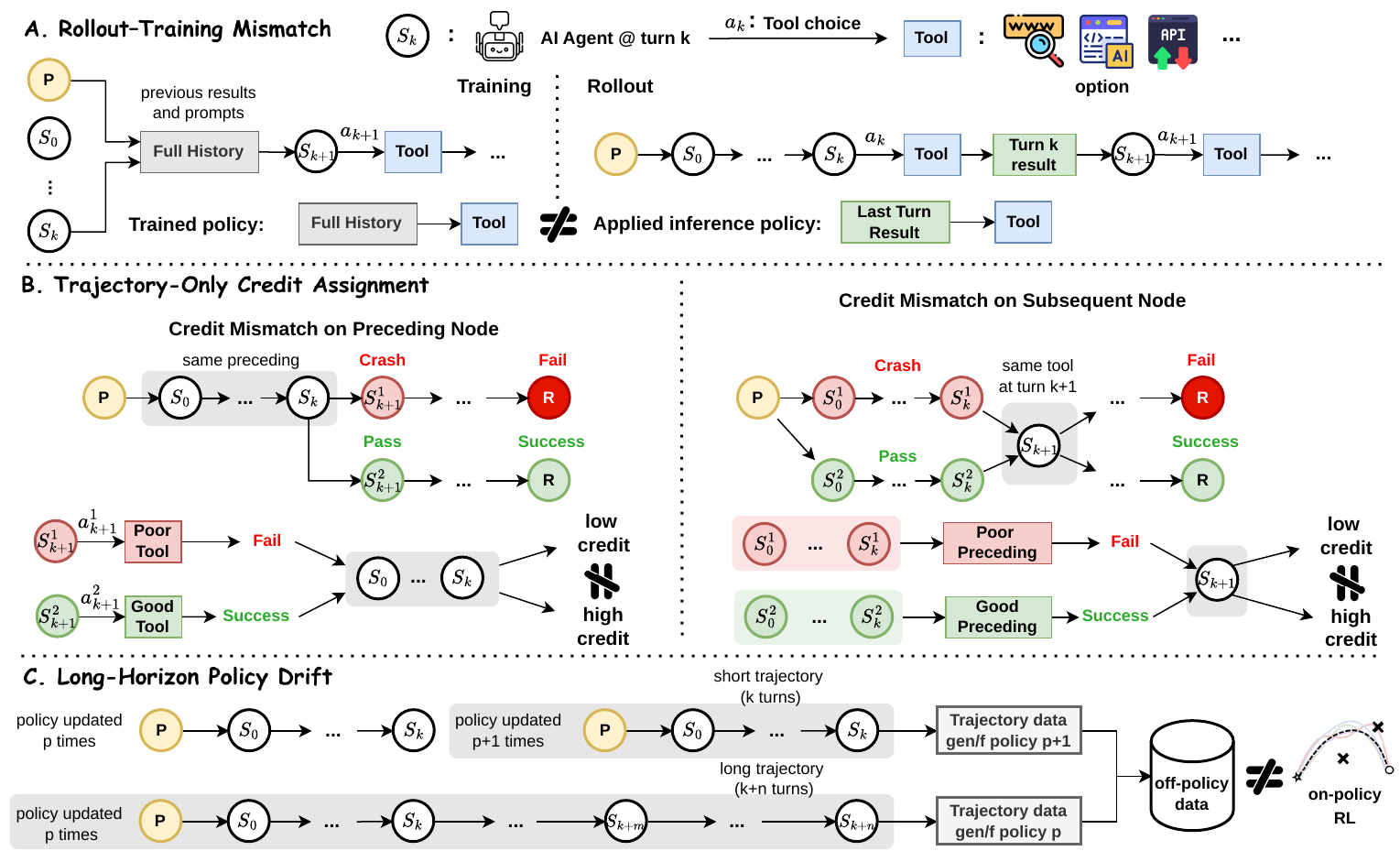}
       \caption{The identified training instability in multi-turn agentic RL arises from: (A) rollout--training context mismatch, (B) trajectory-only credit assignment, and (C) long-horizon policy drift.}
       \vspace{-10pt}
    \label{fig:1}
\end{figure}

\section{Theoretical Analysis: Training Instability}
\label{theory}
We argue that the instability of multi-turn RL training stems from \textit{rollout--training mismatch}: rollouts are typically generated under truncated or summarized contexts, while training recomputes likelihood ratios under concatenated full-history contexts. This discrepancy biases policy optimization and worsens over long horizons. Moreover, the flattened full-history formulation provides only trajectory-level credit, causing terminal rewards to obscure individual turns. When trajectories are generated asynchronously, this mismatch further induces policy drift, as long trajectories generated under an older policy may be optimized after the policy has already been updated by shorter trajectories. 

Building on these observations and insights, we model a multi-turn interaction as a hierarchical Markov decision process (H-MDP) \citep{10.5555/2074094.2074120}. Given an initial prompt $q$, turn $k$ consists of a model response $l_k$ and environment feedback $f_k$, producing the trajectory $(q,l_0,f_0,\ldots,l_{n-1},f_{n-1})$. The turn-level state is $S_k=(q,l_0,f_0,\ldots,l_{k-1},f_{k-1})$, and the turn-level action is the response $l_k$. Each response is generated autoregressively as $l_k=(a_{k,1},\ldots,a_{k,T_k})$, where the token-level state is $s_{k,t}=(S_k,a_{k,<t})$. Existing PPO- and GRPO-style methods \citep{shao2024deepseekmath,zheng2025group,yue2025vapo,yu2026dapo} typically flatten the full interaction into a single token sequence and optimize the resulting trajectory as follows:
\begin{equation}
\label{eq:flat-short}
J^{\mathrm{flat}}(\theta)
=
\mathbb{E}
\left[
\frac{1}{G}\sum_{i=1}^{G}
\frac{1}{\sum_t m_{i,t}}
\sum_t m_{i,t}
\min\!\left(
\rho_{i,t} A_i,
\operatorname{clip}(\rho_{i,t},1-\epsilon,1+\epsilon) A_i
\right)
\right],
\end{equation}
where $\rho_{i,t}=\pi_\theta(a_{i,t}\mid x_{i,<t})/
\pi_{\theta_{\mathrm{old}}}(a_{i,t}\mid x_{i,<t})$ is the token-level importance-sampling (IS) ratio and $A_i$ is a trajectory-level advantage assigned uniformly to all unmasked tokens in trajectory $g_i$ ($g_i$ belongs to a group of $G$ trajectories). Full preliminaries are provided in Appendix~\ref{full-theory}.

\subsection{Rollout--Training Mismatch}
\label{sec:truncation}
As illustrated in \textbf{Figure~\ref{fig:1}-A}, in multi-turn interactions, rollouts are often generated under a truncated or summarized context $\phi(\bar{x}_k)$, while training recomputes token probabilities under the full flattened history $\bar{x}_k$. Thus, the IS ratio used in training differs from the true IS ratio induced by the rollout distribution:
\begin{equation}
\label{eq:is-mismatch-short}
\rho_{k,t}^{\mathrm{flat}}
=
\frac{\pi_\theta(a_{k,t}\mid \bar{x}_k)}
     {\pi_{\theta_{\mathrm{old}}}(a_{k,t}\mid \phi(\bar{x}_k))}
\neq
\frac{\pi_\theta(a_{k,t}\mid \phi(\bar{x}_k))}
     {\pi_{\theta_{\mathrm{old}}}(a_{k,t}\mid \phi(\bar{x}_k))}
=
\rho_{k,t}^{\mathrm{true}} .
\end{equation}
Because the denominator in $\rho_{k,t}^{\mathrm{flat}}$ does not match the distribution that actually sampled the token, the resulting policy-gradient estimate is biased. This mismatch becomes more severe in later turns as the omitted history grows. Moreover, when $\phi$ is non-injective, distinct full-history states can collapse into the same truncated observation, inducing state aliasing and restricting optimization to an observation-induced policy class whose optimum may be strictly below the full-history optimum. 

\noindent\fcolorbox{gray}{gray!10}{
\parbox{0.96\linewidth}{
See \textbf{Appendix~\ref{sec:append_truncation}} for the full theoretical analysis of rollout--training mismatch.
}}

\subsection{Trajectory-Only Credit Assignment}
\label{sec:credit}

Flattened training assigns a single trajectory-level advantage to all turns, even though different turns may contribute unequally to the final outcome, as shown in \textbf{Figure~\ref{fig:1}-B}. For trajectory $g_i$, the population trajectory advantage can be decomposed at turn $k$ as
\begin{equation}
\label{eq:traj-decompose-short}
R_i-\mathbb{E}[R]
=
\underbrace{R_i-Q^\pi(S_{i,k},l_{i,k})}_{\text{downstream stochasticity}}
+
\underbrace{A^\pi(S_{i,k},l_{i,k})}_{\text{true turn advantage}}
+
\underbrace{V^\pi(S_{i,k})-\mu_R}_{\text{upstream state effect}} .
\end{equation}
Here, $R_i$ denotes the final return of trajectory $g_i$, $Q^\pi(S_{i,k}, l_{i,k}) = \mathbb{E}[R \mid S_{i,k}, l_{i,k}]$, $V^\pi(S_{i,k}) = \mathbb{E}[R \mid S_{i,k}]$, and $\mu_R = \mathbb{E}[R]$. This decomposition shows that the true turn-level advantage is only one component of the trajectory-level signal. When upstream state effects or downstream stochasticity dominate, the sign of the trajectory advantage may disagree with the true turn-level advantage, leading to incorrect or even reversed policy updates. Similarly, group-relative baselines provide valid local comparisons only when trajectories share the same turn-level state. Under cross-state grouping, the advantage estimator incurs additional bias $\mathrm{Bias}_{\mathrm{cross}} = \frac{G-1}{G}\left(V^\pi(S_{i,k})-\mu_R\right)$, determined by the upstream trajectory without causal connection to the current action. 

\noindent\fcolorbox{gray}{gray!10}{
\parbox{0.96\linewidth}{
See \textbf{Appendix~\ref{sec:append_credit}} for the full theoretical analysis of trajectory-only credit assignment issues.
}}

\subsection{Long-Horizon Policy Drift}
\label{PPO_clipping}

In asynchronous training with parallel sampled trajectories (see \textbf{Figure~\ref{fig:1}-C}), shorter trajectories may complete and update the policy while longer trajectories are still being generated under an older policy. When these longer trajectories are later optimized, they become off-policy with respect to the current model. The unbiased correction would require the full-trajectory importance weight $\omega_i = \prod_{t=1}^{T_i} \rho_{i,t}$. However, the PPO or GRPO method applies clipping independently at the token level, so the resulting correction differs from the true full-trajectory IS ratio:
\begin{equation}
\label{eq:clip-product-short}
\prod_{t=1}^{T_i}
\operatorname{clip}(\rho_{i,t},1-\epsilon,1+\epsilon)
\neq
\prod_{t=1}^{T_i}\rho_{i,t}
=
\omega_i .
\end{equation}
Therefore, token-level clipping cannot faithfully correct long-horizon policy drift. The discrepancy compounds with trajectory length, while using the exact full-trajectory ratio is impractical because its variance grows rapidly with $T_i$. This explains why standard PPO- or GRPO-style training methods can become unstable or even collapse in long-horizon agentic RL. 

\noindent\fcolorbox{gray}{gray!10}{
\parbox{0.96\linewidth}{
The full theoretical analysis of long-horizon policy drift is provided in \textbf{Appendix~\ref{append_PPO_clipping}}.
}}

\section{Method: Reverse-Turn Policy Optimization (RTPO)}
\label{sec:method}

To address the training instability revealed by our theoretical analysis in Sec.~\ref{theory}, \textit{we propose Reverse-Turn Policy Optimization (RTPO) with theoretical guarantees for stabilizing agentic RL training}, as shown in \textbf{Figure~\ref{fig:2}}. RTPO first models multi-turn interaction as a turn-boundary MDP and defines an independent sub-policy optimization objective for each turn (Sec.~\ref{sec:turn-opt}). This formulation enables reverse-order training to mitigate the mismatch between rollout and training contexts. RTPO then develops sparse tree rollouts based on maximum-value decomposition (Sec.~\ref{sec:tree-credit}) to estimate true turn-level advantages, enabling causally consistent turn-level credit assignment for the case of cross-trajectory comparison without state bias. Finally, RTPO designs an on-policy continuation mechanism (Sec.~\ref{sec:onpolicy}) that eliminates the need for downstream IS-ratio correction under PPO clipping, thereby addressing policy drift induced by asynchronous turns.

\begin{figure}[b!]
    \centering
\includegraphics[width=\linewidth]{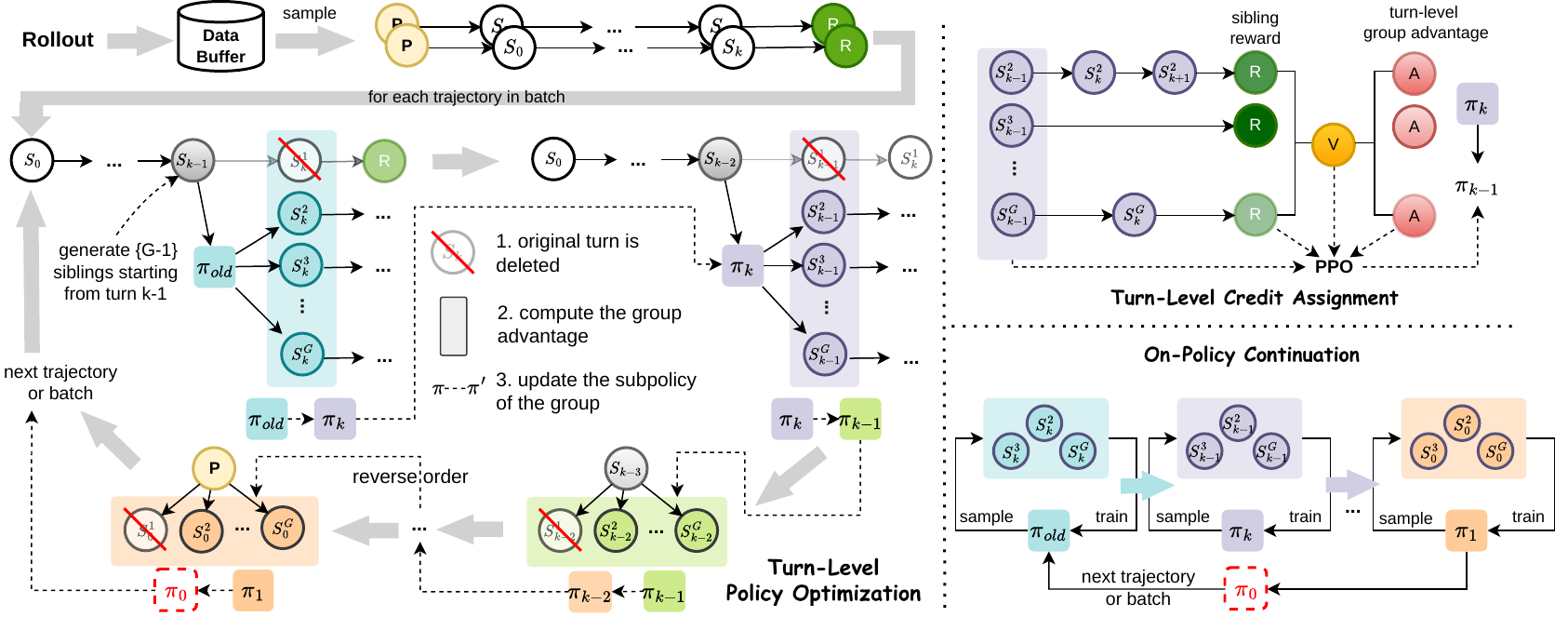}
\caption{Overview of RTPO in agentic RL training. After rollout, RTPO performs reverse-order \textit{turn-level policy optimization} for each trajectory in the batch. Starting from the final turn $k$, the rollout (old) policy $\pi_{\mathrm{old}}$ generates $G{-}1$ sibling rollouts from the same turn boundary to estimate a group advantage and update the training (turn-level) policy to $\pi_k$. The procedure then proceeds backward through turns $k{-}1,k{-}2,\ldots,0$, where each updated turn policy as $\pi_{k-1},\pi_{k-2},\ldots,\pi_0$. To generate state-matched siblings for \textit{turn-level credit assignment}, we design a sparse tree structure that assigns group-relative advantages to individual sibling rollouts at each turn and propagates optimization backward across the trajectory. Finally, RTPO applies \textit{on-policy continuation} to coordinate asynchronous short- and long-trajectory updates from $\pi_k$ to $\pi_0$, reducing policy drift.}
    \label{fig:2}
\end{figure}
 
\subsection{Turn-Level Policy Optimization}
\label{sec:turn-opt}

\paragraph{Turn-boundary MDP for rollout-training match.}
We model a $K$-turn agent episode as a turn-boundary MDP $\mathcal{M}=\langle \bar{\mathcal{X}},\mathcal{A}_H,P_H,R_H,\gamma_H\rangle$, where the augmented state $\bar{x}_k=(S_k,k)$ encodes the interaction history $S_k$ and the turn index $k$. At the turn-$k$ boundary, the agent selects a macro-action $u_k\equiv l_k\in\mathcal{A}_{H,k}(S_k)$, corresponding to the complete turn-$k$ response $l_k=(a_{k,1},\ldots,a_{k,T_k})$. Executing $u_k$ consumes $\tau_k=T_k$ token steps, after which the environment returns to the external tool feedback $f_k$, and the interaction history is updated as $S_{k+1}=S_k\circ(l_k,f_k)$. To define the conditioning context received by the model at turn $k$, we let $c_k=\psi(S_k)$, where $\psi$ can be the identity map, a truncation operator, or a summarization operator. The turn-level policy is factorized as $\pi_\theta=(\pi_{\theta,0},\ldots,\pi_{\theta,K-1})$, where each sub-policy is autoregressive at the token level: $\pi_{\theta,k}(u_k | c_k) = \prod_{t=1}^{T_k} \pi_\theta(a_{k,t} | c_k, a_{k,<t})$.

This turn-boundary MDP decomposition factorizes the episode policy into $K$ turn-level sub-policies, with each sub-policy mapping the conditioning context $c_k=\psi(S_k)$ to a complete response $l_k$. Here, $c_k$ is kept identical between rollout and training. During rollout, RTPO records the exact context $c_k$ received by the model, including any truncation or summarization, together with the corresponding old-policy log-probabilities. During training, the same $c_k$ is used as input, and the loss is computed only over the output tokens in $l_k$. Hence, the denominator of the IS ratio is evaluated under the same conditioning context as in rollout:

\begin{equation}
\label{eq:is-match}
\rho_{k,t}^{\mathrm{RTPO}} = \frac{\pi_\theta(a_{k,t} | c_k, a_{k,<t})}{\pi_{\theta_{\mathrm{old}}}(a_{k,t} | c_k, a_{k,<t})}
\end{equation}
 
\paragraph{Per-turn policy optimization under reverse-order training.}
Training proceeds in reverse order through the turns $k = K{-}1, K{-}2, \ldots, 0$. After turn $k$ is completed, $\pi_{\theta,k}$ is frozen (e.g., subsequent turns produce no gradients for turn-$k$ tokens). This reverse ordering ensures that when turn $k$ is trained, the downstream policies in each turn $\pi_{\theta,k+1:K-1}$ have been optimized and fixed.

During reverse-order training for turn-$k$, the environment is forked from the trunk trajectory's boundary state $S_k$ to generate $G{-}1$ sibling rollouts in the sparse tree. Each sibling rollout $j$ receives a turn-level advantage $A_{j,k}^{H}$ (the details of sibling rollout generation are provided in Sec.~\ref{sec:tree-credit}). The policy optimization objective for turn $k$ is then defined as:
\begin{equation}
\label{eq:per-turn-obj}
J_k(\theta)=\frac{1}{G{-}1}\sum_{j=1}^{G-1}\frac{1}{T_{j,k}}\sum_{t=1}^{T_{j,k}}\min\!\Big(\rho_{j,k,t}\, A_{j,k}^{H},\;\operatorname{clip}\big(\rho_{j,k,t},\, 1{-}\epsilon,\, 1{+}\epsilon\big)\, A_{j,k}^{H}\Big)
\end{equation}
where $\rho_{j,k,t} = \pi_\theta(a_{j,k,t} | c_k, a_{j,k,<t}) / \pi_{\theta_{\mathrm{old}}}(a_{j,k,t} | c_k, a_{j,k,<t})$ is the token IS ratio and $T_{j,k}$ is the number of tokens generated by sibling rollout $j$ at turn $k$. Only these $G{-}1$ sibling rollouts are used for gradient updates; the trunk trajectory is excluded. 

In \textbf{Theorem~1}, we show that RTPO has local and global convergence guarantees through reverse-order turn-level policy optimization.

\begin{tcolorbox}[
    colback=gray!5,
    colframe=black!60,
    title=Theorem 1: Convergence to Recursive Optimality,
    fonttitle=\bfseries,
    breakable
]

Under standard assumptions, finite state and macro-action spaces, Robbins–Monro step sizes, sufficient per-turn exploration, frozen downstream policies, and bounded rewards, reverse-order turn-level optimization satisfies:

\begin{enumerate}[label=(\alph*),nosep]
 \item \textbf{Per-turn convergence.} For each turn $k$, fixing downstream policies reduces optimization to single-step Q-learning with a stationary continuation target; hence, per-turn policy $\pi_k$ converges under the stated assumptions.
 \item \textbf{Recursive optimality.} Under reverse-order training, applying the above argument recursively from turn $K{-}1$ to turn $0$ yields a sequence of per-turn policies that is recursively optimal.
 \item \textbf{Global optimality.} If the turn-level macro-action spaces are complete, i.e., the per-turn policy class can represent any globally feasible trajectory-level policy, then recursive optimality is equivalent to global optimality over the full trajectory.
\end{enumerate}

\end{tcolorbox}

The proof of Theorem~1 is provided in \textbf{Appendix~\ref{app:convergence-proof}}.

\subsection{Turn-Level Credit Assignment}
\label{sec:tree-credit}
\paragraph{Turn-level advantage function.}
We construct sparse tree rollouts at turn-level boundaries, elevating the advantage granularity from trajectory-only reward to turn-level credit while ensuring that all compared rollouts share the same state and are free from state bias. In turn $k$, $G{-}1$ siblings independently generate turn-$k$ responses from the shared boundary state $S_k$ and continue to the terminal. The turn-level advantage function for sibling $j$ is: $A^H_{j,k} = \hat{Q}_{j,k} - \hat{V}_k$. 

Here, all siblings share the same state $S_k$, so advantage differences can only arise from two sources: different action choices at turn $k$ and independent downstream sampling noise. The downstream noise has zero mean ($\mathbb{E}[\xi_{\mathrm{down}} | S_k, l_{j,k}] = 0$) and introduces no systematic bias; the upstream state term $\xi_{\mathrm{up}} = V^\pi(S_k) - \bar{V}$ from Eq.~(\ref{eq:traj-decompose-short}) is exactly zero, because all siblings share $S_k$. Furthermore, $A^H_{j,k}$ is assigned only to the output tokens of turn $k$; the shared prefix $S_k$ serves as the prompt input but does not enter the loss, so each token appears exactly once in the training batch.
 
\paragraph{Turn-level value estimation.}
To identify turn-level values, we separate the local reward at each turn from the downstream completion value over the full trajectory. Based on the MAXQ principle \citep{dietterich2000hierarchical}, we define the following action-value decomposition for turn $k$: $\tilde{Q}_k^\pi(S_k, u_k) = r_k + \gamma^{\tau_k} F_k^\pi(S_{k+1})$. $r_k$ is the immediate reward at turn $k$ ($r_k = 0$ for $k < K{-}1$ under sparse rewards), and $F_k^\pi(S_{k+1})$ is the downstream continuation value representing the expected cumulative return from turn $k{+}1$ onward under policy $\pi$, with base case $F_{K-1}^\pi \equiv 0$. Then, the terminal reward $R_j$ obtained by sibling $j$ rolling out the policy from $S_k$ to the terminal is a single Monte Carlo sample of the above:
\begin{equation}
\label{eq:q-mc}
\hat{Q}_{j,k} = R_j = r_{j,k} + \gamma^{\tau_k} \hat{F}_{j,k}
\end{equation}
where $\hat{F}_{j,k}$ is a single sample of $F_k^\pi(S_{k+1})$. $R_j$ is an unbiased estimator of $\tilde{Q}_k^\pi$. Note that the value estimation quality of $\hat{Q}_{j,k}$ depends on $\hat{F}_{j,k}$, i.e., the quality of the sampled downstream continuation value. If the downstream policy is not yet optimized, even a strong turn-$k$ action may still produce $R_j=0$ due to downstream errors, thereby contaminating the turn-level advantage with downstream noise. The case where $\hat{F}_{j,k}$ is generated by an already optimized downstream policy is addressed in Sec.~\ref{sec:onpolicy}. After estimating the action values at each turn, we compute a state-specific value baseline from the sibling rollouts. Specifically, we define $\hat{V}_k$ as the mean of the estimated $Q$-values across the $G{-}1$ siblings: $\hat{V}_k = \frac{1}{G{-}1}\sum_{j=1}^{G-1} R_j$.

In \textbf{Theorem~2}, we show that RTPO obtains accurate turn-level credit without bias from upstream and downstream effects.

\begin{tcolorbox}[
    colback=gray!5,
    colframe=black!60,
    title=Theorem~2: Causally Consistent Turn-Level Advantage Estimation,
    fonttitle=\bfseries,
    breakable
]

Consider turn $k$ and suppose that the $G{-}1$ sibling rollouts are forked from the same boundary state $S_k$, with each sibling rollout $j\in\{1,\ldots,G{-}1\}$ independently sampling a turn-$k$ macro-action $u_{j,k}\equiv l_{j,k}$. Let $A^H_{j,k}$ denote the turn-level advantage assigned to sibling rollout $j$. Then, under bounded rewards and independent sibling sampling, the turn-level advantage estimator satisfies the following properties:
\begin{enumerate}[label=(\alph*),nosep]
    \item \textbf{Local unbiasedness up to finite-group bias.}
    Conditional on $S_k$, the expectation of $A^H_{j,k}$ is proportional to the true turn-level advantage, up to a finite-group bias of order $O(1/G)$. Because all comparisons are made from the same boundary state, the estimator removes the upstream state-contamination term that appears in cross-trajectory comparisons.

    \item \textbf{Reduced value estimation error.}
    The mean squared error of $A^H_{j,k}$ is lower than that of trajectory advantage estimation whenever cross-state value variance is non-zero. The improvement gap is governed by the variance of values across different boundary states, which can be large in long-horizon, multi-turn tasks.

    \item \textbf{State-matched causal actions.}
    Since all siblings share the same boundary state $S_k$, differences in their returns are causally attributable to the sampled turn-$k$ macro-actions $u_{j,k}$, rather than to upstream trajectory differences. The resulting advantage is assigned only to turn-$k$ output tokens, excluding prefix tokens from the gradient.
\end{enumerate}
\end{tcolorbox}

The proof of Theorem~2 is provided in \textbf{Appendix~\ref{app:advantage-proof}}.

\subsection{On-Policy Continuation}
\label{sec:onpolicy}

\paragraph{On-policy evolution.}
Asynchronous turn updates can induce policy drift that is not fully corrected by per-token IS clipping. In principle, one could correct this drift using the full trajectory-level IS product, but its variance grows exponentially with the horizon length, making it unstable for long-horizon multi-turn training. This drift directly affects the downstream continuation value $F_k$ introduced in Sec.~\ref{sec:tree-credit}: under a stale or mismatched downstream policy, $\hat{F}_{j,k}$ can systematically deviate from the true continuation value $F_k^{\pi_{\theta_{>k}}}$, while existing IS-based corrections are insufficient to remove this deviation. To avoid this issue, RTPO re-generates sibling continuations on-policy at each turn. Let $\theta_0$ denote the parameters at the start of the policy evolution. During reverse-order training, the parameters are updated sequentially across turns. By the time optimization reaches turn $k$, the policies for downstream turns $K{-}1,\ldots,k{+}1$ have already been updated; we denote the resulting current parameters by $\theta_{>k}$. Thus, under $\pi_{\theta_{>k}}$, the downstream turns $k{+}1,\ldots,K{-}1$ use the optimized continuation policy, whereas turn $k$ and all upstream turns remain to be optimized. Note that the trunk is a complete trajectory generated at the start of the policy evolution using $\pi_{\theta_0}$. It does not participate in gradient updates, nor does it enter the computation of $\hat{V}_k$. Its sole role is to provide boundary states $S_k$ and environment snapshots $\mathrm{snap}_k$ as anchoring points for sibling forking. The trunk's policy nature affects which states $S_k$ are visited during training (state coverage), but does not affect the correctness of advantage estimation, since all siblings contributing to the estimate are generated on-policy.

\paragraph{On-policy sibling generation.}
At the start of turn $k$, RTPO synchronizes the latest parameters $\theta_{>k}$ to the inference engine, forks the environment from $\mathrm{snap}_k$, and generates $G{-}1$ siblings using $\pi_{\theta_{>k}}$. Each sibling rollout $j$ generates a turn-$k$ response, then continues with $\pi_{\theta_{>k}}$ through turns $k{+}1$ to $K{-}1$, obtaining terminal reward $R_j$. Since the sampling policy equals the current policy, the trajectory-level IS weight is identically one:
$\omega_j=\prod_{h=k+1}^{K-1}\prod_{t=1}^{T_{j,h}}\frac{\pi_{\theta_{>k}}(a_{j,h,t} \mid s_{j,h,t})}{\pi_{\theta_{>k}}(a_{j,h,t} \mid s_{j,h,t})}\equiv 1 .$

Since turn-level value estimation is $\hat{Q}_{j,k} = r_{j,k} + \gamma^{\tau_k} \hat{F}_{j,k}$ from Sec.~\ref{sec:tree-credit}, the sibling's downstream continuation is performed under the current policy $\pi_{\theta_{>k}}$, the sample $\hat{F}_{j,k}$ is an unbiased draw from $F_k^{\pi_{\theta_{>k}}}(S_{j,k+1})$. Therefore:
\begin{equation}
\label{eq:q-onpolicy}
\hat{Q}_{j,k}=r_{j,k}+\gamma^{\tau_k}\hat{F}_{j,k}^{\pi_{\theta_{>k}}}
\end{equation}
is an unbiased Monte Carlo estimate of $\tilde{Q}_k^{\pi_{\theta_{>k}}}(S_k, u_{j,k})$.
The entire value estimate requires no IS correction, is unaffected by clip truncation, and is free of multiplicative variance explosion.
 
In \textbf{Theorem~3}, we show that RTPO conducts on-policy continuation to avoid policy drift and further reduces advantage-estimation errors.

\begin{tcolorbox}[
    colback=gray!5,
    colframe=black!60,
    title=Theorem 3: On-Policy Continuation under Asynchronous Turns,
    fonttitle=\bfseries,
    breakable
]

In the reverse-order training procedure of RTPO, at the start of each asynchronous turn $k$, sibling rollouts $j\in\{1,\ldots,G{-}1\}$ are generated from the shared boundary state $S_k$ using the current downstream policy $\pi_{\theta_{>k}}$ and are then continued to termination under the same policy. Then the following two properties hold:
\begin{enumerate}[label=(\alph*),nosep]
    \item \textbf{Drift-free on-policy continuation.}
    Each sibling's terminal return $R_j$ provides an unbiased Monte Carlo estimate of the turn-level Q-value under the current downstream policy $\pi_{\theta_{>k}}$. Because the sampling policy and the evaluation policy coincide throughout the sibling continuation, the trajectory-level importance-sampling weight $\omega_j$ is identically one, and no trajectory-level IS correction is required.

    \item \textbf{Dynamic error reduction in advantage estimation.}
    As reverse-order training improves the downstream policies, the continuation value estimates $\hat{F}_{j,k}^{\pi_{\theta_{>k}}}$ become aligned with the current optimized downstream policy rather than a stale policy. Under bounded binary or normalized rewards, when the induced success probabilities move away from the high-uncertainty region around one-half, the variance of the Monte Carlo Q-value estimator decreases, thereby improving the signal-to-noise ratio of the resulting turn-level advantage estimates $A^H_{j,k}$.
\end{enumerate}
\end{tcolorbox}

The proof of Theorem~3 is provided in \textbf{Appendix~\ref{app:onpolicy-proof}}.

\section{Experimental Results and Analysis}\label{experiment}

\textbf{Experimental Setting.} 
We use Qwen3-8B~\citep{yang2025qwen3} as the backbone for the main experiments on RTPO and all baselines, enabling Qwen3's thinking mode for all multi-turn agentic RL rollouts. We compare RTPO with trajectory-level methods, including GRPO~\citep{shao2024deepseekmath}, and turn/tree-level credit methods, including ARPO~\citep{dong2026agentic}, TreeGRPO~\citep{ji2026tree}, and SeeUPO~\citep{hu2026seeupo}. We consider 12 experiments covering comprehensive mathematical and knowledge reasoning benchmarks, where agents perform multi-turn RL training with external calculation and web-search tools. All methods are implemented in VeRL \citep{sheng2024hybridflow}, with vLLM \citep{kwon2023efficient} for rollout generation and FSDP \citep{zhao2023pytorch} for distributed training. We evaluate training performance and stability using task accuracy, tool-call statistics, log-probability comparisons, turn-level credit effects, and the hit rates of on-policy versus off-policy outputs. Detailed experimental setup, including base models, baselines, datasets, configurations, and evaluation metrics, is provided in \textbf{Appendix~\ref{Experimental}}. Implementation details, including pseudocode, training details, and the Qwen3 chat template, are provided in \textbf{Appendix \ref{imple datials}}.


\vspace{-2pt}
\subsection{Main Results}
\label{results:main}

We compare RTPO against the trajectory-level method GRPO and the turn-level method SeeUPO, and additionally include the untuned vanilla model as a non-RL reference. \textbf{Table~\ref{tab:math_knowledge_toolcall_results}} reports both accuracy and the corresponding number of tool calls on each benchmark. We evaluate overall performance on mathematical reasoning and knowledge-intensive question answering across three difficulty tiers: easy (GSM8K), medium (AMC23, MATH500), and hard (AIME24, AIME25, OE-Math, HotpotQA, and 2Wiki). Our RTPO achieves the best performance across all eight benchmarks, improving overall accuracy over vanilla by 66.78\%, outperforming GRPO (+21.50\%) and SeeUPO (+10.76\%). Further discussion and insights on tool calls are provided in \textbf{Appendix~\ref{dis for main}}.

\begin{table*}[!ht]
\centering
\scriptsize
\setlength{\tabcolsep}{2.5pt}
\renewcommand{\arraystretch}{0.95}
\resizebox{\textwidth}{!}{
\begin{tabular}{l|cc|cc|cc|cc|cc|cc|cc|cc|cc}
\toprule
\multirow{3}{*}{\textbf{Method}}
& \multicolumn{12}{c|}{\textbf{M: Mathematical Reasoning}} 
& \multicolumn{4}{c|}{\textbf{K: Knowledge Reasoning}}
& \multicolumn{2}{c}{\textbf{Overall}} \\
\cmidrule(lr){2-13}
\cmidrule(lr){14-17}
\cmidrule(lr){18-19}
& \multicolumn{2}{c|}{\textbf{AIME24}}
& \multicolumn{2}{c|}{\textbf{AIME25}}
& \multicolumn{2}{c|}{\textbf{AMC23}}
& \multicolumn{2}{c|}{\textbf{MATH500}}
& \multicolumn{2}{c|}{\textbf{GSM8K}}
& \multicolumn{2}{c|}{\textbf{OE-Math}}
& \multicolumn{2}{c|}{\textbf{HotpotQA}}
& \multicolumn{2}{c|}{\textbf{2Wiki}}
& \multicolumn{2}{c}{\textbf{}} \\
& \textbf{Acc} & \textbf{Calls}
& \textbf{Acc} & \textbf{Calls}
& \textbf{Acc} & \textbf{Calls}
& \textbf{Acc} & \textbf{Calls}
& \textbf{Acc} & \textbf{Calls}
& \textbf{Acc} & \textbf{Calls}
& \textbf{Acc} & \textbf{Calls}
& \textbf{Acc} & \textbf{Calls}
& \textbf{Acc} & \textbf{Calls} \\
\midrule
Vanilla
& 3.33 & 17
& 13.33 & 23
& 12.50 & 66
& 51.00 & 341
& 76.95 & 731
& 24.33 & 453
& \underline{54.52} & 204
& 59.83 & 224
& 36.97$_{\textcolor{red}{+00.00\%}}$
& 2059$_{(1631,428)}$ \\

GRPO
& 10.00 & 12
& 20.00 & 8
& 57.50 & 18
& 82.33 & 94
& 93.86 & 59
& 50.74 & 29
& 53.39 & 214
& 61.85 & 226
& 53.71$_{\textcolor{red}{+45.28\%}}$
& 660$_{(220,440)}$ \\

SeeUPO
& \underline{20.00} & 8
& \underline{23.33} & 6
& \underline{67.50} & 14
& \underline{84.20} & 87
& \underline{94.47} & 87
& \underline{53.86} & 147
& 54.27 & 254
& \underline{63.78} & 253
& \underline{57.68}$_{\textcolor{red}{+56.02\%}}$
& 856$_{(349,507)}$ \\

RTPO
& \textbf{33.33} & 4
& \textbf{26.67} & 2
& \textbf{67.50} & 9
& \textbf{86.20} & 168
& \textbf{94.84} & 482
& \textbf{55.49} & 106
& \textbf{64.89} & 613
& \textbf{64.35} & 867
& \textbf{61.66}$_{\textcolor{red}{+66.78\%}}$
& 2251$_{(771,1480)}$ \\
\bottomrule
\end{tabular}
}
\caption{Performance comparison across eight tool-use agentic RL benchmarks. \textit{Acc} denotes Pass@1 for mathematical tasks and best-span F1 for knowledge tasks, while \textit{Calls} denotes the number of tool calls rounded to the nearest integer, such as Python and web-search calls. \textit{Overall} subscripts indicate relative changes in accuracy compared with the vanilla model, along with total tool calls (M, K). }
\label{tab:math_knowledge_toolcall_results}
\vspace{-10pt}
\end{table*}

\subsection{Rollout--Training Consistency Analysis}\label{result:rtc}

\label{sec:stability}

To examine RTPO’s training-time advantage, we measure at each step the geometric-mean ratio between training-stage and rollout-stage token \textit{logprobs}. This ratio captures rollout--training consistency: values near $1$ indicate matched conditioning distributions, while deviations suggest that the full-history training policy favors outputs different from those sampled during rollout. The Kullback–Leibler (KL) divergence further quantifies the distributional gap, with smaller values indicating stronger rollout--training consistency. \textbf{Figure~\ref{fig:two_results}} shows the log-probability ratio and KL divergence during training. On mathematical tasks, RTPO stays stable at $1.0$, while SeeUPO fluctuates around $1.0$ and GRPO recovers from $0.89$ to $0.97$ after $30$ steps. On knowledge tasks, RTPO again remains at $1.0$, whereas SeeUPO and GRPO recover only from around $0.80$ to $0.89$ and $0.83$ after $14$ steps. RTPO also achieves the lowest average KL divergence, indicating more consistent rollout-training contexts than the baselines.

\begin{figure}[!ht]
    \centering
    \begin{subfigure}{0.24\linewidth}
        \centering
        \includegraphics[width=\linewidth]{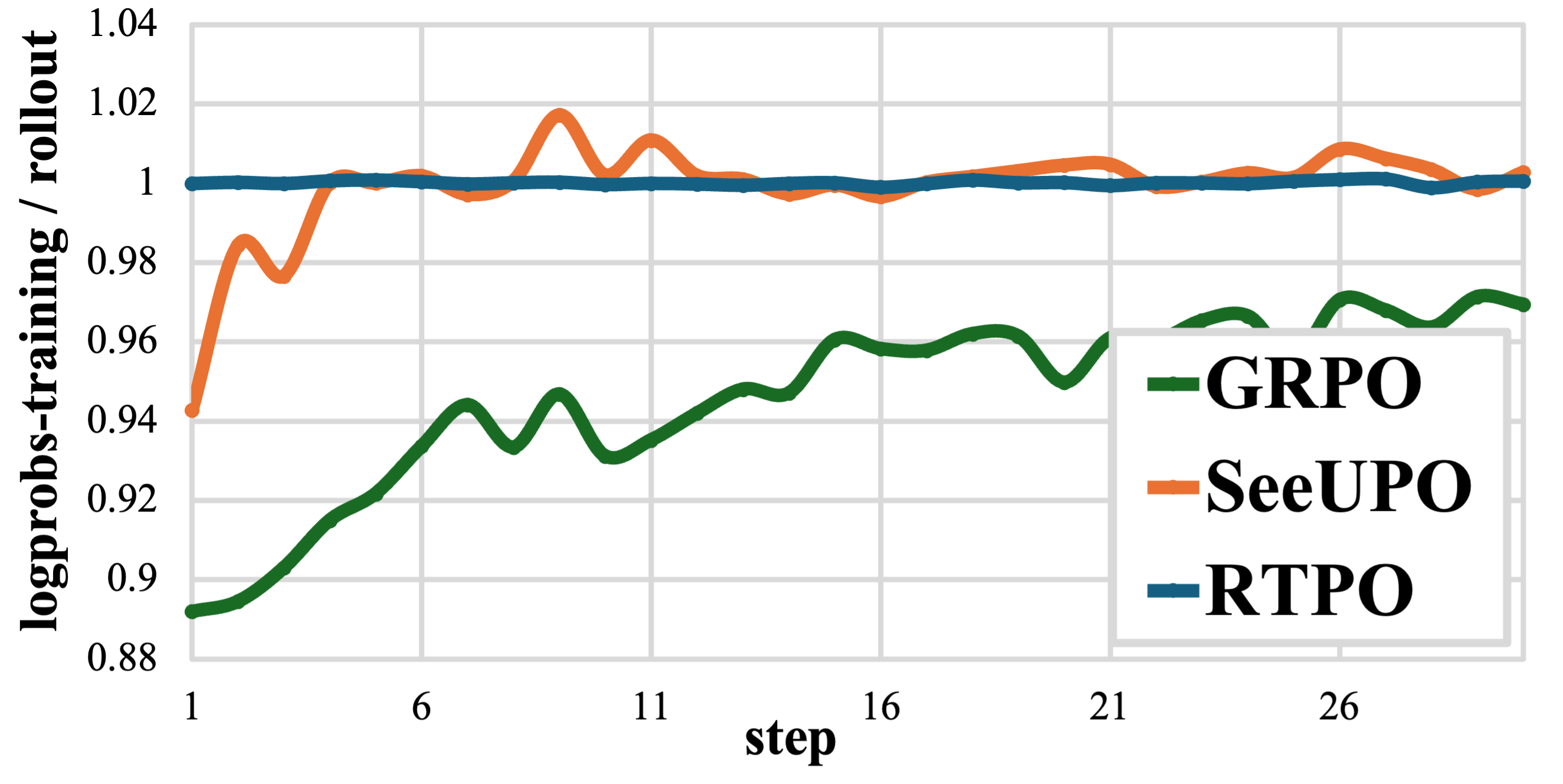}
        \caption{Rollout-train ratio [M]}
        \label{fig:math_ratio}
    \end{subfigure}
    \hfill
    \begin{subfigure}{0.24\linewidth}
        \centering
        \includegraphics[width=\linewidth]{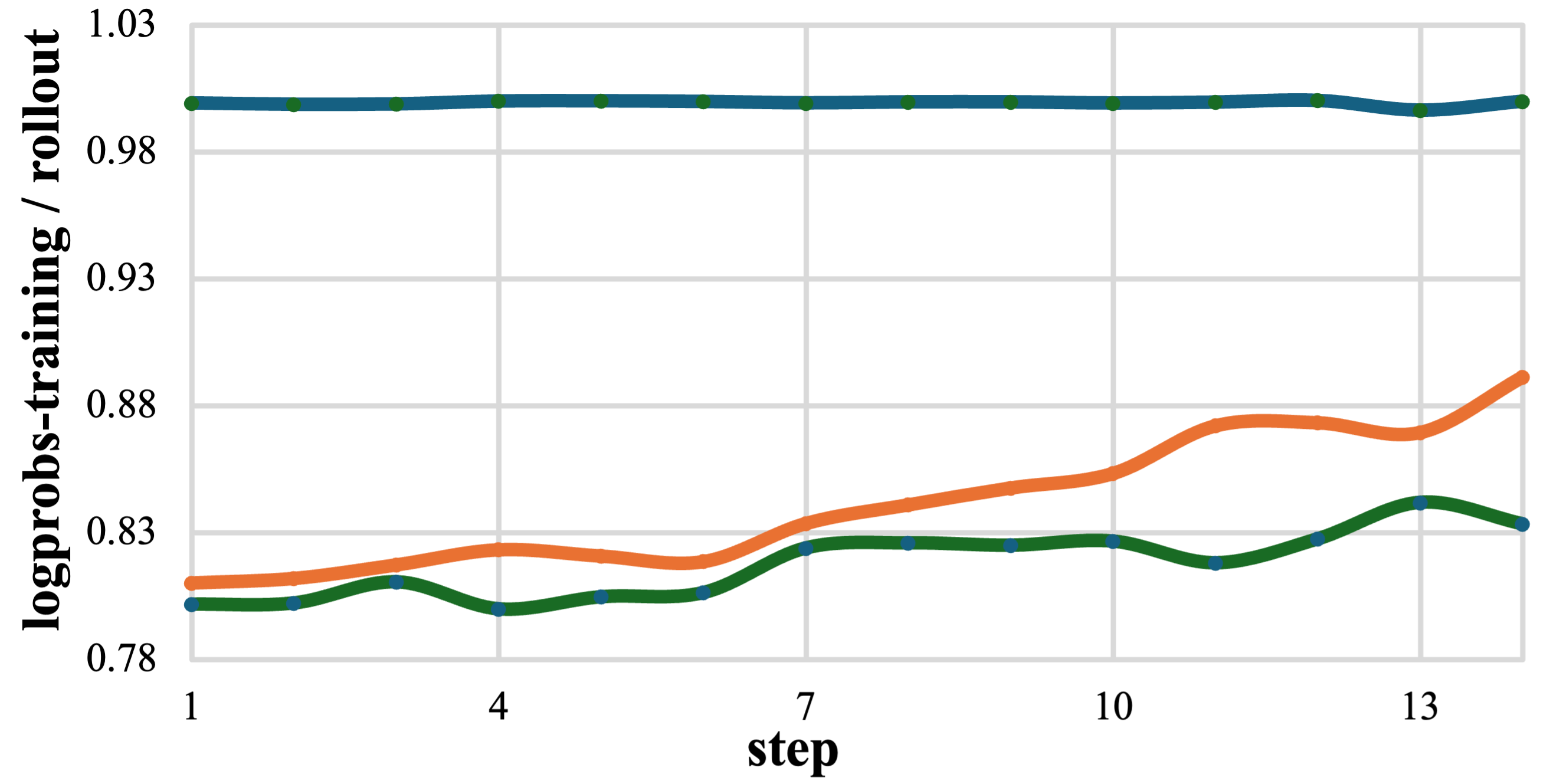}
        \caption{Rollout-train ratio [K]}
        \label{fig:search_ratio}
    \end{subfigure}
    \hfill
    \begin{subfigure}{0.24\linewidth}
        \centering
        \includegraphics[width=\linewidth]{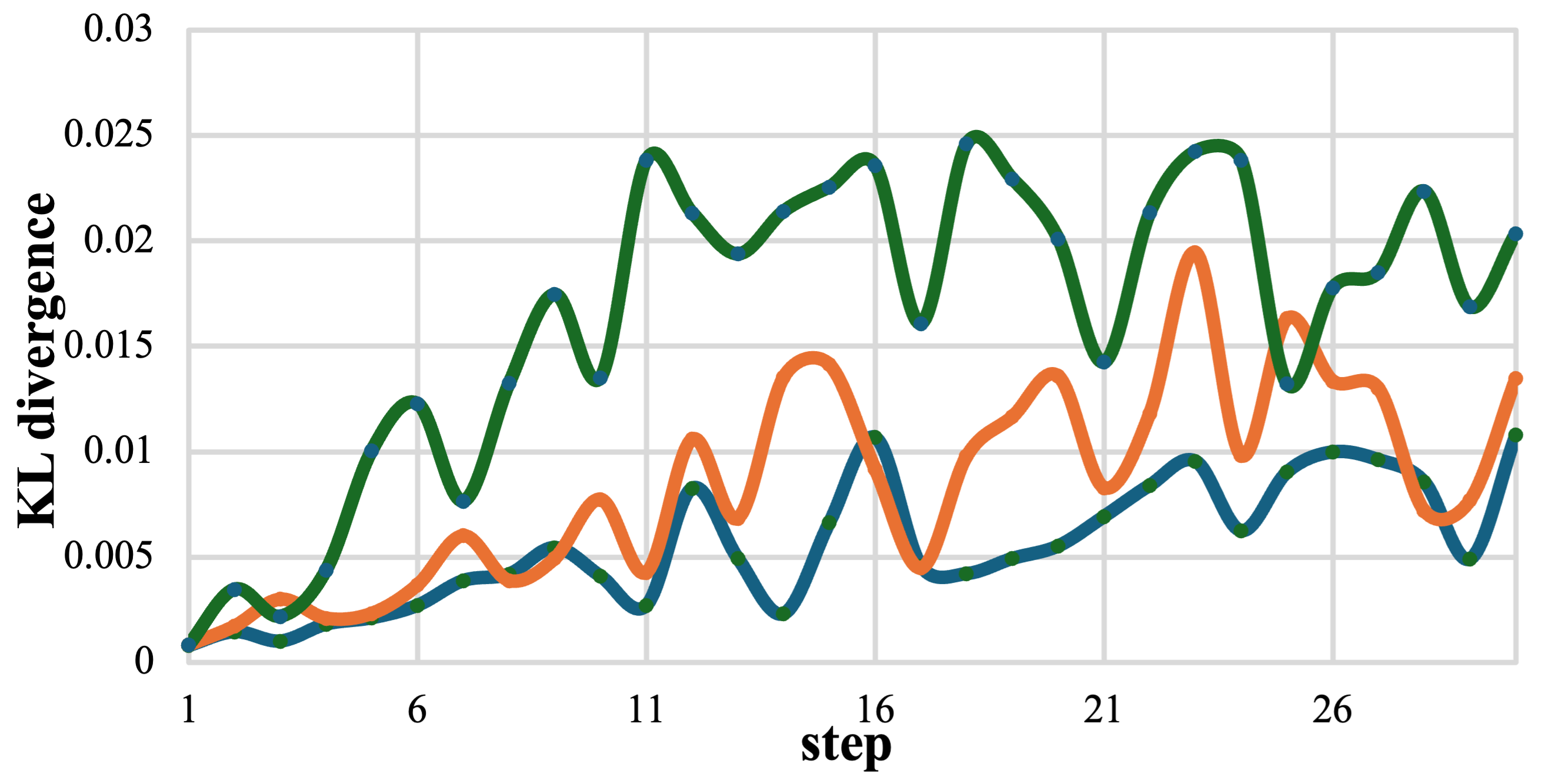}
        \caption{KL divergence (M)}
        \label{fig:math_kl}
    \end{subfigure}
    \hfill
    \begin{subfigure}{0.24\linewidth}
        \centering
        \includegraphics[width=\linewidth]{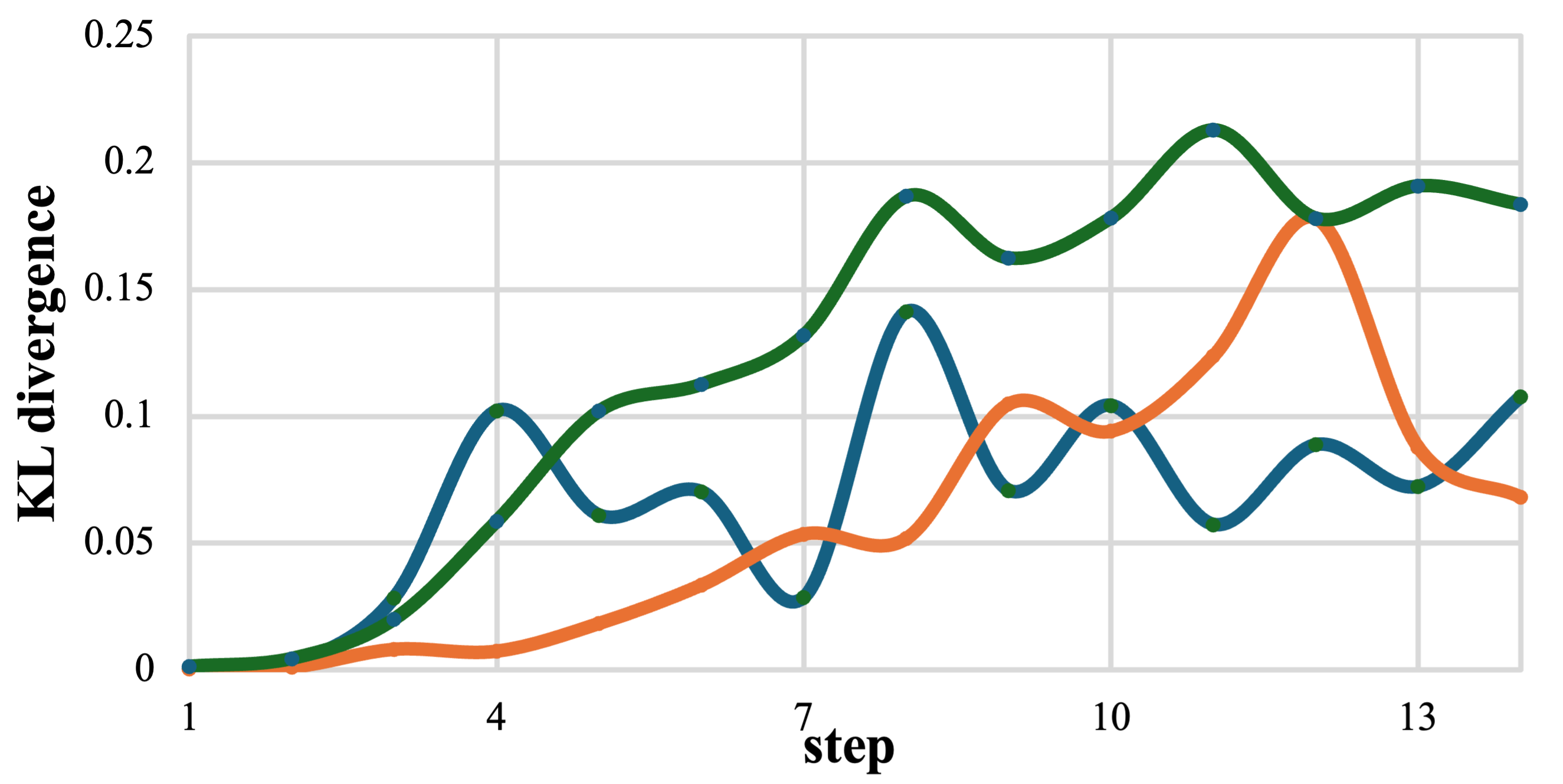}
        \caption{KL divergence (K)}
        \label{fig:search_kl}
    \end{subfigure}
    \caption{Rollout--training consistency comparison using log-probability ratios and KL divergence on mathematical (M) and knowledge (K) reasoning tasks. Zoom in for better visualization.}
    \label{fig:two_results}
    \vspace{-5pt}
\end{figure}

A noteworthy observation is that, despite using different rollout and training contexts, baseline ratios still drift slowly toward $1$. This resembles the bootstrapped alignment mechanisms in DAgger~\citep{ross2011reduction} and SCoRe~\citep{kumartraining}. However, unlike RTPO’s structural consistency, this empirical alignment is incomplete, task-dependent, noisy, and consumes additional optimization budget. We provide further discussion and insights in \textbf{Appendix \ref{dis for rtc}}.

\subsection{Effect of Turn-Level Credit Assignment}\label{result:ca}

\label{sec:credit-diagnostic}

To isolate the effect of RTPO’s turn-level credit signal, we feed the full interaction history as the rollout input for all methods, controlling for the rollout–training context mismatch presented in Sec~\ref{sec:stability}. Under this control setting, performance differences mainly reflect the effect of credit assignment (CA) on advantage estimation. Because full interaction histories are used, the accuracy scores in Table~\ref{tab:math_results} are substantially higher than those in Table~\ref{tab:math_knowledge_toolcall_results}. RTPO-CA achieves the highest overall average across the four mathematical reasoning benchmarks, as shown in \textbf{Table~\ref{tab:math_results}}. It achieves the best results on AMC23 (Pass@1 $93.33$, Pass@4 $100.0$), AIME25 (Pass@1 $70.00$, Pass@4 $76.67$), and MATH500 (Pass@1 $86.60$, Pass@4 $89.60$), while matching the best Pass@4 on AIME24 ($80.00$). These results show that RTPO improves both solution coverage and the preference for correct answers, with the largest gain on the harder AIME25 benchmark. This supports the value of turn-level credit assignment: by forking sibling rollouts from the same boundary state $S_k$, RTPO forms a local baseline and attributes advantage directly to the current turn decision.

\vspace{-5pt}
\begin{table*}[!ht]
\centering
\scriptsize
\setlength{\tabcolsep}{9pt}
\begin{tabular}{lcccccccccc}
\toprule
\multirow{2}{*}{\textbf{Method}}
& \multicolumn{2}{c}{\textit{AMC23}}
& \multicolumn{2}{c}{\textit{AIME24}}
& \multicolumn{2}{c}{\textit{AIME25}}
& \multicolumn{2}{c}{\textit{MATH500}}
& \multicolumn{2}{c}{\textit{Overall}} \\
\cmidrule(lr){2-3}
\cmidrule(lr){4-5}
\cmidrule(lr){6-7}
\cmidrule(lr){8-9}
\cmidrule(lr){10-11}
& P@1 & P@4
& P@1 & P@4
& P@1 & P@4
& P@1 & P@4
& P@1 & P@4 \\
\midrule
ARPO
& 90.00 & 97.50
& \textbf{73.33} & 73.33
& 63.33 & 73.33
& 86.00 & 88.80
& 78.15 & 83.23 \\

TreeGRPO
& 92.50 & 96.67
& 63.33 & \textbf{80.00}
& 56.67 & 70.00
& 86.20 & 89.40
& 74.67 & 84.00 \\

SeeUPO
& 92.50 & 97.50
& \textbf{73.33} & \textbf{80.00}
& 53.33 & 70.00
& 86.20 & 89.20
& 76.33 & 84.18 \\

RTPO-CA
& \textbf{93.33} & \textbf{100.0}
& 66.67 & \textbf{80.00}
& \textbf{70.00} & \textbf{76.67}
& \textbf{86.60} & \textbf{89.60}
& \textbf{79.13} & \textbf{86.55} \\
\bottomrule
\end{tabular}
\caption{Mathematical reasoning performance under controlled turn-level credit assignment (CA). Overall reports the average across benchmarks. Bold indicates the best result.}
\label{tab:math_results}
\vspace{-15pt}
\end{table*}

\subsection{Policy Drift Correction}

\label{sec:ablation-onpolicy}

To isolate the effect of on-policy continuation (Sec~\ref{sec:onpolicy}), we compare the default RTPO with an off-policy variant that reuses downstream continuations generated by $\pi_{\theta_0}$ during the initial rollout and corrects staleness using a clamped trajectory-level IS weight; see Appendix~\ref{appd2} for details. We evaluate both variants on four knowledge-reasoning deep-search benchmarks (GAIA, WebWalkerQA, HLE, and XBench) using output-hit accuracy. As shown in \textbf{Table~\ref{tab:on_off_policy_hit}}, default RTPO outperforms the off-policy variant on GAIA ($+5.83\%$, WebWalkerQA ($+3.50\%$), and XBench ($+7.00\%$). On HLE, the difference is negligible ($-0.33\%$), indicating a near tie. This pattern supports \textit{Theorem~3(b)}: on-policy continuation is most beneficial when downstream policies change substantially during reverse-order training. GAIA, WebWalkerQA, and XBench involve longer retrieval and interaction horizons, where stale-rollout IS correction can introduce clamp-truncation bias that on-policy re-sampling avoids. In contrast, HLE is more closed-ended and often requires shorter search horizons, leading to smaller gains; additional results in \textbf{Appendix \ref{add subsets}} further support this interpretation. In addition, we discuss the limitations, future work, and broader impacts of RTPO in \textbf{Appendix~\ref{limits}, \ref{impact}}.

\vspace{-8pt}

\begin{table}[!ht]
\centering
\small
\begin{tabular}{lrrrrrr}
\toprule
\textbf{Benchmark} & QS & Off-Policy Hit & Off-Policy Rate & On-Policy Hit & On-Policy Rate & \textbf{$\Delta$} \\
\midrule
\textit{GAIA}      & 103  & 24  & 23.30\% & 30  & 29.13\% & \textbf{+5.83\%} \\
\textit{XBench}    & 100  & 8   & 8.00\%  & 15  & 15.00\% & \textbf{+7.00\%} \\
\textit{WebWalker} & 200  & 12  & 6.00\%  & 19  & 9.50\%  & \textbf{+3.50\%} \\
\textit{HLE}       & 2096 & 261 & 12.45\% & 254 & 12.12\% & -0.33\% \\
\bottomrule
\end{tabular}
\vspace{2pt}
\caption{Output-hit comparison between on- and off-policy RTPO variants on knowledge tasks. Values are rounded to the nearest integer. $\Delta$ denotes the change from off-policy to on-policy hit rates.}
\label{tab:on_off_policy_hit}
\vspace{-15pt}
\end{table}

\section{Concluding Remarks}

This work identifies rollout--training mismatch as a fundamental source of instability in multi-turn agentic RL, particularly in tool-augmented mathematical reasoning and deep-search tasks. We provide a theoretical analysis showing how existing training pipelines produce unstable optimization signals and propose RTPO as a principled framework to address this issue. RTPO integrates rollout--training consistency, turn-level credit assignment, and on-policy continuation within a unified training pipeline. Supported by theoretical guarantees and empirical results, RTPO improves multi-turn optimization stability and provides a promising direction for training long-horizon tool-using agents.

\section*{Acknowledgements}

This study is partially supported by Australian Research Council (ARC)
Projects DE220100265, DP250103612, and LP240200636.

\newpage
\bibliographystyle{unsrtnat}
\bibliography{main}

@inproceedings{xuesimpletir,
  title={SimpleTIR: End-to-End Reinforcement Learning for Multi-Turn Tool-Integrated Reasoning},
  author={Xue, Zhenghai and Zheng, Longtao and Liu, Qian and Li, Yingru and Zheng, Xiaosen and MA, Zejun and An, Bo},
  booktitle={First Workshop on Multi-Turn Interactions in Large Language Models},
  year={2026}
}

@article{team2025kimi,
  title={Kimi k1. 5: Scaling reinforcement learning with llms},
  author={Team, Kimi and Du, Angang and Gao, Bofei and Xing, Bowei and Jiang, Changjiu and Chen, Cheng and Li, Cheng and Xiao, Chenjun and Du, Chenzhuang and Liao, Chonghua and others},
  journal={arXiv preprint arXiv:2501.12599},
  year={2025}
}

@article{yang2024qwen2,
  title={Qwen2. 5-math technical report: Toward mathematical expert model via self-improvement},
  author={Yang, An and Zhang, Beichen and Hui, Binyuan and Gao, Bofei and Yu, Bowen and Li, Chengpeng and Liu, Dayiheng and Tu, Jianhong and Zhou, Jingren and Lin, Junyang and others},
  journal={arXiv preprint arXiv:2409.12122},
  year={2024}
}

@article{guo2025deepseek,
  title={Deepseek-r1: Incentivizing reasoning capability in llms via reinforcement learning},
  author={Guo, Daya and Yang, Dejian and Zhang, Haowei and Song, Junxiao and Wang, Peiyi and Zhu, Qihao and Xu, Runxin and Zhang, Ruoyu and Ma, Shirong and Bi, Xiao and others},
  journal={arXiv preprint arXiv:2501.12948},
  year={2025}
}

@inproceedings{yao2023react,
  title={ReAct: Synergizing Reasoning and Acting in Language Models},
  author={Yao, Shunyu and Zhao, Jeffrey and Yu, Dian and Du, Nan and Shafran, Izhak and Narasimhan, Karthik and Cao, Yuan},
  booktitle={International Conference on Learning Representations},
  year={2023}
}

@inproceedings{gao2023pal,
  title={Pal: Program-aided language models},
  author={Gao, Luyu and Madaan, Aman and Zhou, Shuyan and Alon, Uri and Liu, Pengfei and Yang, Yiming and Callan, Jamie and Neubig, Graham},
  booktitle={International conference on machine learning},
  year={2023},
  organization={PMLR}
}

@article{schulman2017proximal,
  title={Proximal policy optimization algorithms},
  author={Schulman, John and Wolski, Filip and Dhariwal, Prafulla and Radford, Alec and Klimov, Oleg},
  journal={arXiv preprint arXiv:1707.06347},
  year={2017}
}

@article{yang2025qwen3,
  title={Qwen3 technical report},
  author={Yang, An and Li, Anfeng and Yang, Baosong and Zhang, Beichen and Hui, Binyuan and Zheng, Bo and Yu, Bowen and Gao, Chang and Huang, Chengen and Lv, Chenxu and others},
  journal={arXiv preprint arXiv:2505.09388},
  year={2025}
}

@inproceedings{ross2011reduction,
  title={A reduction of imitation learning and structured prediction to no-regret online learning},
  author={Ross, St{\'e}phane and Gordon, Geoffrey and Bagnell, Drew},
  booktitle={Proceedings of the Fourteenth International Conference on Artificial Intelligence and Statistics},
  year={2011},
}

@inproceedings{kumartraining,
  title={Training Language Models to Self-Correct via Reinforcement Learning},
  author={Kumar, Aviral and Zhuang, Vincent and Agarwal, Rishabh and Su, Yi and Co-Reyes, John D and Singh, Avi and Baumli, Kate and Iqbal, Shariq and Bishop, Colton and Roelofs, Rebecca and others},
  booktitle={The Thirteenth International Conference on Learning Representations},
  year={2025}
}

@article{allen2021learning,
  title={Learning markov state abstractions for deep reinforcement learning},
  author={Allen, Cameron and Parikh, Neev and Gottesman, Omer and Konidaris, George},
  journal={Advances in Neural Information Processing Systems},
  year={2021}
}

@article{lu2025scaling,
  title={Scaling llm multi-turn rl with end-to-end summarization-based context management},
  author={Lu, Miao and Sun, Weiwei and Du, Weihua and Ling, Zhan and Yao, Xuesong and Liu, Kang and Chen, Jiecao},
  journal={arXiv preprint arXiv:2510.06727},
  year={2025}
}

@inproceedings{chen2026toward,
    title={Toward Effective Tool-Integrated Reasoning via Self-Evolved Preference Learning},
    author={Yifei Chen and Guanting Dong and Zhicheng Dou},
    booktitle={The Fourteenth International Conference on Learning Representations},
    year={2026},
}

@article{JMLR:v25:23-0488,
  author  = {Yifan Zhong and Jakub Grudzien Kuba and Xidong Feng and Siyi Hu and Jiaming Ji and Yaodong Yang},
  title   = {Heterogeneous-Agent Reinforcement Learning},
  journal = {Journal of Machine Learning Research},
  year    = {2024},
}

@article{rafailov2023direct,
  title={Direct preference optimization: Your language model is secretly a reward model},
  author={Rafailov, Rafael and Sharma, Archit and Mitchell, Eric and Manning, Christopher D and Ermon, Stefano and Finn, Chelsea},
  journal={Advances in Neural Information Processing Systems},
  year={2023}
}

@ARTICLE{6145622,
  author={Browne, Cameron B. and Powley, Edward and Whitehouse, Daniel and Lucas, Simon M. and Cowling, Peter I. and Rohlfshagen, Philipp and Tavener, Stephen and Perez, Diego and Samothrakis, Spyridon and Colton, Simon},
  journal={IEEE Transactions on Computational Intelligence and AI in Games}, 
  title={A Survey of Monte Carlo Tree Search Methods}, 
  year={2012},
}

@article{yang2025treerpo,
  title={Treerpo: Tree relative policy optimization},
  author={Yang, Zhicheng and Guo, Zhijiang and Huang, Yinya and Liang, Xiaodan and Wang, Yiwei and Tang, Jing},
  journal={arXiv preprint arXiv:2506.05183},
  year={2025}
}

@article{cao2026treeadv,
  title={TreeAdv: Tree-Structured Advantage Redistribution for Group-Based RL},
  author={Cao, Lang and Ruan, Hui and Li, Yongqian and Chao, Peng and Ning, Wu and Song, Haonan and Chen, Renhong and Li, Yitong},
  journal={arXiv preprint arXiv:2601.03703},
  year={2026}
}

@article{wang2025ragen,
  title={Ragen: Understanding self-evolution in llm agents via multi-turn reinforcement learning},
  author={Wang, Zihan and Wang, Kangrui and Wang, Qineng and Zhang, Pingyue and Li, Linjie and Yang, Zhengyuan and Jin, Xing and Yu, Kefan and Nguyen, Minh Nhat and Liu, Licheng and others},
  journal={arXiv preprint arXiv:2504.20073},
  year={2025}
}

@inproceedings{
cao2026atpo,
title={ATPO: Adaptive tree policy optimization for Multi-turn medical dialogue},
author={Ruike Cao and Shaojie Bai and Fugen Yao and Liang Dong and Jian Xu and Li Xiao},
booktitle={The Fourteenth International Conference on Learning Representations},
year={2026}
}

@article{watkins1992q,
  title={Q-learning},
  author={Watkins, Christopher JCH and Dayan, Peter},
  journal={Machine learning},
  year={1992},
}

@article{tsitsiklis1994asynchronous,
  title={Asynchronous stochastic approximation and Q-learning},
  author={Tsitsiklis, John N},
  journal={Machine learning},
  year={1994},
}

@article{10.1023/A:1007678930559,
author = {Singh, Satinder and Jaakkola, Tommi and Littman, Michael L. and Szepesv\'{a}ri, Csaba},
title = {Convergence Results for Single-Step On-PolicyReinforcement-Learning Algorithms},
year = {2000},
journal = {Machine Learning}

}

@incollection{bertsekas2025neuro,
  title={Neuro-dynamic programming},
  author={Bertsekas, Dimitri P},
  booktitle={Encyclopedia of optimization},
  year={2025},
}

@ARTICLE{6796861,
  author={Jaakkola, Tommi and Jordan, Michael I. and Singh, Satinder P.},
  journal={Neural Computation}, 
  title={On the Convergence of Stochastic Iterative Dynamic Programming Algorithms}, 
  year={1994},
}

@inproceedings{10.5555/1777826.1777833,
author = {Coulom, R\'{e}mi},
title = {Efficient selectivity and backup operators in Monte-Carlo tree search},
year = {2006},
booktitle = {Proceedings of the 5th International Conference on Computers and Games}
}

@article{article,
author = {Silver, David and Huang, Aja and Maddison, Christopher and Guez, Arthur and Sifre, Laurent and Driessche, George and Schrittwieser, Julian and Antonoglou, Ioannis and Panneershelvam, Veda and Lanctot, Marc and Dieleman, Sander and Grewe, Dominik and Nham, John and Kalchbrenner, Nal and Sutskever, Ilya and Lillicrap, Timothy and Leach, Madeleine and Kavukcuoglu, Koray and Graepel, Thore and Hassabis, Demis},
year = {2016},
title = {Mastering the game of Go with deep neural networks and tree search},
journal = {Nature},
}

@inproceedings{tian2026seear,
title={{SEEA}-R1: Tree-Structured Reinforcement Fine-Tuning for Self-Evolving Embodied Agents},
author={Wanxin Tian and Shijie Zhang and Kevin Zhang and Xiaowei Chi and Chun-Kai Fan and Junyu Lu and Yulin Luo and Qiang Zhou and Yiming Zhao and Ning Liu and Siyu Lin and Zhiyuan Qin and Xiaozhu Ju and Shanghang Zhang and Jian Tang},
booktitle={The Thirty-ninth Annual Conference on Neural Information Processing Systems},
year={2026},
}

@article{robbins1951stochastic,
  title={A stochastic approximation method},
  author={Robbins, Herbert and Monro, Sutton},
  journal={The Annals of Mathematical Statistics},
  year={1951},
}

@article{wu2025resum,
  title={Resum: Unlocking long-horizon search intelligence via context summarization},
  author={Wu, Xixi and Li, Kuan and Zhao, Yida and Zhang, Liwen and Ou, Litu and Yin, Huifeng and Zhang, Zhongwang and Yu, Xinmiao and Zhang, Dingchu and Jiang, Yong and others},
  journal={arXiv preprint arXiv:2509.13313},
  year={2025}
}

@article{lindenbauer2025complexity,
  title={The Complexity Trap: Simple Observation Masking Is as Efficient as LLM Summarization for Agent Context Management},
  author={Lindenbauer, Tobias and Slinko, Igor and Felder, Ludwig and Bogomolov, Egor and Zharov, Yaroslav},
  journal={arXiv preprint arXiv:2508.21433},
  year={2025}
}

@article{wang2025openhands,
  title={The openhands software agent sdk: A composable and extensible foundation for production agents},
  author={Wang, Xingyao and Rosenberg, Simon and Michelini, Juan and Smith, Calvin and Tran, Hoang and Nyst, Engel and Malhotra, Rohit and Zhou, Xuhui and Chen, Valerie and Brennan, Robert and others},
  journal={arXiv preprint arXiv:2511.03690},
  year={2025}
}

@inproceedings{chang2026thor,
    title={{THOR}: Tool-Integrated Hierarchical Optimization via {RL} for Mathematical Reasoning},
    author={Qikai Chang and Zhenrong Zhang and Pengfei Hu and Jun Du and Jiefeng Ma and Yicheng Pan and Jianshu Zhang and Quan Liu and Jianqing Gao},
    booktitle={The Fourteenth International Conference on Learning Representations},
    year={2026},
}

@inproceedings{
liu2026roga,
title={{ROGA}: Scaling Generalist Agents for Office Productivity Tasks via Tool Generation},
author={Mugeng Liu and Xiaojun Ma and Yuhang Xie and Qin Chen and Xuanzhe Liu and Yun Ma},
booktitle={The Fourteenth International Conference on Learning Representations},
year={2026},
}

@inproceedings{ji2026tree,
title={Tree Search for {LLM} Agent Reinforcement Learning},
author={Yuxiang Ji and Ziyu Ma and Yong Wang and Guanhua Chen and Xiangxiang Chu and Liaoni Wu},
booktitle={The Fourteenth International Conference on Learning Representations},
year={2026},
}

@inproceedings{
dong2026agentic,
title={Agentic Reinforced Policy Optimization},
author={Guanting Dong and Hangyu Mao and Kai Ma and Licheng Bao and Yifei Chen and Zhongyuan Wang and Zhongxia Chen and Jiazhen Du and Huiyang Wang and Fuzheng Zhang and Guorui Zhou and Yutao Zhu and Ji-Rong Wen and Zhicheng Dou},
booktitle={The Fourteenth International Conference on Learning Representations},
year={2026},
}

@inproceedings{
yu2026memagent,
title={MemAgent: Reshaping Long-Context {LLM} with Multi-Conv {RL}-based Memory Agent},
author={Hongli Yu and Tinghong Chen and Jiangtao Feng and Jiangjie Chen and Weinan Dai and Qiying Yu and Ya-Qin Zhang and Wei-Ying Ma and Jingjing Liu and Mingxuan Wang and Hao Zhou},
booktitle={The Fourteenth International Conference on Learning Representations},
year={2026}
}

@article{
shojaee2023executionbased,
title={Execution-based Code Generation using Deep Reinforcement Learning},
author={Parshin Shojaee and Aneesh Jain and Sindhu Tipirneni and Chandan K. Reddy},
journal={Transactions on Machine Learning Research},
year={2023},
}

@inproceedings{
li2026ssvpo,
title={{SSVPO}: Effective Step-Level Credit Assignment for {RL} Training of Language Models},
author={Yugu Li and Zehong Cao and Jianglin Qiao and Siyi Hu},
booktitle={The Fourteenth International Conference on Learning Representations},
year={2026},
}

@article{shinn2023reflexion,
  title={Reflexion: Language agents with verbal reinforcement learning},
  author={Shinn, Noah and Cassano, Federico and Gopinath, Ashwin and Narasimhan, Karthik and Yao, Shunyu},
  journal={Advances in Neural Information Processing Systems},
  year={2023}
}

@article{le2022coderl,
  title={Coderl: Mastering code generation through pretrained models and deep reinforcement learning},
  author={Le, Hung and Wang, Yue and Gotmare, Akhilesh Deepak and Savarese, Silvio and Hoi, Steven Chu Hong},
  journal={Advances in Neural Information Processing Systems},
  year={2022}
}

@InProceedings{pmlr-v108-abel20a,
  title = 	 {Value Preserving State-Action Abstractions},
  author =       {Abel, David and Umbanhowar, Nate and Khetarpal, Khimya and Arumugam, Dilip and Precup, Doina and Littman, Michael},
  booktitle = 	 {Proceedings of the Twenty Third International Conference on Artificial Intelligence and Statistics},
  year = 	 {2020},
}

@article{yue2025vapo,
  title={Vapo: Efficient and reliable reinforcement learning for advanced reasoning tasks},
  author={Yue, Yu and Yuan, Yufeng and Yu, Qiying and Zuo, Xiaochen and Zhu, Ruofei and Xu, Wenyuan and Chen, Jiaze and Wang, Chengyi and Fan, TianTian and Du, Zhengyin and others},
  journal={arXiv preprint arXiv:2504.05118},
  year={2025}
}

@article{dietterich2000hierarchical,
  title={Hierarchical reinforcement learning with the MAXQ value function decomposition},
  author={Dietterich, Thomas G},
  journal={Journal of artificial intelligence research},
  year={2000}
}

@inproceedings{yu2026dapo,
    title={{DAPO}: An Open-Source {LLM} Reinforcement Learning System at Scale},
    author={Qiying Yu and Zheng Zhang and Ruofei Zhu and Yufeng Yuan and Xiaochen Zuo and YuYue and Weinan Dai and Tiantian Fan and Gaohong Liu and Juncai Liu and LingJun Liu and Xin Liu and Haibin Lin and Zhiqi Lin and Bole Ma and Guangming Sheng and Yuxuan Tong and Chi Zhang and Mofan Zhang and Ru Zhang and Wang Zhang and Hang Zhu and Jinhua Zhu and Jiaze Chen and Jiangjie Chen and Chengyi Wang and Hongli Yu and Yuxuan Song and Xiangpeng Wei and Hao Zhou and Jingjing Liu and Wei-Ying Ma and Ya-Qin Zhang and Lin Yan and Yonghui Wu and Mingxuan Wang},
    booktitle={The Thirty-ninth Annual Conference on Neural Information Processing Systems},
    year={2026},
}

@inproceedings{10.5555/2074094.2074120,
author = {Hauskrecht, Milos and Meuleau, Nicolas and Kaelbling, Leslie Pack and Dean, Thomas and Boutilier, Craig},
title = {Hierarchical solution of Markov decision processes using macro-actions},
year = {1998},
booktitle = {Proceedings of the Fourteenth Conference on Uncertainty in Artificial Intelligence},
}

@article{sheng2024hybridflow,
  title   = {HybridFlow: A Flexible and Efficient RLHF Framework},
  author  = {Guangming Sheng and Chi Zhang and Zilingfeng Ye and Xibin Wu and Wang Zhang and Ru Zhang and Yanghua Peng and Haibin Lin and Chuan Wu},
  year    = {2024},
  journal = {arXiv preprint arXiv: 2409.19256}
}

@inproceedings{kwon2023efficient,
  title={Efficient Memory Management for Large Language Model Serving with PagedAttention},
  author={Woosuk Kwon and Zhuohan Li and Siyuan Zhuang and Ying Sheng and Lianmin Zheng and Cody Hao Yu and Joseph E. Gonzalez and Hao Zhang and Ion Stoica},
  booktitle={Proceedings of the ACM SIGOPS 29th Symposium on Operating Systems Principles},
  year={2023}
}

@article{zhao2023pytorch,
  title={Pytorch fsdp: experiences on scaling fully sharded data parallel},
  author={Zhao, Yanli and Gu, Andrew and Varma, Rohan and Luo, Liang and Huang, Chien-Chin and Xu, Min and Wright, Less and Shojanazeri, Hamid and Ott, Myle and Shleifer, Sam and others},
  journal={arXiv preprint arXiv:2304.11277},
  year={2023}
}

@article{cobbe2021gsm8k,
  title={Training Verifiers to Solve Math Word Problems},
  author={Cobbe, Karl and Kosaraju, Vineet and Bavarian, Mohammad and Chen, Mark and Jun, Heewoo and Kaiser, Lukasz and Plappert, Matthias and Tworek, Jerry and Hilton, Jacob and Nakano, Reiichiro and Hesse, Christopher and Schulman, John},
  journal={arXiv preprint arXiv:2110.14168},
  year={2021}
}

@article{hendrycksmath2021,
  title={Measuring Mathematical Problem Solving With the MATH Dataset},
  author={Dan Hendrycks and Collin Burns and Saurav Kadavath and Akul Arora and Steven Basart and Eric Tang and Dawn Song and Jacob Steinhardt},
  journal={Thirty-Fifth Annual Conference on Neural Information Processing Systems},
  year={2021}
}

@misc{maa_aime_2024,
  title      = {American Invitational Mathematics Examination ({AIME})},
  author     = {{Mathematical Association of America}},
  publisher  = {Mathematical Association of America},
  year       = {2024},
}

@misc{maa_aime_2025, title = {American Invitational Mathematics Examination ({AIME})}, author = {{Mathematical Association of America}}, publisher = {Mathematical Association of America}, year = {2025} }

@misc{maa_amc_2023, title = {American Mathematics Competitions ({AMC})}, author = {{Mathematical Association of America}}, publisher = {Mathematical Association of America}, year = {2023}}

@article{he2024olympiadbench,
  title={Olympiadbench: A challenging benchmark for promoting agi with olympiad-level bilingual multimodal scientific problems},
  author={He, Chaoqun and Luo, Renjie and Bai, Yuzhuo and Hu, Shengding and Thai, Zhen Leng and Shen, Junhao and Hu, Jinyi and Han, Xu and Huang, Yujie and Zhang, Yuxiang and others},
  journal={arXiv preprint arXiv:2402.14008},
  year={2024}
}

@inproceedings{lightman2023let,
  title={Let's verify step by step},
  author={Lightman, Hunter and Kosaraju, Vineet and Burda, Yuri and Edwards, Harrison and Baker, Bowen and Lee, Teddy and Leike, Jan and Schulman, John and Sutskever, Ilya and Cobbe, Karl},
  booktitle={The Twelfth International Conference on Learning Representations},
  year={2023}
}

@article{sun2025simpledeepsearcher,
  title={Simpledeepsearcher: Deep information seeking via web-powered reasoning trajectory synthesis},
  author={Sun, Shuang and Song, Huatong and Wang, Yuhao and Ren, Ruiyang and Jiang, Jinhao and Zhang, Junjie and Bai, Fei and Deng, Jia and Zhao, Wayne Xin and Liu, Zheng and others},
  journal={arXiv preprint arXiv:2505.16834},
  year={2025}
}

@article{li2025websailor,
  title={Websailor: Navigating super-human reasoning for web agent},
  author={Li, Kuan and Zhang, Zhongwang and Yin, Huifeng and Zhang, Liwen and Ou, Litu and Wu, Jialong and Yin, Wenbiao and Li, Baixuan and Tao, Zhengwei and Wang, Xinyu and others},
  journal={arXiv preprint arXiv:2507.02592},
  year={2025}
}

@article{chen2025xbench,
  title={xbench: Tracking agents productivity scaling with profession-aligned real-world evaluations},
  author={Chen, Kaiyuan and Ren, Yixin and Liu, Yang and Hu, Xiaobo and Tian, Haotong and Xie, Tianbao and Liu, Fangfu and Zhang, Haoye and Liu, Hongzhang and Gong, Yuan and others},
  journal={arXiv preprint arXiv:2506.13651},
  year={2025}
}

@article{phan2025humanity,
  title={Humanity's last exam},
  author={Phan, Long and Gatti, Alice and Han, Ziwen and Li, Nathaniel and Hu, Josephina and Zhang, Hugh and Zhang, Chen Bo Calvin and Shaaban, Mohamed and Ling, John and Shi, Sean and others},
  journal={arXiv preprint arXiv:2501.14249},
  year={2025}
}

@inproceedings{wu2025webwalker,
  title={Webwalker: Benchmarking llms in web traversal},
  author={Wu, Jialong and Yin, Wenbiao and Jiang, Yong and Wang, Zhenglin and Xi, Zekun and Fang, Runnan and Zhang, Linhai and He, Yulan and Zhou, Deyu and Xie, Pengjun and others},
  booktitle={Proceedings of the 63rd Annual Meeting of the Association for Computational Linguistics},  
  year={2025}
}

@article{mialon2023gaia,
  title={Gaia: a benchmark for general ai assistants},
  author={Mialon, Gr{\'e}goire and Fourrier, Cl{\'e}mentine and Swift, Craig and Wolf, Thomas and LeCun, Yann and Scialom, Thomas},
  journal={arXiv preprint arXiv:2311.12983},
  year={2023}
}

@inproceedings{xanh2020_2wikimultihop,
    title = "Constructing A Multi-hop {QA} Dataset for Comprehensive Evaluation of Reasoning Steps",
    author = "Ho, Xanh  and
      Duong Nguyen, Anh-Khoa  and
      Sugawara, Saku  and
      Aizawa, Akiko",
    booktitle = "Proceedings of the 28th International Conference on Computational Linguistics",
    year = "2020",
}

@inproceedings{yang2018hotpotqa,
  title={HotpotQA: A dataset for diverse, explainable multi-hop question answering},
  author={Yang, Zhilin and Qi, Peng and Zhang, Saizheng and Bengio, Yoshua and Cohen, William and Salakhutdinov, Ruslan and Manning, Christopher D},
  booktitle={Proceedings of the 2018 Conference on Empirical Methods in Natural Language Processing},
  year={2018}
}

@article{loshchilov2017decoupled,
  title={Decoupled weight decay regularization},
  author={Loshchilov, Ilya and Hutter, Frank},
  journal={arXiv preprint arXiv:1711.05101},
  year={2017}
}

@article{hu2026seeupo,
  title={SeeUPO: Sequence-Level Agentic-RL with Convergence Guarantees},
  author={Hu, Tianyi and Fu, Qingxu and Chen, Yanxi and Liu, Zhaoyang and Ding, Bolin},
  journal={arXiv preprint arXiv:2602.06554},
  year={2026}
}

@article{shao2024deepseekmath,
  title={Deepseekmath: Pushing the limits of mathematical reasoning in open language models},
  author={Shao, Zhihong and Wang, Peiyi and Zhu, Qihao and Xu, Runxin and Song, Junxiao and Bi, Xiao and Zhang, Haowei and Zhang, Mingchuan and Li, YK and Wu, Yang and others},
  journal={arXiv preprint arXiv:2402.03300},
  year={2024}
}

@article{zheng2025group,
  title={Group sequence policy optimization},
  author={Zheng, Chujie and Liu, Shixuan and Li, Mingze and Chen, Xiong-Hui and Yu, Bowen and Gao, Chang and Dang, Kai and Liu, Yuqiong and Men, Rui and Yang, An and others},
  journal={arXiv preprint arXiv:2507.18071},
  year={2025}
}

\newpage

\appendix
\section{Related Work}
\label{related work}

\paragraph{Policy optimization in agentic RL.}
Recent post-training of large language models (LLMs) has shifted from supervised fine-tuning toward reinforcement learning with verifiable rewards (RLVR). Among existing approaches, GRPO \citep{shao2024deepseekmath}, built upon PPO, reduces variance through token-level importance ratios and group-relative advantage estimation, and has become a representative algorithm for agentic RL. Its sequence-level refinement, GSPO \citep{zheng2025group}, further defines importance ratios and clipping operations at the sequence level, leading to more stable training dynamics on models such as Qwen3. However, when these RL methods are directly transferred from single-turn to multi-turn agentic settings, a structural mismatch emerges between rollout and training. Existing methods often treat the entire multi-turn interaction as a single concatenated trajectory and distribute a single scalar reward uniformly across all tokens, thereby ignoring the actual contribution of each turn. More importantly, the rollout may operate on truncated or summarized contexts, whereas the training recomputes importance ratios over the full interaction history. This discrepancy induces a mismatch in conditioning distributions and weakens the assumptions under which PPO-style optimization is expected to remain stable. In long-horizon multi-turn scenarios, these issues can manifest as training divergence or eventual policy collapse.

A noteworthy latest work is SeeUPO \citep{hu2026seeupo}, which first models multi-turn interaction as a sequentially executed multi-agent bandit problem. Under the heterogeneous-agent RL framework \citep{JMLR:v25:23-0488}, SeeUPO updates turn-level virtual agents in reverse execution order $(T \to T-1 \to \cdots \to 1)$, thereby inheriting the monotonic-improvement property and proving convergence to the globally optimal policy. This provides strong motivation for our proposed RTPO design, particularly its reverse-order turn updates. However, each turn-level agent in SeeUPO is still trained on the full chat history rather than on the turn-level conditioning context $c_k$ actually observed by the model during rollout; therefore, the policy-forward mismatch is not eliminated. Moreover, within each turn, SeeUPO corrects downstream advantages using token-level importance-sampling reweighting. As the reverse training recursion proceeds and the number of involved turns accumulates, the variance of the importance-sampling ratio can grow multiplicatively. Although clipping bounds this ratio from above, it also introduces systematic bias into the per-turn advantage estimate.

\paragraph{Tree-based credit assignment.}
To mitigate the sparse-credit problem caused by flattened trajectories in multi-turn agentic RL training, a recent line of work reorganizes rollouts into tree structures with shared prefixes, thereby constructing finer-grained credit signals. ARPO \citep{dong2026agentic} adaptively tree branches at high-entropy nodes following tool calls and, through advantage attribution estimation, applies branch averaging to shared-prefix tokens while estimating advantages independently for tokens on disjoint branches. Afterward, TreeGRPO \citep{ji2026tree} abstracts each turn as a tree node and combines intra-tree and inter-tree group-relative advantages, theoretically establishing gradient-level equivalence with step-level DPO \citep{rafailov2023direct}. This tree-based formulation is extended in SEEA-R1 \citep{tian2026seear}, which integrates MCTS \citep{10.5555/1777826.1777833,6145622,article} into embodied-agent settings and trains a multimodal generative reward model to densify sparse outcome rewards. Tree-structured credit assignment has also been explored in step/token levels. TreeRPO \citep{yang2025treerpo} adopts an $N$-ary tree for mathematical reasoning and constructs step-level rewards through bottom-up Bellman expectations. TreeAdv \citep{cao2026treeadv} employs entropy-triggered branching and redistributes leaf advantages to tokens using inverse-descendant-count weighting. Similarly, ATPO \citep{cao2026atpo} operates under a hierarchical MDP and uses Bellman error and $Q$-value variance as uncertainty measures for adaptive expansion, while applying visit-count-based down-weighting to suppress update imbalance caused by repeated nodes.

Although these tree-based rollouts improve the granularity of credit assignment in multi-turn training, they do not fully eliminate the propagation of trajectory-level credit mismatch to individual turns through the reconstructed tree. For instance, in Tree-GRPO and SEEA-R1, advantages are still derived from leaf returns, typically in the form $A_i = \frac{R_i - \mathrm{mean}(R)}{\mathrm{std}(R)}$. As a result, prefix nodes that appear on multiple paths inherit trajectory-level signals from all descendant outcomes, causing the gradient direction at shared tokens to be perturbed by trajectory-level scalars originating from different rollouts. In TreeAdv, the $1/|S|$ normalization attenuates the signal more aggressively near the root, precisely suppressing early decisions where informative gradients are often most needed. In addition, ARPO averages multi-branch advantages on shared tokens, which can dilute the signal linearly with the number of branches, while ATPO relies on a learned critic and is therefore exposed to critic-induced bias. Therefore, under the rollout–training contextual mismatch settings, shared-prefix tokens may still carry trajectory-level credit signals with significant residual bias.

\paragraph{Policy gradient correction.}
Another line of work studies stable training from the perspective of correcting (off-)policy drift through interventions. SimpleTIR \citep{xuesimpletir}, by decomposing the policy gradient on softmax logits, attributes gradient explosion in multi-turn tool-integrated reasoning (TIR) to the accumulation of low-probability tokens under the distributional shift induced by tool feedback. It proposes filtering out entire trajectories that contain \emph{void turns}: turns produce neither a complete code block nor a final answer to block harmful gradients. This method is empirically effective and plug-and-play; however, the void-turn criterion is tightly coupled with the code-execution setting of mathematical reasoning and is difficult to transfer to other scenarios, such as web search. It also inevitably discards a non-trivial fraction of training data. RAGEN \citep{wang2025ragen} discovers the inconsistency between rollout and training engines for mismatch correction through numerical mechanisms such as truncated importance sampling. However, these approaches mainly address numerical discrepancies while overlooking the contextual mismatch between rollout conditioning and training recomputation. In contrast, our proposed RTPO does not rely on trajectory dropping or post-hoc numerical correction. Instead, it restructures the training paradigm from a turn-boundary MDP perspective, maintaining the on-policy optimization throughout the entire training process.

\section{Full Theoretical Analysis: Training Instability}
\label{full-theory}
Here, we provide the full theoretical analysis of training instability in multi-turn agentic RL:
\textbf{Single flattened-trajectory policy optimization (B.1):} Standard rollouts are typically generated under truncated or summarized contexts for efficiency, whereas training re-evaluates tokens under concatenated full-history contexts without truncation. This rollout--training context mismatch induces biased importance-sampling (IS) ratios, thereby undermining stable policy optimization. 
\textbf{Trajectory-only credit assignment (B.2):} In multi-turn interactions, a given state may admit multiple valid actions, requiring accurate credit assignment for each action. However, credit computed from a single trajectory-level advantage can obscure the contribution of individual turns and therefore cannot provide a proper comparison among alternative actions. 
\textbf{Long-horizon off-policy drift (B.3):} In long-horizon tasks, standard policy optimization with per-token clipping is insufficient to correct policy drift under asynchronous training. Since later states depend on earlier generated actions and tool feedback, once the policy drifts, the later-turn states visited during rollout may no longer match those induced by the current policy.

\paragraph{Preliminaries.}
A multi-turn interaction with TIR is represented as a standard trajectory $(q,l_0,f_0,\ldots,l_{n-1},f_{n-1})$ spanning $n$ turns, where $q$ is the initial prompt, $l_k$ is the response generated, and $f_k$ is the corresponding external tool feedback returned by the environment for each turn $k\in\{0,\dots,n-1\}$. We formulate this process as a \textit{Hierarchical Markov Decision Process (H-MDP)}, which captures turn-level (high-level) planning and token-level (low-level) execution in multi-turn agentic RL. At the turn level, for each turn $k$, the agent state $S_k$ represents all interaction history $\left(q, l_0, f_0, \ldots, l_{k-1}, f_{k-1}\right)$, available before the current turn, including previous responses, tool calls, and corresponding environment feedback. The corresponding turn-level action is the decision of what response to produce at the current turn, denoted by $l_k \in \mathcal{A}_H$. After executing this action and receiving environmental feedback $f_k$, the turn-level state evolves as $S_{k+1}=S_k \circ (l_k,f_k)$, where $\circ$ denotes concatenation. 

Each turn-level action (response) $l_k$ is generated autoregressively at the token level as $l_k=(a_{k,1}, a_{k,2},\ldots, a_{k, T_k})$, where $a_{k,t}\in\mathcal{A}_L$ denotes the token generated at step $t$, and $T_k$ is the number of tokens in the response at turn $k$. The corresponding low-level state is $s_{k,t}=(S_k, a_{k,1},\ldots, a_{k,t-1})\in\mathcal{S}_L$, which consists of the turn history $S_k$ together with the token prefix generated so far in the current turn. Since the token only serves to generate the turn-level action (response) and does not itself receive intermediate reward, we set the low-level reward to $R_L=0$ and set $\gamma_H=\gamma_L=1$ as the discount factor. 

Our hierarchical formulation differs from existing policy optimization methods, which do not explicitly distinguish turn-level planning from token-level execution. Instead, they flatten the entire multi-turn interaction into a single token sequence, i.e., a single trajectory, and optimize it using clipped policy optimization methods in the PPO family. Let $q\sim\mathcal{D}$ be an input prompt sampled from the task distribution, and let $\{g_i\}_{i=1}^{G}$ be a group of $G$ trajectories of this form sampled from the old policy $\pi_{\theta_{\mathrm{old}}}$ conditioned on $q$, where $\theta$ denotes the current policy parameters and $\theta_{\mathrm{old}}$ denotes the rollout policy parameters. For trajectory $g_i$, we denote its turn-$k$ state and response by $S_{i,k}$ and $l_{i,k}$, respectively:

\begin{equation}\label{eq:flat} J^{\text{flat}}(\theta) = \mathbb{E}_{\substack{q \sim \mathcal{D} \\ \{g_i\}_{i=1}^G \sim \pi_{\theta_{\mathrm{old}}}(\cdot \mid q)}} \left[ \frac{1}{G}\sum_{i=1}^{G} \frac{1}{\sum_{t'} m_{i,t'}} \sum_t m_{i,t} L_{i,t}^{\text{CLIP}}(\theta) \right] \end{equation}
where $i$ indexes the sampled trajectory, $t$ indexes flattened token positions, and $L_{i,t}^{\text{CLIP}}(\theta) = \min\!\left(\rho_{i,t}(\theta) A_i,\, \operatorname{clip}\!\left(\rho_{i,t}(\theta),\,1-\epsilon,\,1+\epsilon\right) A_i\right)$ is the standard clipped surrogate objective with clipping threshold $\epsilon$. $a_{i,t}$ is the token generated at flattened position $t$ in the $i$-th sampled trajectory, $m_{i,t}$ is a binary mask indicating whether that token contributes to the policy gradient, and $\rho_{i,t}(\theta)=\frac{\pi_\theta(a_{i,t}\mid x_{i,<t})}{\pi_{\theta_{\mathrm{old}}}(a_{i,t}\mid x_{i,<t})}$ is the IS ratio. The context $x_{i,<t}$ is the flattened interaction context preceding token $a_{i,t}$, including previously generated response tokens and inserted environment feedback. The scalar $A_i$ denotes the trajectory-level group-relative advantage, which is uniformly assigned to all unmasked tokens in trajectory $g_i$.

\subsection{Single flattened-trajectory policy optimization: mismatch from rollout to training}
\label{sec:append_truncation}
In multi-turn interactions, the accumulated history can become too long for the model to process in full. For example, an agent may search for information, call a tool, revise its plan based on the returned result, and repeat this process over many turns. By later turns, the prompt, previous responses, and tool feedback may already span tens of thousands of tokens. In practice, rollouts therefore often rely on a truncated or summarized context rather than the complete interaction history \citep{wang2025openhands,lindenbauer2025complexity,wu2025resum}. Therefore, existing methods typically reconstruct the whole interaction as a single concatenated sequence, or flattened trajectory, during training. This flattened training formulation evaluates each generated token under the concatenated prefix, rather than under the original context that was actually used during rollout. Consequently, tokens generated at turn $k$ are optimized under a conditioning context that can differ from the rollout context that produced them. We refer to this discrepancy as a rollout-to-training mismatch induced by flattened-trajectory policy optimization. We next analyze how this mismatch distorts the likelihood-ratio estimation underlying clipped policy optimization in the PPO/GRPO family.

\paragraph{Policy optimization mismatch across turns.}
Under flat training, the optimization mismatch across turns arises because the same generated token is conditioned on different contexts during rollout and training. We now formalize this mismatch and show that it induces a biased IS ratio. Let $\bar{x}_k$ denote the full interaction history before turn $k$, and let $\phi:\bar{\mathcal{X}}\rightarrow\mathcal{Z}$ be an observation map that truncates or summarizes history beyond the model's effective context length. During rollout, token-level actions are sampled conditioned on $\phi(\bar{x}_k)$, so the true sampling distribution is $\pi_{\theta_{\mathrm{old}}}(a_{k,t}\mid \phi(\bar{x}_k))$. In clipped policy optimization of the PPO/GRPO family, the denominator of the IS ratio must match this rollout distribution. However, under flat training, the same token is re-evaluated under the concatenated full-history context, yielding $\pi_{\theta_{\mathrm{old}}}(a_{k,t}\mid \bar{x}_k)$. This leads to the mismatch
\begin{equation}
\label{eq:is-mismatch}
\rho_{k,t}^{\mathrm{flat}}
=
\frac{\pi_\theta(a_{k,t}\mid \bar{x}_k)}
     {\pi_{\theta_{\mathrm{old}}}(a_{k,t}\mid \phi(\bar{x}_k))}
\neq
\frac{\pi_\theta(a_{k,t}\mid \phi(\bar{x}_k))}
     {\pi_{\theta_{\mathrm{old}}}(a_{k,t}\mid \phi(\bar{x}_k))}
=
\rho_{k,t}^{\mathrm{true}}.
\end{equation}
The denominator used in flat training, $\pi_{\theta_{\mathrm{old}}}(a_{k,t}\mid \bar{x}_k)$, does not match with the true rollout sampling probability, $\pi_{\theta_{\mathrm{old}}}(a_{k,t}\mid \phi(\bar{x}_k))$. Consequently, $\rho_{k,t}^{\mathrm{flat}}$ is a biased estimate of the correct IS ratio. Since clipped policy-gradient updates in the PPO/GRPO family depend on the IS ratio, the bias propagates into the gradient estimate and can distort the update direction, leading to instability in multi-turn agentic RL training. This mismatch becomes more severe for tokens generated in later turns, since the amount of history omitted during rollout typically grows with $k$, while flat training continues to re-evaluate these tokens under the concatenated training prefix. In the worst case, the resulting discrepancy in token probability can become extremely large.

\paragraph{State aliasing and projected suboptimality.}
Beyond gradient bias, truncated contexts also introduce \textit{state aliasing}, where distinct full-history states are mapped to the same truncated representation. When $\phi$ is non-injective, different interaction histories may collapse into an identical observation $z_k=\phi(\bar{x}_k)$ \citep{10.5555/1777826.1777833}. The induced process over $z_k$ may not preserve the Markov property of the original full-history process \citep{allen2021learning}. As a result, any policy conditioned only on $z_k$ is confined to the observation-induced policy class $\Pi\phi$. Even if optimized exactly within this restricted class, such a policy can achieve only the projected optimum $V^*_{\Pi_\phi}$ \citep{pmlr-v108-abel20a}, which can be strictly lower than the true optimum $V^*$. Therefore, projected suboptimality can arise, with $V^*_{\Pi_\phi} < V^*$. Under observation inconsistency, flattened-trajectory training may be limited to the projected optimum $V^*_{\Pi_\phi}$, which is strictly suboptimal relative to the full-history optimum $V^*$.

\subsection{Trajectory-only credit assignment: mismatch across low- and high-quality turns}
\label{sec:append_credit}

In multi-turn interactions, trajectories generated from the same query can reach substantially different states by turn $k$. As a result, a trajectory-level return no longer provides a reliable credit signal for evaluating actions taken at that turn. Under flattened-trajectory training, credit is assigned only at the trajectory level and then shared across tokens within a single trajectory, which mismatches the turn-level structure of the decision process. Here, we analyze how a trajectory advantage entangles the contribution of the current turn with both upstream state effect and downstream stochasticity.

\paragraph{A trajectory advantage entangles turn-level credit with context effects.}
A single trajectory advantage is typically defined as $A_i = R_i - \bar{R}$, where $R_i$ denotes the final return of trajectory $g_i$ and $\bar{R}$ is the average return over the sampled group. This trajectory advantage is then assigned to all turns in the trajectory, without identifying each turn's individual contributions. To expose the turn-level credit hidden in this trajectory return, we analyze its population counterpart, $R_i-\mathbb{E}[R]$, by introducing the conditional expectations $\mathbb{E}[R \mid S_{i,k}, l_{i,k}]$ and $\mathbb{E}[R \mid S_{i,k}]$:
\begin{equation}
\label{eq:traj-decompose}
R_i - \mathbb{E}[R]
=
\underbrace{R_i - Q^\pi(S_{i,k}, l_{i,k})}_{\xi_{\text{down}}:\; \text{downstream stochasticity}}
+
\underbrace{A^\pi(S_{i,k}, l_{i,k})}_{\text{true turn-}k\text{ advantage}}
+
\underbrace{V^\pi(S_{i,k}) - \mu_R}_{\xi_{\text{up}}:\; \text{upstream state effect}},
\end{equation}
where $Q^\pi(S_{i,k}, l_{i,k})=\mathbb{E}[R \mid S_{i,k}, l_{i,k}]$, $V^\pi(S_{i,k})=\mathbb{E}[R \mid S_{i,k}]$, and $\mu_R=\mathbb{E}[R]$. The true turn-$k$ advantage $A^\pi(S_{i,k}, l_{i,k})$ is therefore only one component of the trajectory advantage. The term $\xi_{\text{down}}$ captures downstream stochasticity after turn $k$, while $\xi_{\text{up}}$ captures variation induced by the upstream state reached before turn $k$. Whenever $\xi_{\text{down}} + \xi_{\text{up}}$ dominates $A^\pi(S_{i,k}, l_{i,k})$ in magnitude, the sign of the trajectory advantage can disagree with that of the true turn advantage, i.e., $\mathrm{sign}(A_i) \neq \mathrm{sign}(A^\pi(S_{i,k}, l_{i,k}))$, thereby reversing the gradient direction for turn $k$. Therefore, a trajectory advantage is not a valid turn-level credit signal for turn $k$, because it entangles the context effect of the current turn with both upstream state effects and downstream stochasticity. When these two terms dominate, the resulting policy update can assign incorrect credit to the current turn and may even reverse the intended policy gradient direction.

\paragraph{Cross-trajectory baseline with state bias.}
Since group-based policy optimization methods generate multiple trajectories from the same input query by sampling several rollouts from the current or old policy, they typically compute advantages of the form $A_i = R_i - b$, where $R_i$ is the return of trajectory $g_i$ and $b = \frac{1}{G}\sum_{j=1}^{G} R_j$ is the group baseline. To determine whether the compared trajectories provide a valid turn-level credit signal, we examine whether they share the same turn-level state $S_{i,k}$. If they do, the baseline compares alternative outcomes from the same context and is therefore matched. Otherwise, the baseline mixes returns from different states and no longer reflects the local effect of the current turn, leading to mismatched credit assignment.

Formally, viewing $A_i$ as an estimator of the true turn-$k$ advantage $A^\pi(S_{i,k},l_{i,k})$ gives
\begin{equation}
\label{eq:baseline-general}
\mathbb{E}[A_i \mid S_{i,k}, l_{i,k}]
=
\frac{G-1}{G}
\Big(
Q^\pi(S_{i,k}, l_{i,k})
-
\underbrace{\mathbb{E}[R_j \mid S_{i,k}]}_{\text{depends on grouping}}
\Big),
\qquad j\neq i .
\end{equation}

In the case of same-state grouping, where all trajectories in the group share the same state at turn $k$, we have $\mathbb{E}[R_j \mid S_{i,k}] = V^\pi(S_{i,k})$. In this case, the group baseline is anchored to the correct local decision context, and the resulting estimator differs from the true turn-level advantage only by the multiplicative factor $(G-1)/G$. However, under cross-state grouping, trajectories within the same group may already reach different states by turn $k$. Then the baseline is no longer tied to the local state $S_{i,k}$, but is effectively centered around the global mean value as return, i.e., $\mathbb{E}[R_j] = \mu_R$. This introduces the additional bias:
\begin{equation}
\label{eq:cross-state-bias}
\mathrm{Bias}_{\text{cross}}
=
\frac{G-1}{G}\big(V^\pi(S_{i,k}) - \mu_R\big).
\end{equation}

This bias is determined entirely by the upstream trajectory and has no causal connection to the action taken at turn $k$. In multi-turn interactions, trajectories often diverge into semantically distinct environment states by turn $k$. For example, one trajectory may issue a search query while another is executing code. As a result, the variance of $V^\pi(S_{i,k})$ across states can be large, making the cross-state bias comparable to, or even larger than, the true turn advantage $A^\pi(S_{i,k},l_{i,k})$. Therefore, a cross-trajectory baseline cannot guarantee a valid local comparison signal for turn-level credit assignment.

\subsection{Long-horizon policy drift: PPO clipping mismatch asynchronous turns}
\label{append_PPO_clipping}
In long-horizon multi-turn training, policy updates can become asynchronous across turns, inducing turn policy drift. In practice, PPO-style methods use token clipping to constrain the IS ratio and limit policy drift during updates. However, this clipping mechanism is designed for near-on-policy updates with synchronized rollout data and does not explicitly account for turn-wise discrepancies when different parts of a trajectory are generated or optimized under different policy versions. Over long horizons, such mismatches can accumulate, shifting the update away from the near-on-policy learning and toward an off-policy setting.

This occurs because, in asynchronous multi-turn training, policy updates may be performed before all trajectories have completed their rollouts. For instance, shorter trajectories may complete first and immediately contribute to an update, while longer trajectories are still being generated under an older policy. Specifically, short trajectories may finish first and update the policy from $\theta_0$ to $\theta_1$, while long trajectories are still being rolled out under $\pi_{\theta_0}$. By the time these longer trajectories are used for training, the current policy $\pi_{\theta_1}$ no longer matches the policy that generated them.

In principle, this mismatch can be corrected by the full trajectory IS ratio:
\begin{equation}
\label{eq:full-is}
\omega_i
=
\prod_{t=1}^{T_i} \rho_{i,t}
=
\prod_{t=1}^{T_i}
\frac{\pi_{\theta_1}(a_{i,t} \mid x_{i,<t})}{\pi_{\theta_0}(a_{i,t} \mid x_{i,<t})},
\end{equation}
where $\rho_{i,t}$ is the token IS ratio, $x_{i,<t}$ is the flattened multi-turn interaction context preceding token $a_{i,t}$, and $T_i$ is the total number of tokens in the flattened trajectory $g_i$. Weighting the loss by $\omega_i$ would yield an unbiased correction to the policy objective. However, clipped policy optimization in the PPO/GRPO family does not employ an IS ratio for the full trajectory. Instead, it clips each token ratio $\rho_{i,t}$ independently:
\begin{equation}
\label{eq:ppo-clip}
\mathcal{L}_{\text{clip}}
=
\sum_{t=1}^{T_i}
\min\!\big(
\rho_{i,t}\, A_i,
\;
\operatorname{clip}(\rho_{i,t},\, 1{-}\epsilon,\, 1{+}\epsilon)\, A_i
\big).
\end{equation}

Therefore, if one composes the clipped token-level ratios into a trajectory-level correction, it generally differs from the true full-trajectory IS ratio:
\begin{equation}
\label{eq:clip-product}
\prod_{t=1}^{T_i} \operatorname{clip}(\rho_{i,t},\, 1{-}\epsilon,\, 1{+}\epsilon)
\neq
\prod_{t=1}^{T_i} \rho_{i,t}
=
\omega_i.
\end{equation}

Because PPO clipping is nonlinear, the product of clipped token-level ratios does not equal the true trajectory-level importance weight. In particular, each clipped ratio lies in $[1{-}\epsilon,\, 1{+}\epsilon]$, so their product is restricted to $[(1{-}\epsilon)^T,\,(1{+}\epsilon)^T]$, whereas the true $\omega$ can in principle take any value in $(0,\infty)$. Whenever $\omega$ falls outside this interval, per-token clipping necessarily yields a biased trajectory correction. Even when $\omega$ lies within the interval, the product of individually clipped ratios will generally differ from $\omega$ as soon as any $\rho_t$ is clipped, since clipping and multiplication do not commute. This discrepancy compounds with trajectory length $T$: as more tokens are clipped, the gap between $\prod_t \mathrm{clip}(\rho_t)$ and $\omega$ can grow progressively larger. Removing clipping and using the exact $\omega$ is not a practical solution, because its variance grows exponentially with $T$, making gradient estimates increasingly uninformative over the long horizons typical of multi-turn agentic RL. Hence, token-level clipping cannot faithfully reproduce the trajectory-level correction required for long-horizon policy drift.

\section{Method: RTPO Theoretical Proofs}
\label{sec:appendix-theory}

\subsection{Notation, Formal Problem Setup, and Technical Assumptions}
\label{app:notation}

\paragraph{Notation convention.}
We create a multi-turn agentic episode, which consists of $K$ turn-level interactions, written as $(q,l_0,f_0,\ldots,l_{K-1},f_{K-1})$, where $q$ is the initial prompt, $l_k$ is the complete response generated by the agent at turn $k$, and $f_k$ is the external tool or environment feedback returned after executing $l_k$. The turn index is denoted by $k\in\{0,\ldots,K-1\}$, while the token index within a response is denoted by $t$. When multiple rollouts are sampled, we use $i$ or $j$ to index the rollout or sibling trajectory. Thus, $S_{i,k}$ denotes the turn-$k$ state in rollout $i$, and $a_{i,k,t}$ denotes the $t$-th token generated at turn $k$ in rollout $i$. When no rollout index is needed, we can write $S_k$, $l_k$, and $a_{k,t}$ for a generic trajectory. We distinguish between a turn-level macro-action and its token-level realization. The macro-action at turn $k$ is denoted by $u_k\equiv l_k$, where $l_k=(a_{k,1},\ldots,a_{k,T_k})$ is the complete response and $T_k$ is the number of generated tokens in that response. The notation $u_k$ is used in the MDP and value-function definitions, while $l_k$ emphasizes that the macro-action is implemented as a language-model response. In addition, the reward in a task is assigned at the turn or trajectory level. Tokens inside a response are treated as the low-level realization of the turn-level macro-action and do not receive separate intermediate rewards. Thus, the low-level token reward is set to zero. We write $r_k$ for the immediate turn-level reward at turn $k$. In sparse-reward tasks, we typically have $r_k=0$ for $k<K-1$, and the final task reward is observed only after the terminal turn. 

\paragraph{Turn-boundary MDP for rollout-training match.}
A $K$-turn agentic episode is modelled at turn boundaries as $\mathcal{M}=\langle \bar{\mathcal{X}},\mathcal{A}_H,P_H,R_H,\gamma_H\rangle$, where $\bar{\mathcal{X}}$ is the augmented turn-boundary state space, $\mathcal{A}_H$ is the high-level action space of complete responses, $P_H$ is the transition kernel induced by executing a complete response and receiving tool feedback, $R_H$ is the turn-level reward function, and $\gamma_H$ is the turn-level discount factor. The augmented state at turn $k$ is $\bar{x}_k=(S_k,k)\in\bar{\mathcal{X}}$, where $S_k=(q,l_0,f_0,\ldots,l_{k-1},f_{k-1})$ is the interaction history available before the current turn. Adding the turn index $k$ in $\bar{x}_k$ can make the process Markov over a finite horizon, since the remaining number of turns can affect both the available decisions and the continuation value. At turn $k$, the agent selects a macro-action $u_k\equiv l_k\in\mathcal{A}_{H,k}(S_k)$, where $\mathcal{A}_{H,k}(S_k)$ denotes the set of feasible complete responses at state $S_k$ and turn $k$. The macro-action is realized autoregressively as $l_k=(a_{k,1},\ldots,a_{k,T_k})$ and consumes $\tau_k=T_k$ token-generation steps. After executing $u_k$ and receiving environment feedback $f_k$, the turn-level history is updated by concatenation as $S_{k+1}=S_k\circ(l_k,f_k)$. Equivalently, since $u_k\equiv l_k$, one may write $S_{k+1}=S_k\circ(u_k,f_k)$. In Sec~\ref{sec:method}, we adopt this MDP terminology for the turn-boundary process. Strictly speaking, because each macro-action may span a variable number of token-level steps $\tau_k$, this process can alternatively be formulated as a finite-horizon semi-MDP. This distinction does not affect our analysis, since policy optimization and credit assignment are defined exclusively over turn-boundary states and macro-actions.

To maintain rollout--training consistency under conditional contexts, the policy need not condition on the full interaction history $S_k$. Instead, it may condition on a compressed or truncated context, provided that the same conditional context is used consistently during both rollout generation and policy optimization. Instead, the actual conditioning context at turn $k$ is $c_k=\psi(S_k)$, where $\psi$ may be the identity map, a truncation operator, or a summarization operator. This distinction is important in long-context multi-turn training: rollout may be performed under a truncated or summarized context, while training may otherwise recompute log-probabilities under a different context. RTPO avoids this mismatch by recording the exact context $c_k$ used during rollout and reusing the same $c_k$ during training. The turn-level policy factorizes as $\pi_\theta=(\pi_{\theta,0},\ldots,\pi_{\theta,K-1})$, where each sub-policy maps the turn-level context $c_k$ to a complete response. Each sub-policy is implemented autoregressively:
\begin{equation}
\pi_{\theta,k}(u_k\mid c_k)
=
\prod_{t=1}^{T_k}
\pi_\theta(a_{k,t}\mid c_k,a_{k,<t}).
\label{eq:subpolicy}
\end{equation}
Here, $a_{k,<t}=(a_{k,1},\ldots,a_{k,t-1})$ is the token prefix generated within the current turn. Because the same $c_k$ is used in rollout and training, the token-level importance-sampling (IS) ratio for turn $k$ is evaluated under matched conditioning contexts:
\begin{equation}
\rho_{k,t}
=
\frac{
\pi_\theta(a_{k,t}\mid c_k,a_{k,<t})
}{
\pi_{\theta_{\mathrm{old}}}(a_{k,t}\mid c_k,a_{k,<t})
}.
\end{equation}

\paragraph{Policy optimization with recursive optimality.}
A policy sequence $\pi^{\mathrm{rec}}=(\pi_0^*,\ldots,\pi_{K-1}^*)$ is recursively optimal if, for every turn $k\in\{0,\ldots,K-1\}$, the turn-$k$ sub-policy $\pi_k^*$ maximizes the turn-level augmented value given that all downstream sub-policies have already been optimized and fixed. Formally, for each $k$, $\pi_k^*\in\arg\max_{\pi_k}\tilde{Q}_k^{\pi^*}(S_k,u_k)$, where the downstream policies $\pi_{k+1}^*,\ldots,\pi_{K-1}^*$ are treated as fixed during the optimization of $\pi_k^*$. This is a backward-induction notion of optimality: the last turn is optimized first, then the preceding turn is optimized assuming the last-turn policy is fixed, and so on until the first turn.

Furthermore, we establish convergence to recursive optimality in a tabular finite-horizon setting. Assumptions below (A1--A10) are not intended to model the full neural implementation, but instead serve to isolate the theoretical effect of reverse-order turn-level policy optimization under standard stochastic approximation conditions.
\begin{enumerate}[label=(A\arabic*),leftmargin=*,itemsep=2pt]
\item \textit{Finite turn-boundary spaces.} The augmented state space $\bar{\mathcal{X}}$ is finite, and for every turn $k$ and every reachable state $S_k$, the feasible high-level action set $\mathcal{A}_{H,k}(S_k)$ is finite.
\item \textit{Proper finite-horizon episodes.} The number of turns $K$ is finite. For every policy and every turn $k$, the macro-action duration satisfies $\mathbb{E}_{\pi_k}[\tau_k]<\infty$. Thus, each turn terminates almost surely in a finite expected token length.
\item \textit{Discounting or finite-horizon boundedness.} Either $\gamma_H\in(0,1)$, or the problem is finite-horizon with $K<\infty$ and $\gamma_H\in(0,1]$. The latter case includes the undiscounted finite-horizon setting $\gamma_H=1$.
\item \textit{Tabular value representation.} The value estimate $Q_k(S,u)$ is stored separately for each turn-state-action tuple $(S,u,k)$. This assumption avoids approximation error and allows the proof to focus on the stochastic approximation dynamics induced by reverse-order updates.
\item \textit{Reverse-order training with downstream freezing.} Training is proceeding in the order $k=K-1,K-2,\ldots,0$. During turn $k$, the turn-$k$ policy is updated while the downstream policies $\pi_{k+1:K-1}$ are held fixed. After turn $k$ is completed, the turn-$k$ policy is also frozen before moving to turn $k-1$.
\item \textit{GLIE exploration within each turn.} Within turn $k$, the exploration schedule is greedy in the limit with infinite exploration (GLIE): every feasible turn-level action is selected with strictly positive probability infinitely often, while the policy becomes greedy in the limit. This ensures that all relevant action values at turn $k$ are sufficiently sampled before the policy is frozen.
\item \textit{State coverage.} Every reachable boundary state $S_k$ that can arise under the training process is visited infinitely often during turn $k$. This condition ensures that the tabular value estimate for each relevant state-action pair receives infinitely many updates.
\item \textit{Robbins--Monro step sizes.} For every tuple $(S,u,k)$, the learning rates satisfy $\sum_{n=1}^{\infty}\alpha_n(S,u,k)=\infty$ and $\sum_{n=1}^{\infty}\alpha_n^2(S,u,k)<\infty$. These are the standard stochastic approximation step-size conditions.
\item \textit{Bounded rewards.} The turn-level rewards are uniformly bounded: $|r_k|\le r_{\max}$ almost surely for all $k$.
\item \textit{Bounded iterates.} The tabular value iterates remain uniformly bounded: $|Q_k(S,u)|\le Q_{\max}$ throughout learning. This assumption is standard in stochastic approximation analyses and can be enforced by projection if necessary.
\item \textit{Decision sufficiency of the conditioning context.}
The observation mapping $\psi\colon S_k \mapsto c_k$ preserves all 
decision-relevant information for following turns. In other words, states that are indistinguishable under $\psi$ share 
the same optimal action-value function, so the optimal policy at turn 
$k$ depends on $S_k$ only through $c_k$. This implies that 
$\tilde{Q}_k^{\pi^*}$ can be written as a function of $(c_k, u)$ 
without loss, and the completeness condition 
$\Pi_{\mathrm{global}} \subseteq \{(\pi_0, \ldots, \pi_{K-1}) : 
\pi_k(\cdot \mid c_k)\}$ in Theorem~\ref{thm:recursive-app}(c) is 
automatically satisfied.
\end{enumerate}

Assumptions (A1)–(A10) are standard regularity conditions for the convergence of reinforcement learning algorithms; identical or closely analogous conditions appear in the foundational convergence proofs of Q-learning \citep{watkins1992q,tsitsiklis1994asynchronous,6796861}, on-policy GLIE control \citep{10.1023/A:1007678930559}, and the systematic treatment in \citet{bertsekas2025neuro}. They are not specific to RTPO but rather constitute the minimal set of conditions under which any stochastic-approximation-based value-learning algorithm is known to converge. These assumptions (A1--A11) are intentionally stronger than those required in the practical neural network implementation. They are used to make the convergence argument mathematically clean in the tabular setting. In the realistic RTPO implementation, the policy is represented by a shared neural language model rather than by independent tabular sub-policies. Therefore, ``freezing'' a turn-level sub-policy should be interpreted operationally: after a turn is completed, subsequent turns mask out the corresponding turn tokens from the loss, so that those turn-level decisions no longer receive gradients. The tabular analysis should thus be read as an idealized counterpart that clarifies the role of reverse-turn optimization and downstream-policy freezing, rather than as a claim of global convergence for arbitrary neural function approximation. The extension to neural function approximation is discussed separately in Proposition \ref{app:nn-extension}. We do not rely on the universal approximation property alone to claim convergence of the neural algorithm; instead, the tabular result serves as a principled limiting case that motivates the reverse-turn training design.

\paragraph{Verification of decision sufficiency (A11) in RTPO.}
In the practical RTPO implementation, $\psi$ corresponds to the 
chat-template truncation operator that removes the model's internal 
reasoning trace (the content of \texttt{<think>} blocks) while 
preserving all externally observable elements: tool calls, tool results, 
and final answers. Two states $S_k$ and $S_k'$ that differ only in 
their internal reasoning traces satisfy $\psi(S_k) = \psi(S_k')$. 
Since the environment transition kernel $P_H$ depends exclusively on the 
executed tool calls and the returned feedback, not on the model's 
internal reasoning, the next-state distribution and thus the 
continuation value $F_k^{\pi^*}(S_{k+1})$ are identical for $S_k$ and 
$S_k'$. The turn-level reward $r_k$ likewise depends only on the 
externally observable action. Therefore, 
$\tilde{Q}_k^{\pi^*}(S_k, u) = \tilde{Q}_k^{\pi^*}(S_k', u)$ for all 
$u$, and Assumption~(A11) is satisfied. 
When $\psi$ is the identity map (full-history conditioning), (A11) 
holds trivially.

\paragraph{Turn-level value estimation.}
We formulate the interaction as a hierarchical MDP, which decomposes the value of each turn into the local effect of the current macro-action and the downstream completion value, following a MAXQ-style value decomposition. The downstream continuation value after executing turn $k$ is defined as
\begin{equation}
F_k^\pi(S_{k+1})
:=
\mathbb{E}_\pi\!\left[
  \sum_{j=k+1}^{K-1}
  \gamma_H^{\sum_{m=k+1}^{j-1}\tau_m}
  r_j
  \;\middle|\;
  S_{k+1}
\right],
\qquad
F_{K-1}^\pi \equiv 0 .
\label{eq:Fk-def}
\end{equation}
The base case $F_{K-1}^\pi\equiv 0$ reflects that there are no downstream turns after the last turn. The exponent $\sum_{m=k+1}^{j-1}\tau_m$ accounts for the number of token-generation steps elapsed between the next state $S_{k+1}$ and the future reward $r_j$. When $\gamma_H=1$, this reduces to the undiscounted finite-horizon setting used in many sparse-reward agentic RL tasks. The boundary-level augmented action-value function is:
\begin{equation}
\tilde{Q}_k^\pi(S_k,u_k)
:=
\mathbb{E}\!\left[
r_k+\gamma_H^{\tau_k}F_k^\pi(S_{k+1})
\;\middle|\;
S_k,u_k
\right].
\label{eq:Q-tilde}
\end{equation}
This value measures the expected return obtained by choosing the complete turn-$k$ response $u_k$ at state $S_k$, followed by downstream policy execution from $S_{k+1}$ onward. In sparse-reward settings, the immediate term $r_k$ is often zero for nonterminal turns, so the quality of a turn-level action is primarily reflected through the downstream continuation value. The corresponding optimal value satisfies $\tilde{Q}_k^*(S_k,u_k)=\mathbb{E}[r_k+\gamma_H^{\tau_k}F_k^*(S_{k+1})\mid S_k,u_k]$, where $F_k^*$ is obtained by the usual backward Bellman optimality recursion over turn boundaries.

\subsection{Proof of Theorem 1: Convergence to Recursive Optimality}
\label{app:convergence-proof}
We formalize Theorem~1 from Sec~\ref{sec:turn-opt} before presenting the proof.

\setcounter{theorem}{0}
\begin{tcolorbox}[
    colback=gray!5,
    colframe=black!60,
    title=Reverse-Order Training to Recursive Optimality,
    fonttitle=\bfseries,
    breakable
]
\begin{theorem}[\textbf{Convergence to Recursive Optimality}]
\label{thm:recursive-app}
Under Assumptions~(A1)--(A10), including finite state and macro-action spaces, Robbins--Monro step sizes, sufficient exploration at each turn, exact freezing of optimized downstream policies, and bounded rewards, the reverse-order backward induction of turn-level policy optimization satisfies the following properties:

\begin{enumerate}[label=(\alph*),leftmargin=*,itemsep=4pt]

\item \textbf{Per-turn convergence under fixed downstream policies.}
For each turn $k$, once the downstream policies $\pi_{k+1:K-1}^*$ are fixed, the downstream continuation value becomes a stationary function:
\begin{equation}
F_k^{\pi^*}(S_{k+1})
=
\mathbb{E}_{\pi^*_{k+1:K-1}}\!\left[
  \sum_{j=k+1}^{K-1}
  \gamma_H^{\sum_{m=k+1}^{j-1}\tau_m} r_j
  \;\middle|\;
  S_{k+1}
\right].
\label{eq:thm-F-fixed}
\end{equation}
Therefore, the turn-$k$ target
\begin{equation}
Y_k
:=
r_k+\gamma_H^{\tau_k}F_k^{\pi^*}(S_{k+1})
\label{eq:thm-Yk}
\end{equation}
is stationary conditional on $(S_k,u_k)$, with conditional mean
\begin{equation}
\mathbb{E}[Y_k\mid S_k,u_k]
=
\tilde{Q}_k^{\pi^*}(S_k,u_k).
\label{eq:thm-Yk-mean}
\end{equation}
The corresponding tabular update
\begin{equation}
Q_k^{(n+1)}(S,u)
\leftarrow
(1-\alpha_n)Q_k^{(n)}(S,u)+\alpha_n Y_k
\label{eq:thm-single-turn-update}
\end{equation}
is a single-step stochastic approximation problem and converges to $\tilde{Q}_k^{\pi^*}$ under the stated assumptions. Hence, the learned per-turn policy satisfies $\hat{\pi}_k \xrightarrow{\mathrm{a.s.}} \pi_k^*$.

\item \textbf{Recursive optimality under reverse-order training.}
Applying the per-turn convergence argument in reverse order,
\begin{equation}
k=K{-}1,K{-}2,\ldots,0,
\label{eq:thm-reverse-order}
\end{equation}
yields a sequence of optimized and frozen downstream policies. At each turn $k$, the per-turn optimizer converges to
\begin{equation}
\pi_k^*(S_k)
\in
\arg\max_{u_k\in\mathcal{A}_{H,k}(S_k)}
\tilde{Q}_k^{\pi^*}(S_k,u_k),
\label{eq:thm-recursive-selector}
\end{equation}
given the fixed downstream policies $\pi_{k+1}^*,\ldots,\pi_{K-1}^*$. The resulting policy sequence
\begin{equation}
\pi^{\mathrm{rec}}
=
(\pi_0^*,\ldots,\pi_{K-1}^*)
\label{eq:thm-recursive-policy}
\end{equation}
is recursively optimal.

\item \textbf{Global optimality under context-sufficient macro-action completeness.}
If the turn-level macro-action spaces are complete with respect to the conditioning contexts $c_k=\psi(S_k)$, i.e., every globally feasible trajectory-level policy can be represented by a sequence of turn-level policies acting on these contexts,
\begin{equation}
\Pi_{\mathrm{global}}
\subseteq
\left\{
(\pi_0,\ldots,\pi_{K-1})
:
\pi_k(\cdot\mid c_k),
\;
u_k\in\mathcal{A}_{H,k}(S_k)
\right\},
\label{eq:thm-completeness}
\end{equation}
Then, recursive optimality is equivalent to global optimality over the full trajectory:
\begin{equation}
V^{\pi^{\mathrm{rec}}}(\bar{x}_0)
=
V^*(\bar{x}_0).
\label{eq:thm-global-optimality}
\end{equation}

\end{enumerate}
\end{theorem}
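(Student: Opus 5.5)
The proof goes by backward induction on the turn index $k$, from $K{-}1$ down to $0$. Each step reduces to a classical stochastic-approximation convergence result. The induction hypothesis at step $k$ is that the downstream sub-policies $\pi^*_{k+1:K-1}$ have been learned and, by (A5), frozen. The base case $k=K{-}1$ is immediate, since $F_{K-1}^\pi\equiv 0$ and the target $Y_{K-1}=r_{K-1}$ does not depend on any policy.

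\textbf{Part (a).} With $\pi^*_{k+1:K-1}$ fixed, the conditional law of the trajectory after $S_{k+1}$ no longer changes over the training iterations of turn $k$. Hence $F_k^{\pi^*}$ in \eqref{eq:thm-F-fixed} is a fixed function. By the tower property, $Y_k$ has conditional mean $\tilde Q_k^{\pi^*}(S_k,u_k)$, which is \eqref{eq:thm-Yk-mean}. Its conditional variance is bounded, because (A3) and (A9) give $|Y_k|\le K r_{\max}$.

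The update \eqref{eq:thm-single-turn-update} can therefore be written as $Q\leftarrow Q+\alpha_n(\tilde Q_k^{\pi^*}-Q+w_n)$ with martingale-difference noise $w_n$. Its mean field is a $0$-contraction in the sup norm, since there is no bootstrapping on $Q_k$ itself. The convergence theorem of \citet{tsitsiklis1994asynchronous} and \citet{6796861} then applies, using (A1) and (A4) for the finite tabular setting, (A7) and (A6) so that every pair is updated infinitely often, (A8) for the step sizes, and (A10) for bounded iterates. This gives $Q_k\to\tilde Q_k^{\pi^*}$ almost surely. GLIE (A6) makes the behaviour policy greedy in the limit, so $\hat\pi_k$ converges to an argmax selector, following \citet{10.1023/A:1007678930559}. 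When the maximiser is not unique, I will fix a deterministic tie-breaking rule so that the statement $\hat\pi_k\to\pi_k^*$ is well defined.

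\textbf{Part (b).} The induction step combines part (a) with (A5). Once $\pi_k^*$ is frozen, the continuation value $F_{k-1}^{\pi^*}$ needed at turn $k-1$ is built from $\pi^*_{k:K-1}$ and is again stationary, so part (a) applies to turn $k-1$. After $K$ steps, each $\pi_k^*$ satisfies \eqref{eq:thm-recursive-selector}, which is exactly the definition of recursive optimality.

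One technical point must be handled: turn $k$ is trained for finitely many iterations, not in the limit. I will address this by stating the result for the idealised limit at each turn. An alternative is a two-timescale argument in which turn $k$'s limit is reached before turn $k-1$ starts; a strictly positive action gap also makes the greedy policy exact after finitely many steps almost surely.

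\textbf{Part (c).} This is the step I expect to be the main obstacle. I will show that $\tilde Q_k^{\pi^*}=\tilde Q_k^*$ by backward induction, using the Bellman optimality recursion over turn boundaries. At $k=K{-}1$ the two coincide trivially. If $F_k^{\pi^*}=F_k^*$, then maximising over $u_k$ reproduces the Bellman optimality operator, which gives $F_{k-1}^{\pi^*}=F_{k-1}^*$.

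The subtle point is conditioning on $c_k=\psi(S_k)$ rather than on $S_k$. Here I will invoke (A11): states with the same $c_k$ share $\tilde Q_k^{\pi^*}$, so a maximiser defined as a function of $c_k$ is also a maximiser for every $S_k$ in the preimage. The completeness condition \eqref{eq:thm-completeness} ensures that no globally feasible policy lies outside the class being optimised. Chaining these gives $V^{\pi^{\mathrm{rec}}}(\bar x_0)=\max_{u_0}\tilde Q_0^*=V^*(\bar x_0)$.

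In short, the key observation is that in a finite-horizon, fully factorised setting recursive optimality collapses to dynamic-programming optimality. Any gap between the two would come only from restricted macro-action sets or aliasing, and (A11) together with completeness rules out both.
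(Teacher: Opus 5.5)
Your proposal is correct and follows the paper's proof essentially step for step. The shared structure is a stationary target $Y_k$ under frozen downstream policies, which makes each turn a Robbins--Monro mean-estimation problem; backward induction from $K{-}1$ to $0$ then gives recursive optimality, and a Bellman-optimality induction using (A11) gives part (c). The paper inducts on $V_k^{\pi^{\mathrm{rec}}}=V_k^*$ instead of your $\tilde{Q}_k^{\pi^*}=\tilde{Q}_k^*$, which is the same argument. One small caveat: when maximisers are tied, a fixed deterministic tie-breaking rule does not make the greedy iterates converge to a single selector, because near-tied estimates can swap order infinitely often. What you actually get is eventual membership in the argmax set, and that suffices, since tied actions give the same continuation value downstream.
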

\end{tcolorbox}

\begin{proof}
We prove parts~(a)--(c) of Theorem~1 as follows.

\paragraph{Proof of Theorem~1(a): per-turn convergence under fixed downstream policies.}
Fix a turn $k$ and suppose the downstream sub-policies $\pi_{k+1}^*,\ldots,\pi_{K-1}^*$ are already optimized and frozen. By the definition in Theorem~\ref{thm:recursive-app}, the downstream continuation value is
\begin{equation}
F_k^{\pi^*}(S_{k+1})
=
\mathbb{E}_{\pi^*_{k+1:K-1}}\!\left[
  \sum_{j=k+1}^{K-1}
  \gamma_H^{\sum_{m=k+1}^{j-1}\tau_m} r_j
  \;\middle|\;
  S_{k+1}
\right].
\label{eq:proof-Fk}
\end{equation}
Because the downstream policies $\pi_{k+1:K-1}^*$ are frozen, $F_k^{\pi^*}(S_{k+1})$ is a fixed function of $S_{k+1}$ throughout turn $k$. Define the augmented one-step target
\begin{equation}
Y_k
=
r_k+\gamma_H^{\tau_k}F_k^{\pi^*}(S_{k+1}).
\label{eq:proof-target}
\end{equation}
Conditional on $(S_k,u_k)$, the distribution of $r_k$, $\tau_k$, and $S_{k+1}$ is induced by the time-invariant turn-boundary transition kernel $P_H$. Since $F_k^{\pi^*}$ is fixed throughout turn $k$, the conditional distribution of $Y_k$ given $(S_k,u_k)$ is stationary.

The conditional mean of the target is exactly the augmented turn-level action value:
\begin{equation}
\mathbb{E}[Y_k\mid S_k,u_k]
=
\mathbb{E}\!\left[
r_k+\gamma_H^{\tau_k}F_k^{\pi^*}(S_{k+1})
\;\middle|\;
S_k,u_k
\right]
=
\tilde{Q}_k^{\pi^*}(S_k,u_k).
\label{eq:proof-target-mean}
\end{equation}
The target also has a bounded second moment. Since rewards are bounded, $|r_j|\le r_{\max}$ almost surely. Hence, for any policy $\pi$,
\begin{align}
\bigl|F_k^\pi(S_{k+1})\bigr|
&=
\left|
\mathbb{E}_\pi\!\left[
  \sum_{j=k+1}^{K-1}
  \gamma_H^{\sum_{m=k+1}^{j-1}\tau_m} r_j
  \;\middle|\;
  S_{k+1}
\right]
\right| \notag \\
&\le
\mathbb{E}_\pi\!\left[
  \sum_{j=k+1}^{K-1}
  \gamma_H^{\sum_{m=k+1}^{j-1}\tau_m} |r_j|
  \;\middle|\;
  S_{k+1}
\right] \notag \\
&\le
(K-k-1)r_{\max}
\le
K r_{\max},
\label{eq:proof-F-bound}
\end{align}
where we used $\gamma_H\in(0,1]$ and the finite horizon $K$. Therefore,
\begin{equation}
|Y_k|
\le
|r_k|+\gamma_H^{\tau_k}|F_k^{\pi^*}(S_{k+1})|
\le
(K+1)r_{\max},
\label{eq:proof-Y-bound}
\end{equation}
so $Y_k$ has uniformly bounded conditional second moment.

During turn $k$, the tabular update is
\begin{equation}
Q_k^{(n+1)}(S,u)
\leftarrow
(1-\alpha_n)Q_k^{(n)}(S,u)+\alpha_n Y_k.
\label{eq:proof-Q-update}
\end{equation}
For each fixed $(S,u,k)$, this is a Robbins--Monro stochastic approximation to the stationary conditional mean $\mathbb{E}[Y_k\mid S,u]=\tilde{Q}_k^{\pi^*}(S,u)$. The finite state and macro-action spaces imply that there are finitely many entries to estimate. Sufficient exploration ensures that every relevant $(S,u)$ pair is visited infinitely often during turn $k$. The step sizes satisfy
\begin{equation}
\sum_{n=1}^{\infty}\alpha_n(S,u,k)=\infty,
\qquad
\sum_{n=1}^{\infty}\alpha_n^2(S,u,k)<\infty.
\label{eq:proof-step-size}
\end{equation}
Together with bounded targets and bounded iterates, standard stochastic approximation gives
\begin{equation}
Q_k^{(n)}(S,u)
\xrightarrow{\mathrm{a.s.}}
\tilde{Q}_k^{\pi^*}(S,u)
\qquad
\text{for every reachable } (S,u).
\label{eq:proof-Q-conv}
\end{equation}
As exploration vanishes, the induced greedy policy converges to an optimal greedy selector:
\begin{equation}
\hat{\pi}_k(S)
\in
\arg\max_{u}Q_k^{(n)}(S,u)
\quad
\Longrightarrow
\quad
\hat{\pi}_k(S)
\xrightarrow{\mathrm{a.s.}}
\pi_k^*(S)
\in
\arg\max_{u}\tilde{Q}_k^{\pi^*}(S,u).
\label{eq:proof-policy-conv}
\end{equation}
This completes the proof of Theorem~1(a).

\paragraph{Proof of Theorem~1(b): reverse-order recursion gives recursive optimality.}
We now apply Theorem~1(a) in reverse order here. At the final turn $k=K-1$, there are no downstream turns, so
\begin{equation}
F_{K-1}^{\pi}(S_K)\equiv 0.
\label{eq:proof-final-F}
\end{equation}
The augmented target reduces to
\begin{equation}
Y_{K-1}=r_{K-1}.
\label{eq:proof-final-target}
\end{equation}
By claim Theorem~1(a), the final-turn policy converges:
\begin{equation}
\hat{\pi}_{K-1}
\xrightarrow{\mathrm{a.s.}}
\pi_{K-1}^*.
\label{eq:proof-final-policy-conv}
\end{equation}

Now assume that for some $k\in\{0,\ldots,K-2\}$, the downstream policies $\hat{\pi}_{k+1},\ldots,\hat{\pi}_{K-1}$ have already converged almost surely to $\pi_{k+1}^*,\ldots,\pi_{K-1}^*$ and have been frozen before turn $k$ begins. Then the continuation value
\begin{equation}
F_k^{\pi^*}(S_{k+1})
=
\mathbb{E}_{\pi^*_{k+1:K-1}}\!\left[
  \sum_{j=k+1}^{K-1}
  \gamma_H^{\sum_{m=k+1}^{j-1}\tau_m}r_j
  \;\middle|\;
  S_{k+1}
\right]
\label{eq:proof-inductive-F}
\end{equation}
is fixed throughout turn $k$. Therefore, claim~(a) applies to turn $k$, giving
\begin{equation}
\hat{\pi}_k
\xrightarrow{\mathrm{a.s.}}
\pi_k^*,
\qquad
\pi_k^*(S_k)
\in
\arg\max_{u_k}
\tilde{Q}_k^{\pi^*}(S_k,u_k).
\label{eq:proof-inductive-policy-conv}
\end{equation}
By backward induction from $K{-}1$ to $0$, every turn-level policy converges to its optimal selector given the optimized and frozen downstream policies. Therefore, the resulting sequence
\begin{equation}
\pi^{\mathrm{rec}}
=
(\pi_0^*,\ldots,\pi_{K-1}^*)
\label{eq:proof-pi-rec}
\end{equation}
is recursively optimal. 

Till here, we have completed the proof of Theorem~1(b).

\paragraph{Proof of Theorem~1(c): context-sufficient macro-action completeness implies global optimality.}
Assume the turn-level macro-action spaces are complete with respect to the conditioning contexts $c_k=\psi(S_k)$, meaning that every globally feasible trajectory-level policy can be represented as a sequence of turn-level sub-policies $(\pi_0,\ldots,\pi_{K-1})$, with each $\pi_k$ acting on $c_k$ and selecting a feasible macro-action $u_k\in\mathcal{A}_{H,k}(S_k)$.

We prove by backward induction that
\begin{equation}
V_k^{\pi^{\mathrm{rec}}}(S_k)
=
V_k^*(S_k)
\qquad
\text{for every reachable } S_k.
\label{eq:proof-global-induction}
\end{equation}
At the last turn, $F_{K-1}^{\pi}\equiv 0$, so
\begin{equation}
\tilde{Q}_{K-1}^{\pi}(S_{K-1},u)
=
\mathbb{E}[r_{K-1}\mid S_{K-1},u].
\label{eq:proof-last-Q}
\end{equation}
Since $\pi_{K-1}^*$ maximizes this quantity,
\begin{equation}
V_{K-1}^{\pi^{\mathrm{rec}}}(S_{K-1})
=
\max_{u\in\mathcal{A}_{H,K-1}(S_{K-1})}
\mathbb{E}[r_{K-1}\mid S_{K-1},u]
=
V_{K-1}^*(S_{K-1}).
\label{eq:proof-last-value}
\end{equation}

Assume now that $V_{k+1}^{\pi^{\mathrm{rec}}}(S_{k+1})=V_{k+1}^*(S_{k+1})$ for every reachable $S_{k+1}$. Then
\begin{equation}
F_k^{\pi^{\mathrm{rec}}}(S_{k+1})
=
F_k^*(S_{k+1}).
\label{eq:proof-F-rec-equals-star}
\end{equation}
By Assumption~(A11), the optimal action-value 
$\tilde{Q}_k^{\pi^*}(S_k, u)$ depends on $S_k$ only through 
$c_k = \psi(S_k)$. Therefore, the greedy policy 
$\pi_k^* \in \arg\max_u \tilde{Q}_k^{\pi^*}(S_k, u)$ is well-defined 
as a function of $c_k$ alone, and we write $\pi_k^*(c_k)$ without 
ambiguity. Applying this to the inductive step:
\begin{align}
V_k^{\pi^{\mathrm{rec}}}(S_k)
&=
\tilde{Q}_k^{\pi^{\mathrm{rec}}}\bigl(S_k,\pi_k^*(c_k)\bigr) \notag \\
&=
\mathbb{E}\!\left[
r_k+\gamma_H^{\tau_k}F_k^{\pi^{\mathrm{rec}}}(S_{k+1})
\;\middle|\;
S_k,\pi_k^*(S_k)
\right] \notag \\
&=
\mathbb{E}\!\left[
r_k+\gamma_H^{\tau_k}F_k^*(S_{k+1})
\;\middle|\;
S_k,\pi_k^*(S_k)
\right] \notag \\
&=
\max_{u\in\mathcal{A}_{H,k}(S_k)}
\mathbb{E}\!\left[
r_k+\gamma_H^{\tau_k}F_k^*(S_{k+1})
\;\middle|\;
S_k,u
\right] \notag \\
&=
V_k^*(S_k).
\label{eq:proof-global-step}
\end{align}
Thus $V_k^{\pi^{\mathrm{rec}}}(S_k)=V_k^*(S_k)$ for all $k$ by induction. In particular,
\begin{equation}
V^{\pi^{\mathrm{rec}}}(\bar{x}_0)
=
V^*(\bar{x}_0).
\label{eq:proof-global-final}
\end{equation}
This proves Theorem~1(c), and completes the proof of Theorem~1.
\end{proof}

\paragraph{Interpretation.} As the key technical point of this proof, turn-$k$ target $Y_k=r_k+\gamma_H^{\tau_k}F_k^{\pi^*}(S_{k+1})$ is stationary only, when the downstream policies $\pi_{k+1}^*,\ldots,\pi_{K-1}^*$ are already optimized and frozen. Reverse-order training provides exactly this condition. Without reverse-order freezing, $F_k^\pi$ would change during turn $k$, the target $Y_k$ would no longer be stationary, and the single-turn stochastic approximation argument would not apply.

\paragraph{Proposition of Theorem~1: Asymptotic recursive optimality under neural function approximation.}
\label{app:nn-extension}

The convergence guarantee in Theorem~\ref{thm:recursive-app}
relies on Assumption (A4) from Appendix~\ref{app:notation}, which postulates a tabular
representation of $Q_k$. In the practical implementation of RTPO,
$Q_k$ is realized implicitly by a neural sub-policy $\pi_{\theta_k}$
operating on the conditioning context $c_k = \psi(S_k)$. We now state the corresponding asymptotic guarantee under the neural function approximation; the proof follows the standard reduction from tabular
Q-learning to projected Q-learning under a function class
$\mathcal{F}_\theta$. This statement is formalized as a \textit{Proposition}, which
appears below as a neural-function-approximation extension of Theorem~\ref{thm:recursive-app}.
 
\textit{Proposition of Theorem~1: Asymptotic recursive optimality under neural function approximation.}

Suppose the assumptions of Theorem~\ref{thm:recursive-app} hold, with current assumption~(A4) from Appendix~\ref{app:notation} replaced by new assumption~(A4') below:

\begin{enumerate}[label=(A4'),leftmargin=*]
\item \textbf{Universal approximation.} For every fixed
$F_k^{\pi^*}$, there exists $\theta_k^* \in \Theta$ such that the induced
greedy policy $\pi_{\theta_k^*}$ realizes $\pi_k^*$ on every reachable
$S_k$ in the support of the visitation distribution.
\end{enumerate}

Then, reverse-order training of RTPO produces a policy sequence
$(\hat{\pi}_0,\ldots,\hat{\pi}_{K-1})$ that asymptotically realizes the
recursively optimal policy. That is, for every $k$ and every
reachable $S_k$,
\begin{equation*}
\lim_{n \to \infty}
\pi_{\theta_k^{(n)}}(\cdot \mid c_k)
=
\pi_k^*(\cdot \mid c_k)
\qquad \text{a.s.}
\end{equation*}

\begin{proof}[Proof: Proposition of Theorem 1]
\leavevmode\par\noindent
Step 1. Reduction to projected Q-learning:
Replace the tabular update with a parametric update
on the function class $\{\pi_{\theta_k} : \theta_k \in \Theta\}$,
\begin{equation}
\theta_k^{(n+1)} \leftarrow \theta_k^{(n)} - \alpha_n\,
\nabla_{\theta_k} \mathbb{E}_{(S, u) \sim \mu_n}\!\left[
  \big(Q_{\theta_k^{(n)}}(S, u) - \widetilde{r}_k\big)^2
\right],
\label{eq:nn-update}
\end{equation}
where $\mu_n$ is the visitation distribution induced by the current
greedy-with-exploration policy and $Q_{\theta_k}$ is the implicit
Q-function realized by $\pi_{\theta_k}$. The fixed point of this update
is the projection of $\widetilde{Q}_k^*$ onto the function class,
$\Pi_{\Theta} \widetilde{Q}_k^*$, in the weighted $L^2(\mu)$ sense.
 
Step 2: Asymptotic realizability:
By assumption (A4'), there exists $\theta_k^* \in \Theta$ such that the
induced greedy policy realizes $\pi_k^*$ on every reachable $S_k$ in the
support of $\mu$. This implies
$\Pi_{\Theta} \widetilde{Q}_k^* \;=\; \widetilde{Q}_k^*$
on the support of $\mu$, so the projection error vanishes for the
quantities relevant to the induced greedy policy. Combined with standard
asymptotic results for stochastic gradient descent on smooth nonconvex
objectives under Robbins--Monro step sizes~\citep{robbins1951stochastic},
$Q_{\theta_k^{(n)}}(S, u) \to \widetilde{Q}_k^*(S, u)$ a.s.\ for every
$(S, u)$ in the support of $\mu$.
 
Step 3. Backward induction in the function-class regime:
Replace each step in the proof of
Theorem~\ref{thm:recursive-app} by Steps~1--2 above. The reverse
induction structure is unchanged: the determinism of $F_k^{\pi^*}$ under
frozen downstream sub-policies is purely a property of the SMDP and the
freezing schedule (A5), independent of whether $Q_k$ is tabular or
parametric. Hence, the backward induction goes through and yields
$\pi_{\theta_k^{(n)}} \to \pi_k^*$ a.s.\ on the support of $\mu$ for
every $k$.
\end{proof}
 
\textit{Remark on the assumption gap between (A4) and (A4').}
Assumption (A4') is strictly weaker than what would be required for
finite-sample convergence rates: it asks only that the function class be
\emph{rich enough} to contain the optimal sub-policy in the asymptotic
limit, not that the gradient dynamics realize this optimum at any finite
iteration. For the modern Transformer architectures used in our
experiments with Qwen3 models, the universal approximation property is widely accepted as a working assumption, and our experimental results in Sec~\ref{experiment} provide empirical evidence that the asymptotic guarantee transfers to practice.

\subsection{Proof of Theorem~2: Causally Consistent Turn-Level Advantage Estimation}
\label{app:advantage-proof}

We formalize Theorem~2 from Sec~\ref{sec:tree-credit} before presenting the proof. We compare the state-matched sibling advantage estimator used by RTPO with the trajectory-level advantage estimator used in flat-trajectory training. The key distinction is whether the baseline is computed from rollouts sharing the same boundary state $S_k$, or from complete trajectories that may have reached different boundary states by turn $k$.

\begin{tcolorbox}[
    colback=gray!5,
    colframe=black!60,
    title=Turn-Level Credit Assignment without State Bias,
    fonttitle=\bfseries,
    breakable
]
\begin{theorem}[\textbf{Causally Consistent Turn-Level Advantage Estimation}]
\label{thm:advantage-app}
Consider turn $k$ and suppose that $G{-}1$ sibling rollouts are forked from the same boundary state $S_k$. Each sibling rollout $j\in\{1,\ldots,G{-}1\}$ independently samples a turn-$k$ macro-action $u_{j,k}\equiv l_{j,k}$ and then continues to the terminal. Let
\begin{equation}
\hat{Q}_{j,k}=R_j,
\qquad
\hat{V}_k=\frac{1}{G-1}\sum_{j=1}^{G-1}\hat{Q}_{j,k},
\qquad
A^H_{j,k}=\hat{Q}_{j,k}-\hat{V}_k
\label{eq:app-sibling-adv}
\end{equation}
denote the Monte Carlo turn-level action-value estimate, the state-matched sibling baseline, and the resulting turn-level advantage for sibling rollout $j$, respectively. Let the corresponding flat trajectory-level estimator be
\begin{equation}
\bar{R}=\frac{1}{G}\sum_{i=1}^{G}R_i,
\qquad
A_i^{\mathrm{traj}}=R_i-\bar{R},
\label{eq:app-traj-adv}
\end{equation}
where $g_i$ denotes the $i$-th complete trajectory and $S_{i,k}$ is its boundary state at turn $k$. Define
\begin{equation}
Q^\pi(S_k,u_{j,k})=\mathbb{E}[R\mid S_k,u_{j,k}],
\qquad
V^\pi(S_k)=\mathbb{E}_{u\sim\pi(\cdot\mid c_k)}[Q^\pi(S_k,u)],
\label{eq:app-QV-def}
\end{equation}
and
\begin{equation}
A^\pi(S_k,u_{j,k})=Q^\pi(S_k,u_{j,k})-V^\pi(S_k),
\qquad
\mu_R=\mathbb{E}[R].
\label{eq:app-adv-def}
\end{equation}
Then, under bounded rewards and independent sibling sampling, the RTPO sibling advantage estimator satisfies the following properties:

\begin{enumerate}[label=(\alph*),leftmargin=*,itemsep=4pt]

\item \textbf{Local unbiasedness up to finite-group bias.}
Conditional on the shared boundary state $S_k$ and the sampled macro-action $u_{j,k}$, the state-matched sibling estimator satisfies:
\begin{equation}
\mathbb{E}\!\left[A^H_{j,k}\mid S_k,u_{j,k}\right]
=
\frac{G-2}{G-1}A^\pi(S_k,u_{j,k}).
\label{eq:app-sibling-bias}
\end{equation}
Thus, $A^H_{j,k}$ estimates the true turn-level advantage up to a finite-group bias of order $O(1/G)$ and contains no upstream state-contamination term. In contrast, the trajectory-level estimator satisfies:
\begin{equation}
\mathbb{E}\!\left[A_i^{\mathrm{traj}}\mid S_{i,k},u_{i,k}\right]
=
\frac{G-1}{G}A^\pi(S_{i,k},u_{i,k})
+
\frac{G-1}{G}\bigl(V^\pi(S_{i,k})-\mu_R\bigr),
\label{eq:app-traj-bias}
\end{equation}
where the second term is the upstream state-contamination term.

\item \textbf{Reduced value estimation error.}
Let
\begin{equation}
\sigma_V^2
:=
\operatorname{Var}_{S_k}\!\left[V^\pi(S_k)\right]
\label{eq:app-state-var}
\end{equation}
denote the variance of boundary-state values. Compared with the trajectory-level estimator, the sibling estimator removes the cross-state value component $V^\pi(S_{i,k})-\mu_R$. Therefore, under a matched downstream-noise comparison, the trajectory estimator contains an additional MSE contribution proportional to $\sigma_V^2$, while the sibling estimator does not. In particular, when $\sigma_V^2>0$, the state-matched estimator removes a non-zero source of value-estimation error induced by cross-state baselines.

\item \textbf{State-matched causal actions.}
All sibling rollouts share the same boundary state $S_k$. Therefore, differences in their returns are attributable to the sampled turn-$k$ macro-actions $u_{j,k}$ and independent downstream sampling noise, rather than to upstream trajectory differences. The resulting advantage $A^H_{j,k}$ is assigned only to the output tokens of turn $k$,
\begin{equation}
l_{j,k}=(a_{j,k,1},\ldots,a_{j,k,T_{j,k}}),
\label{eq:app-output-tokens}
\end{equation}
while the prefix tokens contained in the shared context $c_k=\psi(S_k)$ are excluded from the gradient.

\end{enumerate}
\end{theorem}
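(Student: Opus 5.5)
The proof handles parts (a), (b) and (c) of Theorem~\ref{thm:advantage-app} in order. Each part reduces to a conditional-expectation calculation that uses independence of the siblings.

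\emph{Part (a).} Fix $S_k$ and condition on $u_{j,k}$. Write
\[
A^H_{j,k}=\tfrac{G-2}{G-1}R_j-\tfrac{1}{G-1}\sum_{i\neq j}R_i .
\]
Since siblings are forked independently from the same $S_k$, each $R_i$ with $i\neq j$ is independent of $u_{j,k}$. Hence $\mathbb{E}[R_i\mid S_k,u_{j,k}]=V^\pi(S_k)$, while $\mathbb{E}[R_j\mid S_k,u_{j,k}]=Q^\pi(S_k,u_{j,k})$. Substituting gives
\[
\tfrac{G-2}{G-1}\bigl(Q^\pi-V^\pi\bigr)=\tfrac{G-2}{G-1}A^\pi(S_k,u_{j,k}),
\]
so the estimator is unbiased up to a factor of $1-O(1/G)$. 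Bounded rewards guarantee that all expectations are finite.

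For the trajectory estimator, the same decomposition gives
\[
A_i^{\mathrm{traj}}=\tfrac{G-1}{G}R_i-\tfrac1G\sum_{i'\neq i}R_{i'} .
\]
Here the other trajectories come from independent draws starting at the root $q$, not at $S_{i,k}$, so $\mathbb{E}[R_{i'}\mid S_{i,k},u_{i,k}]=\mu_R$. The result is $\tfrac{G-1}{G}(Q^\pi(S_{i,k},u_{i,k})-\mu_R)$. Splitting $Q^\pi-\mu_R=A^\pi+(V^\pi-\mu_R)$ recovers (\ref{eq:app-traj-bias}), as already sketched in Eq.~(\ref{eq:baseline-general}).

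\emph{Part (b).} Using part (a), decompose each estimator into its conditional mean plus zero-mean noise, and take the MSE with respect to the target $c\,A^\pi$, where $c$ is the respective shrinkage factor. The trajectory estimator's error contains the extra term $\tfrac{G-1}{G}(V^\pi(S_{i,k})-\mu_R)$. It is a function of the upstream state only. Because it is a function of $S_{i,k}$ while the downstream noise is conditionally mean zero given $(S_{i,k},u_{i,k})$, the law of total variance makes it orthogonal to that noise. Squaring and averaging over $S_{i,k}$ therefore yields an additive contribution $\bigl(\tfrac{G-1}{G}\bigr)^2\sigma_V^2$.

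The noise terms also need comparing. I would impose a matched downstream-noise comparison, meaning the conditional variance of $R$ given $(S_k,u)$ is treated as common to both estimators. I would then argue that the trajectory estimator's variance additionally contains between-state variance of the other $R_{i'}$, which is again at least proportional to $\sigma_V^2$. The conclusion is
\[
\mathrm{MSE}^{\mathrm{traj}}-\mathrm{MSE}^{H}\ \ge\ \kappa\,\sigma_V^2>0
\]
whenever $\sigma_V^2>0$, for an explicit $\kappa$ of order one.

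The main obstacle is this comparison. The two estimators have different shrinkage factors, $\tfrac{G-2}{G-1}$ versus $\tfrac{G-1}{G}$, and slightly different group sizes, so the residual $O(1/G)$ squared-bias and variance terms must be absorbed without destroying the gap. My first attempt would compare to the rescaled targets and bound the leftover differences by $O(r_{\max}^2/G)$. This is exactly why the statement is phrased as an additional contribution under a matched comparison rather than as a sharp inequality.

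\emph{Part (c).} This part is structural. Since every sibling conditions on the identical context $c_k=\psi(S_k)$, the conditional law of $R_j$ given $(S_k,u_{j,k})$ varies with $j$ only through $u_{j,k}$ and the independent downstream randomness. Thus any difference $R_j-R_{j'}$ decomposes as $[Q^\pi(S_k,u_{j,k})-Q^\pi(S_k,u_{j',k})]$ plus zero-mean downstream noise, with no upstream term. The token-assignment claim then follows directly from the objective (\ref{eq:per-turn-obj}): its sums range only over $t\le T_{j,k}$, the output tokens of $l_{j,k}$, with $c_k$ appearing only inside the conditioning of $\rho_{j,k,t}$. Consequently prefix tokens receive no gradient weight.
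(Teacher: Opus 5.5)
Your proposal follows the paper's proof essentially step for step. In (a) you use the same rewriting of $A^H_{j,k}$ and $A_i^{\mathrm{traj}}$ together with sibling (resp.\ trajectory) independence. In (b) you identify the same two $\sigma_V^2$ sources: the squared cross-state bias $\bigl(\tfrac{G-1}{G}\bigr)^2\sigma_V^2$ and the between-state baseline variance $\tfrac{G-1}{G^2}\sigma_V^2$, which the paper adds to get the explicit coefficient $\tfrac{G-1}{G}$. In (c) you give the same shared-state and gradient-support argument.

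One caution, which applies equally to the paper's closing ``strictly lower MSE'' remark. Because the sibling group has $G-1$ members rather than $G$, an exact computation against $A^\pi$ under the matched comparison gives $\mathrm{MSE}^{\mathrm{traj}}-\mathrm{MSE}^{H}=\tfrac{G-1}{G}\sigma_V^2+\tfrac{1}{G(G-1)}\bigl(\mathbb{E}[\operatorname{Var}(R\mid S_k,u)]-\mathbb{E}[A^\pi(S_k,u)^2]\bigr)$. The residual term can be negative, so your intermediate bound $\mathrm{MSE}^{\mathrm{traj}}-\mathrm{MSE}^{H}\ge\kappa\sigma_V^2$ for every $\sigma_V^2>0$ does not follow in general; it is exact, with $\kappa=\tfrac{m-1}{m}$, when both groups have the same size $m$. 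As you note yourself, the statement only asserts the additional $\sigma_V^2$ contribution, and your argument does establish that.
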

\end{tcolorbox}

\begin{proof}
We prove parts (a)–(c) of Theorem~2 as follows.

\paragraph{Proof of Theorem~2(a): local unbiasedness up to finite-group bias.}
Fix a turn $k$ and a shared boundary state $S_k$. RTPO forks $G{-}1$ sibling rollouts from the same environment snapshot $\mathrm{snap}_k$. Conditional on $S_k$, each sibling rollout independently samples its turn-$k$ macro-action and downstream continuation. Thus, for any two distinct sibling rollouts $j\neq j'$, the return $R_{j'}$ of sibling $j'$ is conditionally independent of the action $u_{j,k}$ sampled by sibling $j$:
\begin{equation}
R_{j'}\perp\!\!\!\perp u_{j,k}\mid S_k.
\label{eq:app-sibling-independence}
\end{equation}
Consequently,
\begin{equation}
\mathbb{E}[R_{j'}\mid S_k,u_{j,k}]
=
\mathbb{E}[R_{j'}\mid S_k]
=
V^\pi(S_k).
\label{eq:app-other-sibling-expectation}
\end{equation}
Meanwhile, for the own return of sibling $j$,
\begin{equation}
\mathbb{E}[R_j\mid S_k,u_{j,k}]
=
Q^\pi(S_k,u_{j,k}).
\label{eq:app-own-sibling-expectation}
\end{equation}

By definition,
\begin{align}
A^H_{j,k}
&=
R_j-\frac{1}{G-1}\sum_{r=1}^{G-1}R_r \notag \\
&=
\frac{G-2}{G-1}R_j
-
\frac{1}{G-1}\sum_{r\neq j}R_r.
\label{eq:app-sibling-decomp}
\end{align}
Taking conditional expectation given $(S_k,u_{j,k})$ and using Eq.~\eqref{eq:app-other-sibling-expectation} and Eq.~\eqref{eq:app-own-sibling-expectation},
\begin{align}
\mathbb{E}[A^H_{j,k}\mid S_k,u_{j,k}]
&=
\frac{G-2}{G-1}Q^\pi(S_k,u_{j,k})
-
\frac{G-2}{G-1}V^\pi(S_k) \notag \\
&=
\frac{G-2}{G-1}
\bigl(Q^\pi(S_k,u_{j,k})-V^\pi(S_k)\bigr) \notag \\
&=
\frac{G-2}{G-1}A^\pi(S_k,u_{j,k}).
\label{eq:app-sibling-bias-proof}
\end{align}
Therefore, the sibling estimator is locally unbiased up to the finite-group multiplicative factor $(G-2)/(G-1)$. Equivalently, the conditional bias relative to the true turn-level advantage is
\begin{equation}
\mathbb{E}[A^H_{j,k}\mid S_k,u_{j,k}]
-
A^\pi(S_k,u_{j,k})
=
-\frac{1}{G-1}A^\pi(S_k,u_{j,k}),
\label{eq:app-sibling-finite-bias}
\end{equation}
which is $O(1/G)$ and contains no term depending on $V^\pi(S_k)-\mu_R$.

We now contrast this with the trajectory-level estimator. In flat-trajectory training, the $G$ trajectories are independently sampled from the same prompt but may reach different boundary states by turn $k$. The trajectory-level advantage is
\begin{align}
A_i^{\mathrm{traj}}
&=
R_i-\frac{1}{G}\sum_{r=1}^{G}R_r \notag \\
&=
\frac{G-1}{G}R_i
-
\frac{1}{G}\sum_{r\neq i}R_r.
\label{eq:app-traj-decomp}
\end{align}
The own return satisfies
\begin{equation}
\mathbb{E}[R_i\mid S_{i,k},u_{i,k}]
=
Q^\pi(S_{i,k},u_{i,k}).
\label{eq:app-own-traj-expectation}
\end{equation}
For $r\neq i$, trajectory $r$ is generated by an independent upstream rollout and reaches its own boundary state $S_{r,k}$. Thus,
\begin{equation}
\mathbb{E}[R_r\mid S_{i,k},u_{i,k}]
=
\mathbb{E}_{S_{r,k}}\!\left[V^\pi(S_{r,k})\right]
=
\mu_R.
\label{eq:app-other-traj-expectation}
\end{equation}
Substituting Eq.~\eqref{eq:app-own-traj-expectation} and Eq.~\eqref{eq:app-other-traj-expectation} into Eq.~\eqref{eq:app-traj-decomp} gives
\begin{align}
\mathbb{E}[A_i^{\mathrm{traj}}\mid S_{i,k},u_{i,k}]
&=
\frac{G-1}{G}
\bigl(Q^\pi(S_{i,k},u_{i,k})-\mu_R\bigr) \notag \\
&=
\frac{G-1}{G}
\bigl(Q^\pi(S_{i,k},u_{i,k})-V^\pi(S_{i,k})\bigr)
+
\frac{G-1}{G}
\bigl(V^\pi(S_{i,k})-\mu_R\bigr) \notag \\
&=
\frac{G-1}{G}A^\pi(S_{i,k},u_{i,k})
+
\frac{G-1}{G}
\bigl(V^\pi(S_{i,k})-\mu_R\bigr).
\label{eq:app-traj-bias-proof}
\end{align}
The second term in Eq.~\eqref{eq:app-traj-bias-proof} depends only on the upstream boundary state $S_{i,k}$ and the global mean return $\mu_R$. It has no causal dependence on the current turn action $u_{i,k}$ and is precisely the cross-state contamination term. Therefore, the sibling estimator removes the upstream state-contamination term present in the trajectory-level estimator. This proves part~(a) of Theorem~2.

\paragraph{Proof of Theorem~\ref{thm:advantage-app}(b): reduced value estimation error.}
We compare the error of each estimator against its corresponding true turn-level advantage. For the sibling estimator, the target is $A^\pi(S_k,u_{j,k})$; for the trajectory estimator, the target is $A^\pi(S_{i,k},u_{i,k})$. The trajectory estimator contains the additional cross-state term
\begin{equation}
C_{i,k}
:=
\frac{G-1}{G}\bigl(V^\pi(S_{i,k})-\mu_R\bigr),
\label{eq:app-cross-state-term}
\end{equation}
whereas the sibling estimator does not contain such a term. Marginalizing over the boundary-state distribution,
\begin{equation}
\mathbb{E}[C_{i,k}]
=
0,
\qquad
\operatorname{Var}(C_{i,k})
=
\left(\frac{G-1}{G}\right)^2
\operatorname{Var}_{S_k}\!\left[V^\pi(S_k)\right]
=
\left(\frac{G-1}{G}\right)^2\sigma_V^2.
\label{eq:app-cross-state-var}
\end{equation}
Thus, cross-trajectory baselines introduce an additional MSE component proportional to $\sigma_V^2$, while state-matched sibling baselines remove it.

In addition to the squared-bias contribution above, the trajectory-level baseline $\bar{R}=\frac{1}{G}\sum_{r\neq i}R_r$ introduces further variance from cross-state effects. Conditional on $(S_{i,k},u_{i,k})$, each $R_r$ ($r\neq i$) is independent of the conditioning, so its conditional variance equals its unconditional variance:
\begin{equation}
\mathrm{Var}(R_r\mid S_{i,k},u_{i,k})
=
\mathrm{Var}(R_r)
=
\underbrace{\mathbb{E}_{S_{r,k}}\!\left[\mathrm{Var}(R_r\mid S_{r,k})\right]}_{\displaystyle\bar{\sigma}_{\mathrm{down}}^2}
+\;\sigma_V^2,
\label{eq:app-Rr-total-var}
\end{equation}
where the $\sigma_V^2$ term arises because $S_{r,k}$ varies across trajectories. This contributes $\frac{G-1}{G^2}\sigma_V^2$ to the conditional variance of $A_i^{\mathrm{traj}}$. By contrast, in the sibling estimator, all returns share the same boundary state $S_k$, so $\mathrm{Var}(R_r\mid S_k)$ is a purely within-state quantity and contains no $\sigma_V^2$ component.

More explicitly, the mean squared error can be decomposed as
\begin{equation}
\operatorname{MSE}(\hat{A})
=
\operatorname{Var}(\hat{A})
+
\operatorname{Bias}(\hat{A})^2.
\label{eq:app-mse-decomp}
\end{equation}
For the trajectory estimator, the bias contains the cross-state term $C_{i,k}$ from Eq.~\eqref{eq:app-cross-state-term}. For the sibling estimator, the only systematic bias from part~(a) is the finite-group term
\begin{equation}
-\frac{1}{G-1}A^\pi(S_k,u_{j,k}),
\label{eq:app-finite-group-bias}
\end{equation}
which is $O(1/G)$. Therefore, under a matched downstream-noise comparison, the total $\sigma_V^2$ coefficient in the trajectory estimator's marginal MSE combines the squared-bias contribution (Eq.~\eqref{eq:app-cross-state-var}) with the baseline-variance contribution (Eq.~\eqref{eq:app-Rr-total-var}):
\begin{equation}
\underbrace{\left(\frac{G-1}{G}\right)^{\!2}}_{\mathrm{Bias}^2}
+\;
\underbrace{\frac{G-1}{G^2}}_{\mathrm{Var}}
\;=\;
\frac{(G-1)^2+(G-1)}{G^2}
\;=\;
\frac{G-1}{G}.
\label{eq:app-sigma-coeff}
\end{equation}
The sibling estimator incurs no $\sigma_V^2$ contribution from either source. The leading MSE difference is thus:
\begin{equation}
\operatorname{MSE}(A_i^{\mathrm{traj}})
-
\operatorname{MSE}(A^H_{j,k})
=
\frac{G-1}{G}\,\sigma_V^2
+
\Delta_{\mathrm{down}}
-
O(1/G^2),
\label{eq:app-mse-gap}
\end{equation}
where $\Delta_{\mathrm{down}}\ge 0$ collects differences in downstream Monte Carlo noise between the two estimators under matched conditions. Hence, whenever $\sigma_V^2>0$, the state-matched sibling estimator has strictly lower MSE. This proves part~(b) of Theorem~2.

\paragraph{Proof of Theorem~\ref{thm:advantage-app}(c): state-matched causal actions.}
All sibling rollouts are forked from the same boundary state $S_k$. Therefore, conditional on $S_k$, sibling rollout $j$ and sibling rollout $j'$ differ only in their sampled turn-$k$ macro-actions and their independent downstream randomness:
\begin{equation}
(u_{j,k},\xi_j)
\quad\text{versus}\quad
(u_{j',k},\xi_{j'}),
\label{eq:app-sibling-causal-diff}
\end{equation}
where $\xi_j$ denotes the downstream randomness of sibling $j$. Since the shared prefix state $S_k$ is identical across siblings, any systematic difference in conditional expected return is attributable to the sampled turn-$k$ macro-action:
\begin{equation}
\mathbb{E}[R_j-R_{j'}\mid S_k,u_{j,k},u_{j',k}]
=
Q^\pi(S_k,u_{j,k})-Q^\pi(S_k,u_{j',k}).
\label{eq:app-causal-return-diff}
\end{equation}
The downstream randomness contributes to Monte Carlo noise but does not create an upstream state bias, because the upstream state is shared and fixed.

Finally, RTPO assigns the resulting advantage only to the output tokens of turn $k$. For sibling rollout $j$, these tokens are
\begin{equation}
l_{j,k}
=
(a_{j,k,1},\ldots,a_{j,k,T_{j,k}}).
\label{eq:app-turn-output}
\end{equation}
The shared prefix $c_k=\psi(S_k)$ is used only as the conditioning input and is excluded from the gradient. Equivalently, the gradient support of the turn-$k$ objective satisfies
\begin{equation}
\operatorname{supp}(\nabla_\theta J_k)
\subseteq
\{(j,k,t): j\in\{1,\ldots,G-1\},\; t\in\{1,\ldots,T_{j,k}\}\}.
\label{eq:app-gradient-support}
\end{equation}
Thus, prefix tokens receive zero gradient, and the credit signal is assigned only to the sampled turn-$k$ output action. This proves part~(c) of Theorem~2, and completes the proof of Theorem~\ref{thm:advantage-app}.
\end{proof}

\paragraph{Interpretation.}
The proof shows that the key difference between RTPO and flat-trajectory training is the baseline state. The sibling estimator compares alternative turn-$k$ actions from the same boundary state $S_k$, so the baseline estimates the local value $V^\pi(S_k)$. In contrast, a flat-trajectory baseline compares returns from trajectories that may have reached different boundary states by turn $k$, so it is centered around the global mean $\mu_R$. The resulting term $V^\pi(S_{i,k})-\mu_R$ is an upstream state-contamination term: it can dominate the true turn-level advantage even though it is unrelated to the current turn action. RTPO removes this term by constructing state-matched sibling rollouts and assigning the resulting advantage only to the output tokens of the current turn.

\subsection{Proof of Theorem~3: On-Policy Continuation under Asynchronous Turns}
\label{app:onpolicy-proof}

We formalize Theorem~3 from Sec~\ref{sec:onpolicy} before presenting the proof. The purpose is to show that RTPO avoids the off-policy drift induced by stale downstream rollouts. In asynchronous multi-turn training, a trajectory generated under an old policy may later be evaluated under an updated policy, which would require a long-horizon trajectory-level importance-sampling correction. RTPO avoids this issue by regenerating sibling continuations on-policy at each reverse-order turn.

\paragraph{Setup.}
Let $\theta_0$ denote the rollout parameters at the beginning of a training iteration, and let $\theta_{>k}$ denote the current parameters available at the start of turn $k$, after downstream turns $k+1,\ldots,K-1$ have already been optimized. A stale-rollout alternative would retain sibling continuations generated under $\pi_{\theta_0}$ and correct them using the trajectory-level importance-sampling weight:
\begin{equation}
\omega_j^{\mathrm{old}}
=
\prod_{h=k+1}^{K-1}
\prod_{t=1}^{T_{j,h}}
\frac{
\pi_{\theta_{>k}}(a_{j,h,t}\mid c_{j,h},a_{j,h,<t})
}{
\pi_{\theta_0}(a_{j,h,t}\mid c_{j,h},a_{j,h,<t})
}.
\label{eq:app-omega-old}
\end{equation}
In contrast, RTPO synchronizes the latest parameters $\theta_{>k}$ to the inference engine at the start of turn $k$, forks $G{-}1$ sibling rollouts from the shared boundary state $S_k$, and samples each sibling rollout $j\in\{1,\ldots,G{-}1\}$ using $\pi_{\theta_{>k}}$ from turn $k$ until termination. We write $u_{j,k}\equiv l_{j,k}$ for the sampled turn-$k$ macro-action of sibling $j$, and $R_j$ for its terminal return. The Monte Carlo turn-level Q-value estimate is
\begin{equation}
\hat{Q}_{j,k}
=
R_j
=
r_{j,k}
+
\gamma_H^{\tau_{j,k}}
\hat{F}_{j,k}^{\pi_{\theta_{>k}}},
\label{eq:app-onpolicy-Q}
\end{equation}
where $\hat{F}_{j,k}^{\pi_{\theta_{>k}}}$ is the sampled downstream continuation value generated under the same current policy $\pi_{\theta_{>k}}$. The corresponding sibling advantage is
\begin{equation}
A^H_{j,k}
=
\hat{Q}_{j,k}-\hat{V}_k,
\qquad
\hat{V}_k
=
\frac{1}{G-1}
\sum_{r=1}^{G-1}
\hat{Q}_{r,k}.
\label{eq:app-onpolicy-adv}
\end{equation}

\begin{tcolorbox}[
    colback=gray!5,
    colframe=black!60,
    title=On-Policy Continuation without Drift,
    fonttitle=\bfseries,
    breakable
]
\begin{theorem}[\textbf{On-Policy Continuation under Asynchronous Turns}]
\label{thm:onpolicy-app}
Consider turn $k$ in the reverse-order training procedure of RTPO. At the start of turn $k$, RTPO synchronizes the current downstream policy parameters $\theta_{>k}$ to the inference engine, forks $G{-}1$ sibling rollouts from the shared boundary state $S_k$, and continues each sibling rollout $j$ to termination under the same policy $\pi_{\theta_{>k}}$. Then the following two properties hold:
\begin{enumerate}[label=(\alph*),leftmargin=*,itemsep=4pt]
    \item \textbf{Drift-free on-policy continuation.}
    Each sibling's terminal return $R_j$ is an unbiased Monte Carlo estimate of the current-policy turn-level Q-value:
    \begin{equation}
    \mathbb{E}\!\left[\hat{Q}_{j,k}\mid S_k,u_{j,k}\right]
    =
    \tilde{Q}_k^{\pi_{\theta_{>k}}}(S_k,u_{j,k}).
    \label{eq:app-onpolicy-unbiased}
    \end{equation}
    Moreover, because the sampling policy and the evaluation policy coincide throughout the sibling continuation, the trajectory-level importance-sampling weight satisfies
    \begin{equation}
    \omega_j
    =
    \prod_{h=k+1}^{K-1}
    \prod_{t=1}^{T_{j,h}}
    \frac{
    \pi_{\theta_{>k}}(a_{j,h,t}\mid c_{j,h},a_{j,h,<t})
    }{
    \pi_{\theta_{>k}}(a_{j,h,t}\mid c_{j,h},a_{j,h,<t})
    }
    \equiv 1.
    \label{eq:app-onpolicy-is-one}
    \end{equation}

    \item \textbf{Dynamic error reduction in advantage estimation.}
    Under bounded binary or normalized rewards, the conditional variance of the Monte Carlo Q-value estimator is controlled by the current-policy success probability. In the binary case, if
    \begin{equation}
    p_{j,k}
    :=
    \tilde{Q}_k^{\pi_{\theta_{>k}}}(S_k,u_{j,k})
    \in[0,1],
    \label{eq:app-onpolicy-p}
    \end{equation}
    then
    \begin{equation}
    \operatorname{Var}(\hat{Q}_{j,k}\mid S_k,u_{j,k})
    =
    p_{j,k}(1-p_{j,k}).
    \label{eq:app-onpolicy-var}
    \end{equation}
    Therefore, when reverse-order training improves downstream policies so that $p_{j,k}$ moves away from the high-uncertainty region around $1/2$, the Q-value estimation variance decreases. Since $A^H_{j,k}$ is computed from the sibling Q-value estimates, this improves the signal-to-noise ratio of the resulting turn-level advantage estimator.
\end{enumerate}
\end{theorem}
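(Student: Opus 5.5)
For part~(a), I would first pin down the conditional law of a sibling rollout. Conditional on the shared boundary state $S_k$ (restored exactly from $\mathrm{snap}_k$) and the sampled macro-action $u_{j,k}$, everything downstream is generated by the same mechanism. The environment kernel $P_H$ produces $f_{j,k}$ and $S_{j,k+1}$, and then the policy $\pi_{\theta_{>k}}$ acts on the recorded contexts $c_{j,h}$ for $h=k+1,\ldots,K-1$. So the joint law of $(r_{j,k},\tau_{j,k},S_{j,k+1},\text{downstream trajectory})$ is exactly the law that defines $\tilde{Q}_k^{\pi_{\theta_{>k}}}$ in Eq.~\eqref{eq:Q-tilde} with $F_k^{\pi_{\theta_{>k}}}$ from Eq.~\eqref{eq:Fk-def}.

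With this in hand, I would apply the tower property and condition first on $S_{j,k+1}$ (together with $r_{j,k}$ and $\tau_{j,k}$). The inner expectation of $\hat{F}_{j,k}^{\pi_{\theta_{>k}}}$ is then $F_k^{\pi_{\theta_{>k}}}(S_{j,k+1})$ by definition. Taking the outer expectation gives Eq.~\eqref{eq:app-onpolicy-unbiased}. Boundedness from (A9) and the finite horizon (A2) ensure all expectations exist; I would reuse the bound in Eq.~\eqref{eq:proof-F-bound}.

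The identity $\omega_j\equiv 1$ follows next. Every token in turns $h>k$ is sampled from $\pi_{\theta_{>k}}(\cdot\mid c_{j,h},a_{j,h,<t})$, and Eq.~\eqref{eq:is-match} guarantees that the same context $c_{j,h}$ is used in rollout and evaluation. Hence numerator and denominator coincide factor by factor. The only subtlety is that each sampled token has positive probability, so every factor is well defined. I would contrast this with $\omega_j^{\mathrm{old}}$ in Eq.~\eqref{eq:app-omega-old}, which is needed only when $\theta_0\ne\theta_{>k}$.

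For part~(b), with binary rewards $R_j\in\{0,1\}$, I would show that $\hat{Q}_{j,k}$ given $(S_k,u_{j,k})$ is Bernoulli with mean $p_{j,k}$ by part~(a). Its variance is then $p_{j,k}(1-p_{j,k})$, which is maximized at $p=1/2$ and decreases monotonically as $|p-1/2|$ grows. For normalized rewards in $[0,1]$, I would use $\operatorname{Var}\le \mathbb{E}[R^2]-p^2\le p(1-p)$ as an upper bound instead.

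To transfer this to $A^H_{j,k}$, I would use the decomposition in Eq.~\eqref{eq:app-sibling-decomp} and the conditional independence of siblings. This gives
$\operatorname{Var}(A^H_{j,k}\mid S_k,u_{j,k})=\bigl(\tfrac{G-2}{G-1}\bigr)^2 p_{j,k}(1-p_{j,k})+\tfrac{G-2}{(G-1)^2}\bar{v}_k$,
where $\bar{v}_k$ is the marginal Bernoulli variance of the other siblings' returns. Both terms shrink as success probabilities move away from $1/2$, while the conditional mean stays $\tfrac{G-2}{G-1}A^\pi$ by Theorem~\ref{thm:advantage-app}(a). So the ratio of mean to standard deviation improves.

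The main obstacle is making ``reverse-order training improves downstream policies so that $p_{j,k}$ moves away from $1/2$'' precise. This is not implied by optimization in general: improving from $p=0.1$ to $p=0.5$ actually raises the variance. I would therefore state part~(b) conditionally. I would not claim that training always drives $p$ outward; instead I would prove the monotonicity of $p(1-p)$ in $|p-1/2|$ and note that the variance decreases whenever the induced success probability moves outward. Theorem~\ref{thm:recursive-app}(b) can be invoked for the limiting case: with deterministic recursively optimal downstream policies and deterministic success, $p\in\{0,1\}$ and the variance vanishes.
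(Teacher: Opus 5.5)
Your proposal follows the paper's proof essentially step for step. Part~(a) uses the tower property through $S_{j,k+1}$ and factor-by-factor cancellation in $\omega_j$. Part~(b) uses the Bernoulli variance $p(1-p)=\tfrac14-(p-\tfrac12)^2$ and the Bhatia--Davis bound; your $\mathbb{E}[R^2]\le\mathbb{E}[R]$ step is exactly that bound on $[0,1]$. You also state the outward-movement hypothesis conditionally, as the paper does.

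You go beyond the paper with the explicit formula $\operatorname{Var}(A^H_{j,k}\mid S_k,u_{j,k})=\bigl(\tfrac{G-2}{G-1}\bigr)^2p_{j,k}(1-p_{j,k})+\tfrac{G-2}{(G-1)^2}\bar v_k$. It is correct and sharper than the paper's one-line ``centered function'' remark. However, the two inferences you draw from it need hypotheses you have not stated, and the paper's argument silently needs them too.

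\emph{Both terms shrinking.} In the binary case $\bar v_k=V_k(1-V_k)$, where $V_k=V^{\pi_{\theta_{>k}}}(S_k)$ is the \emph{marginal} success probability. By the law of total variance, $\bar v_k=\mathbb{E}_u[p_u(1-p_u)]+\operatorname{Var}_u(p_u)$, and the second term grows when per-action success probabilities spread toward $0$ and $1$. For example, going from $p_u\equiv 0.1$ to $p_u\in\{0,1\}$ with equal policy mass moves every $p_u$ away from $\tfrac12$. Yet it raises $\bar v_k$ from $0.09$ to $0.25$, and for $G=3$ it raises the total conditional variance of $A^H_{j,k}$ from $0.045$ to $0.0625$. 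So ``both terms shrink'' requires $V_k$ itself to move away from $\tfrac12$, which does not follow from per-action outward movement.

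\emph{Signal-to-noise ratio.} The conditional mean $\tfrac{G-2}{G-1}A^{\pi_{\theta_{>k}}}(S_k,u_{j,k})$ does not stay fixed as $\theta_{>k}$ changes. Suppose every $p_u=1-\epsilon_u$ with the $\epsilon_u$ all of order $\epsilon\to0$, which is exactly the improving-downstream regime. Then the standard deviation is of order $\sqrt{\epsilon}$ while the mean is $O(\epsilon)$, so the ratio tends to $0$. The signal-to-noise conclusion therefore needs an extra assumption that $|A^{\pi_{\theta_{>k}}}|$ does not shrink relative to the standard deviation. With these two hypotheses added, your argument is a rigorous version of the paper's.
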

\end{tcolorbox}

\begin{proof}
We prove parts (a) and (b) of Theorem~3 as follows.

\paragraph{Proof of Theorem~\ref{thm:onpolicy-app}(a): drift-free on-policy continuation.}
Fix a turn $k$ and a sibling rollout $j$. At the start of turn $k$, RTPO synchronizes the current downstream policy parameters $\theta_{>k}$ to the inference engine. The sibling rollout is forked from the shared boundary state $S_k$, samples the turn-$k$ macro-action $u_{j,k}\equiv l_{j,k}$, and then continues through turns $k+1,\ldots,K-1$ under the same policy $\pi_{\theta_{>k}}$.

By the definition of the turn-level augmented action value,
\begin{equation}
\tilde{Q}_k^{\pi_{\theta_{>k}}}(S_k,u_{j,k})
=
\mathbb{E}\!\left[
r_{j,k}
+
\gamma_H^{\tau_{j,k}}
F_k^{\pi_{\theta_{>k}}}(S_{j,k+1})
\;\middle|\;
S_k,u_{j,k}
\right],
\label{eq:app-onpolicy-Q-def}
\end{equation}
where $S_{j,k+1}$ is the next boundary state reached by sibling $j$ after executing $u_{j,k}$. Since the sampled downstream continuation $\hat{F}_{j,k}^{\pi_{\theta_{>k}}}$ is generated by rolling out the same current policy $\pi_{\theta_{>k}}$ from $S_{j,k+1}$ to termination, it is an unbiased Monte Carlo draw from the downstream continuation value:
\begin{equation}
\mathbb{E}\!\left[
\hat{F}_{j,k}^{\pi_{\theta_{>k}}}
\mid
S_{j,k+1}
\right]
=
F_k^{\pi_{\theta_{>k}}}(S_{j,k+1}).
\label{eq:app-onpolicy-F-unbiased}
\end{equation}
Substituting Eq.~\eqref{eq:app-onpolicy-F-unbiased} into Eq.~\eqref{eq:app-onpolicy-Q} gives
\begin{align}
\mathbb{E}\!\left[\hat{Q}_{j,k}\mid S_k,u_{j,k}\right]
&=
\mathbb{E}\!\left[
r_{j,k}
+
\gamma_H^{\tau_{j,k}}
\hat{F}_{j,k}^{\pi_{\theta_{>k}}}
\;\middle|\;
S_k,u_{j,k}
\right] \notag \\
&=
\mathbb{E}\!\left[
r_{j,k}
+
\gamma_H^{\tau_{j,k}}
F_k^{\pi_{\theta_{>k}}}(S_{j,k+1})
\;\middle|\;
S_k,u_{j,k}
\right] \notag \\
&=
\tilde{Q}_k^{\pi_{\theta_{>k}}}(S_k,u_{j,k}).
\label{eq:app-onpolicy-unbiased-proof}
\end{align}
Therefore, each sibling terminal return provides an unbiased Monte Carlo estimate of the turn-level Q-value under the current downstream policy.

Next, because the sibling is both sampled and evaluated under the same policy $\pi_{\theta_{>k}}$, the full trajectory-level importance-sampling weight is
\begin{equation}
\omega_j
=
\prod_{h=k+1}^{K-1}
\prod_{t=1}^{T_{j,h}}
\frac{
\pi_{\theta_{>k}}(a_{j,h,t}\mid c_{j,h},a_{j,h,<t})
}{
\pi_{\theta_{>k}}(a_{j,h,t}\mid c_{j,h},a_{j,h,<t})
}
=
1.
\label{eq:app-onpolicy-omega-proof}
\end{equation}
Thus, no trajectory-level IS correction is required. This proves part~(a) of Theorem~3.

\paragraph{Proof of Theorem~\ref{thm:onpolicy-app}(b): dynamic error reduction in advantage estimation.}
We first consider the binary-reward case, where $R_j\in\{0,1\}$. Conditional on $(S_k,u_{j,k})$, define
\begin{equation}
p_{j,k}
=
\mathbb{P}_{\pi_{\theta_{>k}}}(R_j=1\mid S_k,u_{j,k})
=
\tilde{Q}_k^{\pi_{\theta_{>k}}}(S_k,u_{j,k}).
\label{eq:app-onpolicy-success-prob}
\end{equation}
Then $\hat{Q}_{j,k}=R_j$ is a Bernoulli random variable with mean $p_{j,k}$, and therefore
\begin{equation}
\operatorname{Var}(\hat{Q}_{j,k}\mid S_k,u_{j,k})
=
p_{j,k}(1-p_{j,k}).
\label{eq:app-onpolicy-bernoulli-var}
\end{equation}
The function $f(p)=p(1-p)$ is maximized at $p=1/2$ and decreases as $p$ moves toward either $0$ or $1$. Equivalently,
\begin{equation}
p(1-p)
=
\frac{1}{4}
-
\left(p-\frac{1}{2}\right)^2.
\label{eq:app-onpolicy-var-shape}
\end{equation}
Therefore, if reverse-order training improves the downstream continuation policy so that the induced success probability moves away from the high-uncertainty region around $1/2$, then the conditional variance of the Q-value estimator decreases. Formally,
\begin{equation}
\left|p_{j,k}^{\mathrm{new}}-\frac{1}{2}\right|
>
\left|p_{j,k}^{\mathrm{old}}-\frac{1}{2}\right|
\quad
\Longrightarrow
\quad
p_{j,k}^{\mathrm{new}}(1-p_{j,k}^{\mathrm{new}})
<
p_{j,k}^{\mathrm{old}}(1-p_{j,k}^{\mathrm{old}}).
\label{eq:app-onpolicy-var-reduction}
\end{equation}

For normalized rewards $R_j\in[0,1]$, the same statement holds as a bounded-variance control rather than an exact Bernoulli identity. In particular, by the Bhatia--Davis bound for random variables supported on $[0,1]$,
\begin{equation}
\operatorname{Var}(R_j\mid S_k,u_{j,k})
\le
\mathbb{E}[R_j\mid S_k,u_{j,k}]
\bigl(1-\mathbb{E}[R_j\mid S_k,u_{j,k}]\bigr).
\label{eq:app-onpolicy-bounded-var}
\end{equation}
Thus, moving the conditional mean away from the high-uncertainty middle region also reduces the worst-case variance bound for normalized returns.

Finally, RTPO computes the turn-level advantage by subtracting the sibling baseline:
\begin{equation}
A^H_{j,k}
=
\hat{Q}_{j,k}-\hat{V}_k,
\qquad
\hat{V}_k
=
\frac{1}{G-1}
\sum_{r=1}^{G-1}
\hat{Q}_{r,k}.
\label{eq:app-onpolicy-adv-repeat}
\end{equation}
Since $A^H_{j,k}$ is a centered function of the sibling Q-value estimates, reducing the Monte Carlo noise in $\hat{Q}_{j,k}$ reduces the noise entering the advantage estimator. Consequently, as reverse-order training improves downstream policies and the on-policy continuation values become more confident, the signal-to-noise ratio of $A^H_{j,k}$ improves. This proves part~(b) of Theorem~3, and completes the proof of Theorem~\ref{thm:onpolicy-app}.
\end{proof}

\paragraph{Comparison with stale trajectory-level IS correction.}
The drift-free property above should be contrasted with a stale-rollout alternative that reuses continuations generated under $\pi_{\theta_0}$ and then applies the trajectory-level IS weight $\omega_j^{\mathrm{old}}$ in Eq.~\eqref{eq:app-omega-old}. Let
\begin{equation}
N_j
=
\sum_{h=k+1}^{K-1}T_{j,h}
\label{eq:app-horizon-N}
\end{equation}
denote the number of tokens in the continuation from turn $k$ to termination. If the per-token divergence between the current policy and the stale rollout policy is nonzero along this continuation, the variance of the full product IS weight can grow multiplicatively with $N_j$. To see this, suppose that for each token position $(h,t)$, the conditional second moment of the token ratio
\begin{equation}
\rho_{j,h,t}^{\mathrm{old}}
=
\frac{
\pi_{\theta_{>k}}(a_{j,h,t}\mid c_{j,h},a_{j,h,<t})
}{
\pi_{\theta_0}(a_{j,h,t}\mid c_{j,h},a_{j,h,<t})
}
\label{eq:app-old-token-ratio}
\end{equation}
satisfies
\begin{equation}
\mathbb{E}_{\pi_{\theta_0}}\!\left[
\left(\rho_{j,h,t}^{\mathrm{old}}\right)^2
\mid
c_{j,h},a_{j,h,<t}
\right]
\ge
1+\delta
\qquad
\text{for some } \delta>0.
\label{eq:app-ratio-second-moment}
\end{equation}
Then the second moment of the product weight scales as
\begin{equation}
\mathbb{E}_{\pi_{\theta_0}}\!\left[
\left(\omega_j^{\mathrm{old}}\right)^2
\right]
=
\mathbb{E}_{\pi_{\theta_0}}\!\left[
\prod_{h=k+1}^{K-1}
\prod_{t=1}^{T_{j,h}}
\left(\rho_{j,h,t}^{\mathrm{old}}\right)^2
\right]
\gtrsim
(1+\delta)^{N_j},
\label{eq:app-old-is-second-moment}
\end{equation}
up to the usual conditioning on autoregressive histories. Since $\mathbb{E}_{\pi_{\theta_0}}[\omega_j^{\mathrm{old}}]=1$, this implies
\begin{equation}
\operatorname{Var}_{\pi_{\theta_0}}(\omega_j^{\mathrm{old}})
=
\mathbb{E}_{\pi_{\theta_0}}\!\left[
\left(\omega_j^{\mathrm{old}}\right)^2
\right]-1
\gtrsim
(1+\delta)^{N_j}-1.
\label{eq:app-old-is-var-growth}
\end{equation}
This illustrates the long-horizon instability of trajectory-level IS correction: even a small nonzero per-token policy mismatch can compound into a high-variance product over many tokens. Per-token PPO clipping does not remove this mismatch at the trajectory level, because clipping and multiplication do not commute:
\begin{equation}
\prod_{h,t}\operatorname{clip}\!\left(\rho_{j,h,t}^{\mathrm{old}},1-\epsilon,1+\epsilon\right)
\neq
\operatorname{clip}\!\left(
\prod_{h,t}\rho_{j,h,t}^{\mathrm{old}},
1-\epsilon,
1+\epsilon
\right).
\label{eq:app-clipping-noncommute}
\end{equation}
Therefore, stale-rollout correction remains fundamentally different from RTPO's on-policy sibling continuation, where the corresponding weight is $\omega_j\equiv 1$.

\paragraph{Interpretation.}
Theorem~3 shows why RTPO regenerates sibling continuations on-policy instead of reusing stale downstream rollouts. In asynchronous multi-turn training, stale continuations estimate values under outdated downstream policies and would require a long-horizon trajectory-level IS correction. RTPO avoids this by synchronizing $\theta_{>k}$ before sibling generation and rolling out each sibling to termination under the same current policy. As a result, the full trajectory IS weight is one, the Q-value estimate targets the current downstream policy, and the resulting turn-level advantage avoids policy-drift contamination.

\section{Experimental Setup}
\label{Experimental} 
\paragraph{Base Models.}
We use Qwen3 models~\citep{yang2025qwen3} as the backbone for RTPO and all baselines. These models support context lengths of up to 32,768 tokens, which is sufficient to accommodate the multi-turn interaction histories required by long-horizon agentic RL tasks. Since Qwen3 natively supports a thinking inference mode, we enable this mode
consistently across all multi-turn agentic RL experiments to ensure that the model generates complete tool-integrated reasoning trajectories during rollout. We use Qwen3-8B as the main backbone because search-based agentic tasks
require strong base-model capabilities for multi-turn evidence acquisition, information integration, and tool-augmented reasoning. This
choice is supported by our main results in Sec.~\ref{results:main}, the rollout--training consistency analysis in Sec.~\ref{result:rtc}, and the policy-drift analysis in Sec.~\ref{sec:ablation-onpolicy}. To isolate
the effect of turn-level credit assignment in Sec.~\ref{result:ca}, we use the smaller Qwen3-4B model for mathematical reasoning tasks. This is because credit assignment in these tasks depends more directly on the model's intrinsic reasoning ability than on external knowledge retrieval, making the gains from turn-level credit assignment more discernible.


\paragraph{Baselines.} We compare RTPO with two categories of state-of-the-art multi-turn agentic RL methods. The first category consists of trajectory-only policy optimization methods, including GRPO \citep{shao2024deepseekmath} and. GRPO uses token-level importance ratios with group-relative advantage estimation. The second category includes turn-level and tree-based policy optimization methods, including ARPO \citep{dong2026agentic}, Tree-GRPO \citep{ji2026tree}, and SeeUPO \citep{hu2026seeupo}. ARPO performs entropy-driven adaptive branching at uncertain tool-call nodes and estimates advantages separately for shared-prefix and branch-specific tokens. Tree-GRPO represents multi-turn agent interaction as a tree and constructs group-relative advantages at both intra-tree and inter-tree levels by sharing prefixes. SeeUPO treats each turn as an independent agent and performs sequential per-turn updates under a heterogeneous multi-agent learning, with a theoretical guarantee of monotonic improvement.

\paragraph{Datasets.}
To ensure a fair comparison across methods in multi-turn agentic
scenarios, we consider two representative tool-use tasks: mathematical
reasoning and knowledge reasoning from search. Both require agents to interact with external tools, perform multi-turn reasoning, and adapt their actions based on intermediate feedback.

For the mathematical reasoning task, we use the MATH
dataset~\citep{hendrycksmath2021} for training, which covers
challenging multi-step reasoning problems spanning algebra, geometry,
number theory, probability, and other topics. The task requires agents
to decompose complex problems, invoke external Python computation
tools when necessary, and integrate intermediate results into the
final answer, where successful solutions often depend on iterative
calculation, verification, and correction. At the evaluation stage, we
test generalization at two difficulty levels: (1)~standard
mathematical reasoning benchmarks, including
GSM8K~\citep{cobbe2021gsm8k} and MATH-500~\citep{lightman2023let};
and (2)~competition-level benchmarks, including
AMC'23~\citep{maa_amc_2023}, AIME'24~\citep{maa_aime_2024},
AIME'25~\citep{maa_aime_2025}, and OE-Math~\citep{he2024olympiadbench}, which feature problems that demand extended chains of tool-augmented reasoning and
advanced problem-solving capabilities. Since no training data are
available for the competition-level benchmarks, all evaluations are
conducted in a zero-shot setting.

For the knowledge reasoning task on web search, we adopt the hard-search training set
constructed by ARPO~\citep{dong2026agentic}, consisting of 1,000 high-difficulty search samples drawn from two open-source deep-search
data sources: SimpleDeepSearcher~\citep{sun2025simpledeepsearcher} and
WebSailor~\citep{li2025websailor}. These samples require extensive web
retrieval, multi-source evidence integration, long-context reasoning,
and frequent tool calls, providing a rigorous testbed for evaluating the model stability and sample efficiency of multi-turn agentic RL. On this
task, we evaluate knowledge-intensive multi-hop question answering on
HotpotQA~\citep{yang2018hotpotqa} and
2WikiMultiHopQA~\citep{xanh2020_2wikimultihop}, following the ARPO
evaluation protocol~\citep{dong2026agentic} with LLM-as-Judge scoring based on
Qwen2.5-72B-Instruct and report F1 scores.
In addition, to examine the effect of on-policy continuation, we compare default RTPO with its off-policy variant, and conduct additional evaluation on four challenging deep-search benchmarks: GAIA~\citep{mialon2023gaia}, which evaluates general
AI-assistant capabilities across three difficulty levels
(Lv.1--Lv.3); WebWalkerQA~\citep{wu2025webwalker}, which focuses on interactive
web navigation and multi-hop question answering;
Humanity's Last Exam~\citep{phan2025humanity}, which covers expert-level problems
in sciences, engineering, and humanities; and xBench~\citep{chen2025xbench},
which evaluates cross-lingual deep-search capability. Since the
training data are drawn exclusively from the hard-search source, all
four benchmarks serve as held-out evaluation sets. We report Pass@1
with sampling temperature set to $0.6$ and top-$p$ set to $0.95$.

\paragraph{Configurations.}
All methods are implemented on top of the VeRL
framework~\citep{sheng2024hybridflow}, with vLLM~\citep{kwon2023efficient} for rollout
generation and FSDP~\citep{zhao2023pytorch} for post-training. All experiments
are conducted on $8\times$ NVIDIA A100 GPUs. Unless otherwise
specified, the training batch size is $64$ and the maximum single-turn
response length is $4096$ tokens. The maximum interaction horizon is
set according to task type: $K=3$ turns for mathematical reasoning and
$K=6$ turns for web search. The learning rate is fixed at
$1\times10^{-6}$ with the AdamW~\citep{loshchilov2017decoupled} optimizer and a weight
decay of $0.01$. For branching-based baselines (ARPO and TreeGRPO), we
follow the hyperparameters recommended in their original papers,
including an entropy threshold of $0.4$ and an initial sampling size
of $8$. Full hyperparameter configurations are provided in
Appendix~\ref{Training Configurations}. 

\paragraph{Evaluation Metrics.}
We evaluate RTPO and all baselines along multiple dimensions. For
overall performance, we use Pass@1 accuracy as the primary indicator
of final policy quality after optimization (Sec~\ref{results:main}). 
We also measure the average number of tool calls to assess behavioral
differences across methods under a fixed compute budget. For training stability, rollout--training consistency is evaluated using the log-probability ratio and KL divergence, as described in Sec.~\ref{sec:stability}. We further conduct controlled experiments to examine whether RTPO's turn-level credit assignment (Sec~\ref{sec:credit-diagnostic}) and on-policy continuation (Sec~\ref{sec:ablation-onpolicy}) are consistent with the theoretical analysis. Each experiment is repeated three times to mitigate randomness, and we report the average performance.

\section{Implementation Details}
\label{imple datials}
\subsection{Pseudocode}
The pseudocode for our proposed RTPO is shown in Algorithm~\ref{alg:RTPO}.

\begin{algorithm}[h]
\caption{Reverse-Turn Policy Optimization (RTPO)}
\label{alg:RTPO}
\begin{algorithmic}[1]
\Require Query dataset $\mathcal{D}$; policy $\pi_\theta$; inference 
engine $\mathcal{E}$; sibling count $G$; PPO clip $\epsilon$; max 
turns $K$; observation map $\psi$; PPO epochs $E$.
\Ensure Updated parameters $\theta$.

\Statex \textbf{Phase 1: Trunk Rollout}
\State For each query $q \!\in\! \mathcal{D}$, generate a complete 
trajectory $(S_0, u_0, f_0, \ldots, u_{K-1}, f_{K-1})$ under 
$\pi_\theta$ by interacting with the environment. Record boundary 
states and environment snapshots 
$\{S_k, \mathrm{snap}_k\}_{k=0}^{K-1}$, and set conditioning 
contexts $c_k = \psi(S_k)$. The trunk provides anchors only and 
receives no gradient.

\Statex \textbf{Phase 2: Reverse-Order Training}
\For{$k = K{-}1,\; K{-}2,\; \ldots,\; 0$}
    \LongState{ \textbf{Synchronize.} Push current parameters $\theta$ 
    (denoted $\theta_{>k}$, reflecting downstream turns already 
    optimized) to $\mathcal{E}$; freeze 
    $\theta_{\mathrm{old}} \gets \theta_{>k}$ as the IS denominator.}
    \LongState{ \textbf{Sibling generation (on-policy).} For each 
    $q \!\in\! \mathcal{D}$ and each sibling $j = 1, \ldots, G{-}1$: 
    restore environment from $\mathrm{snap}_k$; sample turn-$k$ 
    response $u_{j,k} \sim \pi_{\theta_{>k}}(\cdot \mid c_k)$; 
    continue under $\pi_{\theta_{>k}}$ through turns 
    $k{+}1, \ldots, K{-}1$ to terminal; receive reward 
    $R_j \in \{0, 1\}$. Record turn-$k$ tokens 
    $\{a_{j,k,t}\}_{t=1}^{T_{j,k}}$ and their log-probs under 
    $\theta_{\mathrm{old}}$.}
    \LongState{ \textbf{Turn-level advantage.} For each 
    $q \!\in\! \mathcal{D}$, compute the sibling baseline and 
    advantage:
    \[
    \hat{V}_k = \frac{1}{G{-}1}\sum_{j=1}^{G-1} R_j, \qquad 
    A_{j,k}^{H} = R_j - \hat{V}_k.
    \]
    All siblings share $S_k$, so $A_{j,k}^{H}$ is free of upstream 
    state contamination 
    (Theorem~\ref{thm:advantage-app}).}
    \State \textbf{PPO update (turn-$k$ tokens only).}
    \For{epoch $= 1, \ldots, E$}
        \For{each mini-batch from sibling turn-$k$ tokens}
            \State Compute IS ratios 
            $\rho_{j,k,t} = \pi_\theta(a_{j,k,t} \mid c_k, 
            a_{j,k,<t}) \,/\, \pi_{\theta_{\mathrm{old}}}(a_{j,k,t} 
            \mid c_k, a_{j,k,<t})$.
            \State Update $\theta$ via the clipped objective:
            \[
            J_k(\theta) = \frac{1}{G{-}1}\sum_{j=1}^{G-1}
            \frac{1}{T_{j,k}}\sum_{t=1}^{T_{j,k}} 
            \min\!\Big(\rho_{j,k,t}\, A_{j,k}^{H},\;
            \mathrm{clip}\bigl(\rho_{j,k,t},\, 1{-}\epsilon,\, 
            1{+}\epsilon\bigr)\, A_{j,k}^{H}\Big).
            \]
        \EndFor
    \EndFor
    \Statex  $\triangleright$ \textit{Turn $k$ complete; 
    $\pi_{\theta,k}$ frozen; next turn inherits updated $\theta$.}
\EndFor
\end{algorithmic}
\end{algorithm}


\subsection{Training Details}
\label{Training Configurations}

We provide the detailed hyperparameters for all experiments in Table~\ref{tab:all-hparams}. Unless otherwise noted, the maximum prompt/response lengths are kept 
the same across all experiments. For computational fairness, the global sampling budget is set to 
$80\text{k}$ rollouts for mathematical reasoning 
($5\text{k}$ prompts $\times$ $16$ rollouts per prompt) and 
$16\text{k}$ for knowledge reasoning 
($1\text{k}$ prompts $\times$ $16$ rollouts per prompt). For the Qwen3 series, we
use a sampling temperature of $0.9$, the AdamW optimizer
with a learning rate of $1\times10^{-6}$, weight decay $0.01$, and an
initial KL coefficient of $0.02$. The log-probability clipping range
is set to $[-10, 10]$. The training batch size and the rollout batch
size are both set to $64$. The maximum prompt length is $8192$ tokens and the maximum single-turn response length is $4096$ tokens for both
tasks. The maximum interaction horizon is set to $K=3$ turns for
mathematical reasoning and $K=6$ turns for web search. All methods
are allocated a uniform budget of $16$ rollouts per prompt (see
Table~\ref{tab:all-hparams}). 

\begin{table} [h]
\centering
\small
\begin{tabular}{llcl}
\toprule
\textbf{Method} & \textbf{Parameters} & \textbf{Total Rollouts} & \textbf{Structure} \\
\midrule
GRPO & \texttt{n\_agent}$=16$ & 16 independent chains & Chain \\
SeeUPO & \texttt{rollout\_n}$=16$ & 16 chains + turn-level update & Chain \\
ARPO & $N{=}8,\; M{=}16$ & 8 initial + entropy branch to 16 & Entropy tree \\
TreeGRPO & $M{=}4,\; N{=}3,\; L{=}1$ & 4 trees $\times$ 4 leaves $=$ 16 & Random tree \\
RTPO & sibling rollouts & Global Max budget & State-matched tree \\
\bottomrule
\end{tabular}
\caption{Hyperparameter settings for all experiments. }
\label{tab:all-hparams}
\end{table}

For GRPO, we sample $16$ independent
chains per prompt. For SeeUPO, we generate $16$ independent chains
with sequential turn-level updates in reverse execution order. For
ARPO, we set the initial sampling size to $N=8$ and the global rollout
budget to $M=16$, with entropy weight $\beta=0.2$, base probability
$\alpha=0.5$, and branching threshold $\tau=0.5$; rollouts that do not
trigger entropy-driven branching are supplemented with independent
chains until the budget of $16$ is reached. For TreeGRPO, we set the
number of initial trees to $M=4$, the number of expansion nodes per
iteration to $N=3$, and the number of expansion iterations to $L=1$,
yielding $M\times(L\times N+1)=16$ rollouts per prompt via random node
expansion. 
For RTPO, we first generate multiple trunk trajectories per prompt to 
establish turn-boundary states: $4$ trunks for mathematical reasoning 
($K{=}3$) and $2$ trunks for web search ($K{=}6$), since in practice 
most trunks do not reach the maximum interaction horizon and terminate 
early with fewer tool calls. At each realized turn boundary $S_k$, 
$G{-}1=2$ sibling rollouts are forked and continued to termination. 

To ensure computational fairness with baselines, RTPO enforces the 
same global sampling budget as all other methods 
($80\text{k}$ for mathematical reasoning, $16\text{k}$ for knowledge 
reasoning), so that the total number of generated rollouts across all 
prompts does not exceed that of any baseline. Per-prompt rollout 
counts may vary depending on early termination, but the global budget 
constraint guarantees that RTPO consumes no more rollout compute than 
the $16$-chain baselines in aggregate.
All methods use a discount factor of $\gamma=1$. 

All experiments are conducted on a single node
equipped with $8\times$ NVIDIA A100 80GB GPUs,
$256$ CPU cores, and $256$GB of system memory,
running Ubuntu 22.04 LTS. The math task requires approximately 15 hours of training, whereas the knowledge task requires approximately 5 hours.

\subsection{Qwen3 Chat Template}
\label{sec:qwen_chat_template}

\begin{figure}[h]
    \centering
    \includegraphics[width=1.0\linewidth]{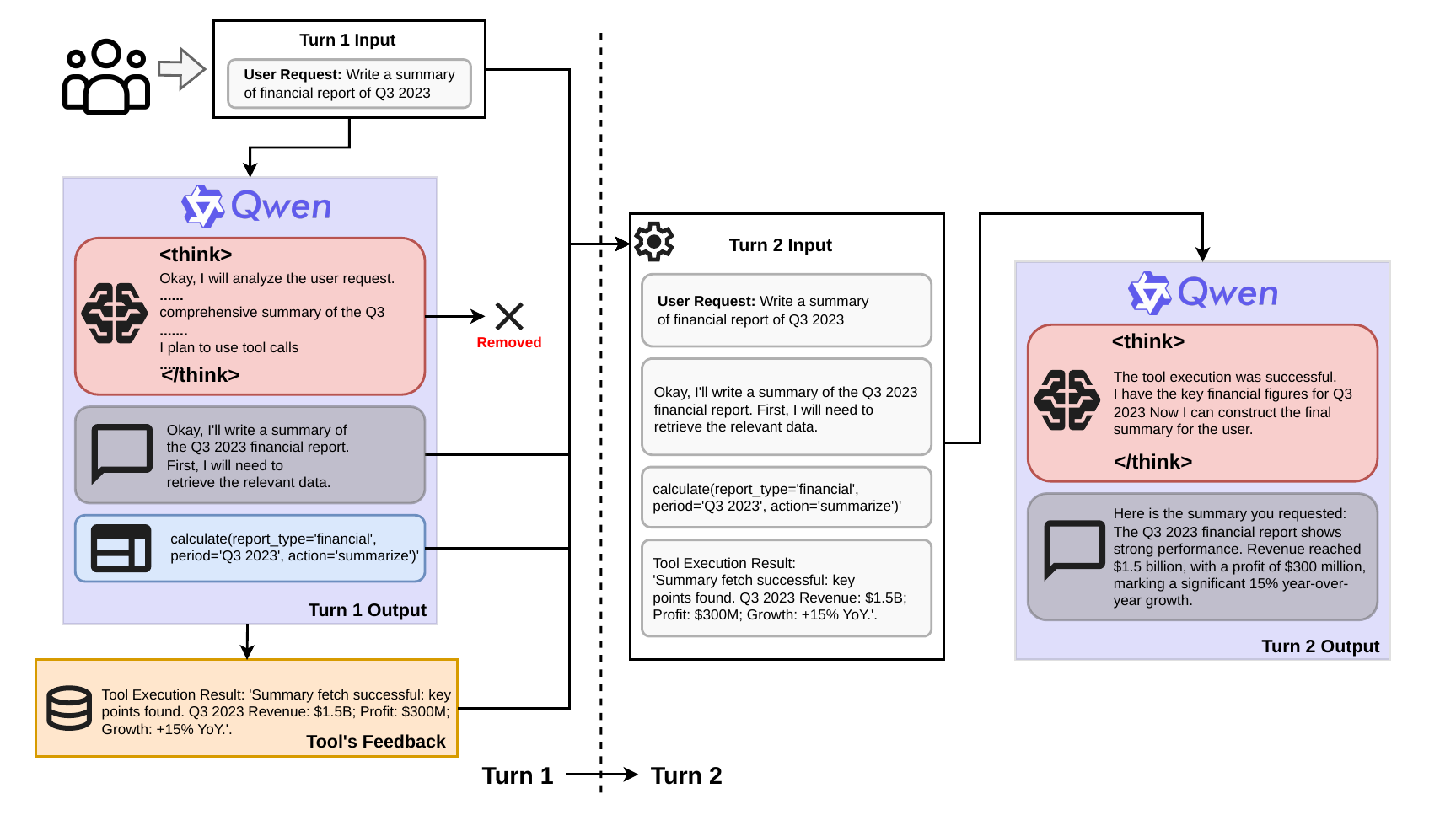}
    \caption{Examples of Qwen3 chat template.}
    \label{fig:chat message}
\end{figure}

At each interaction turn, Qwen3 receives the current context and generates three components: (A) an internal reasoning trace enclosed by \texttt{<think>...</think>}, (B) textual commentary after \texttt{</think>}, and (C) a tool-call instruction, e.g., \texttt{<tool\_call>...</tool\_call>}. The tool call is executed by the external environment, which returns the result as an observation. Before constructing the next-turn input, the chat-template filter removes the \texttt{<think>...</think>} segment, while retaining the textual commentary, the tool-call instruction, and the appended tool-execution result. Thus, the input to the second turn consists of: (1) the original user request, (2) the previous textual commentary, (3) the previous tool call, and (4) the tool result. This filtered context enables the model to condition on the action history and environment feedback without carrying redundant internal reasoning traces, keeping the context length manageable while preserving decision-relevant information. The process is illustrated in Figure~\ref{fig:chat message}.

\section{Additional Results and Insights}
\subsection{Discussion of Main Results}
\label{dis for main}

For the main experiments, we use Qwen3-8B as the base model and evaluate on two categories of multi-turn tool-use tasks: mathematical reasoning and web search. Qwen3 models automatically produce chain-of-thought content wrapped in \texttt{<think>...</think>} tags during inference. In our setup, each turn's input is constructed using the standard Qwen3 chat template, which post-processes prior assistant content by stripping out the \texttt{<think>...</think>} segments so that subsequent turns observe only the final answer content following \texttt{</think>}. This design keeps the per-turn input context concise and aligns naturally with the turn-boundary MDP formulation underlying RTPO: each turn-level policy is conditioned on the visible context $c_k$ rather than on the full raw interaction history. In addition, we report the complete accuracy results with mean and standard deviation, extending Table~\ref{tab:math_knowledge_toolcall_results} to Table~\ref{tab:acc_only_results}.

\begin{table*}[!ht]
\centering
\scriptsize
\setlength{\tabcolsep}{5.0pt}
\renewcommand{\arraystretch}{0.95}
\resizebox{\textwidth}{!}{
\begin{tabular}{l|cccccc|cc}
\toprule
\multirow{2}{*}{\textbf{Method}}
& \multicolumn{6}{c|}{\textbf{M: Mathematical Reasoning}}
& \multicolumn{2}{c}{\textbf{K: Knowledge Reasoning}} \\
\cmidrule(lr){2-7}
\cmidrule(lr){8-9}
& \textbf{AIME24}
& \textbf{AIME25}
& \textbf{AMC23}
& \textbf{MATH500}
& \textbf{GSM8K}
& \textbf{OE-Math}
& \textbf{HotpotQA}
& \textbf{2Wiki} \\
\midrule

Vanilla
& $3.33_{\pm0.00\%}$
& $13.33_{\pm3.33\%}$
& $12.50_{\pm2.50\%}$
& $51.00_{\pm0.40\%}$
& $76.95_{\pm0.15\%}$
& $24.33_{\pm0.68\%}$
& $\underline{54.52}_{\pm0.72\%}$
& $59.83_{\pm0.61\%}$ \\

GRPO
& $10.00_{\pm3.33\%}$
& $20.00_{\pm3.33\%}$
& $57.50_{\pm2.50\%}$
& $82.33_{\pm0.50\%}$
& $93.86_{\pm0.23\%}$
& $50.74_{\pm1.12\%}$
& $53.39_{\pm0.93\%}$
& $61.85_{\pm0.84\%}$ \\

SeeUPO
& $\underline{20.00}_{\pm3.33\%}$
& $\underline{23.33}_{\pm3.33\%}$
& $\underline{67.50}_{\pm2.50\%}$
& $\underline{84.20}_{\pm0.35\%}$
& $\underline{94.47}_{\pm0.13\%}$
& $\underline{53.86}_{\pm0.45\%}$
& $54.27_{\pm0.55\%}$
& $\underline{63.78}_{\pm0.49\%}$ \\

RTPO
& $\mathbf{33.33}_{\pm0.00\%}$
& $\mathbf{26.67}_{\pm0.00\%}$
& $\mathbf{67.50}_{\pm0.00\%}$
& $\mathbf{86.20}_{\pm0.20\%}$
& $\mathbf{94.84}_{\pm0.08\%}$
& $\mathbf{55.49}_{\pm0.30\%}$
& $\mathbf{64.89}_{\pm0.34\%}$
& $\mathbf{64.35}_{\pm0.27\%}$ \\

\bottomrule
\end{tabular}
}
\caption{Accuracy comparison across eight benchmarks. Values are reported as mean$_{\pm\mathrm{std}}$ over three evaluation runs. \textit{Acc} denotes Pass@1 (\%) for mathematical tasks and best-span F1 (\%) for knowledge reasoning tasks. Standard deviations are reported in percentage points.}
\label{tab:acc_only_results}
\vspace{-10pt}
\end{table*}

\paragraph{Performance and tool-use patterns on mathematical reasoning tasks.}
On mathematical tasks, tool-call frequency exhibits a trend that runs almost opposite to accuracy: Vanilla makes 17 Python calls on AIME24 yet solves only one problem, whereas RTPO solves ten with just four calls. This pattern reflects an intrinsic property of mathematical reasoning: the knowledge and derivations required to solve a math problem are primarily internalized in the model's parameters, while the external Python tool serves as an auxiliary aid for verification and numerical computation rather than as a source of new information. Vanilla's frequent tool use, therefore, largely reflects an inefficient strategy of \emph{repeated verification used to mask reasoning uncertainty}. After RL training, the model gains stronger control over its own reasoning process, and tool calls collapse from redundant repeated verification into precise invocations at critical computation steps; consequently, the number of calls decreases while accuracy increases. RTPO's advantage becomes particularly pronounced on the hardest math benchmarks: it reaches 33.33\% Pass@1 on AIME24, surpassing SeeUPO (20.00\%) and GRPO (10.00\%) by 13.3 and 23.3 absolute points, respectively, and maintains a clear lead on AIME25 and OE-Math. RTPO simultaneously achieves the fewest tool calls and the highest accuracy on these hard problems, indicating that it learns the most efficient tool-use strategy. We further note that the gap between methods on mathematical tasks remains relatively small overall, since math problems typically require only a few interaction turns, and the context discrepancy between rollout and training stays at a manageable scale under short horizons; the robustness of standard training dynamics alone is sufficient for baselines to reach near-optimal performance in this regime. We provide an example to support our discussion, as shown in Figure~\ref{fig:mathcase}.

\begin{figure}[h]
    \centering
    \includegraphics[width=1.0\linewidth]{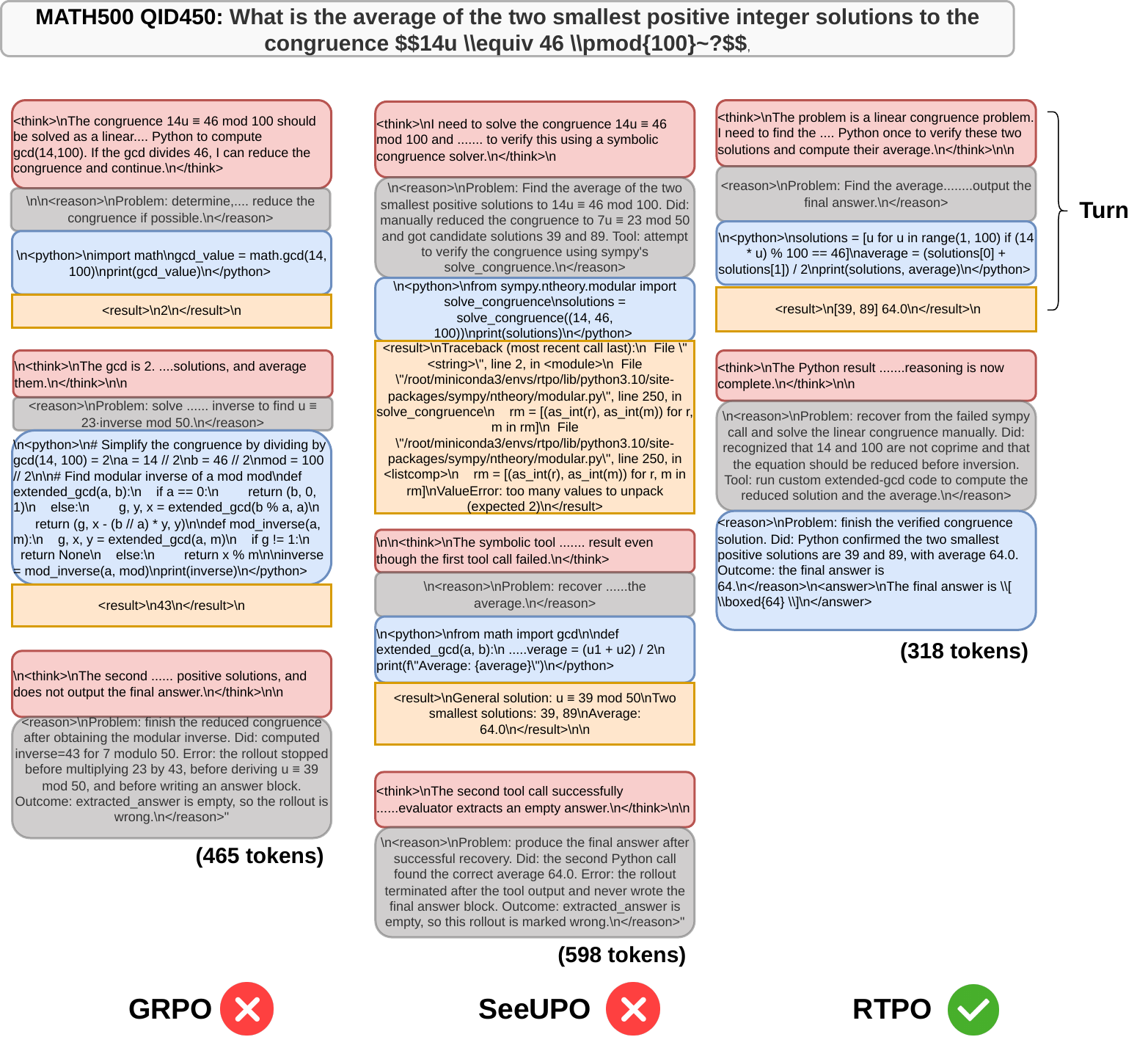}
    \caption{An example of different policy performance on a mathematical-reasoning task through Python tools.}
    \label{fig:mathcase}
\end{figure}

\paragraph{Performance and tool-use patterns on knowledge reasoning tasks.}
On knowledge-intensive question answering, RTPO reaches 64.89 F1 on HotpotQA, exceeding SeeUPO (54.27) and GRPO (53.39) by 10.6 and 11.5 points, respectively, and maintains a clear lead on 2Wiki. In contrast to mathematical tasks, the tool-use pattern on knowledge reasoning exhibits a positive correlation between call count and accuracy: RTPO issues 613 and 867 search calls on HotpotQA and 2Wiki, far exceeding the roughly 200 calls observed for the other methods. This contrast admits a natural explanation grounded in task structure: the factual information required to answer such questions does not reside in the model's parametric knowledge and must be acquired through external retrieval, while multi-hop questions further demand cross-turn integration of multiple pieces of evidence. The lower retrieval counts of GRPO and SeeUPO indicate that they fail to learn to issue sustained follow-up queries and to expand retrieval across multiple turns. The underlying cause is that such hard tasks require longer interaction horizons and more frequent reasoning revision, so the rollout--training context mismatch is amplified as turns accumulate, and baseline methods must spend a larger share of their optimization budget compensating for this drift. RTPO eliminates this source of bias structurally, allowing the entire optimization budget to act directly on the task objective, which yields more stable improvements on long-horizon hard tasks. We provide an example to support our discussion, as shown in Figure~\ref{fig:searchcase}.

\begin{figure}[h]
    \centering
    \includegraphics[width=1.0\linewidth]{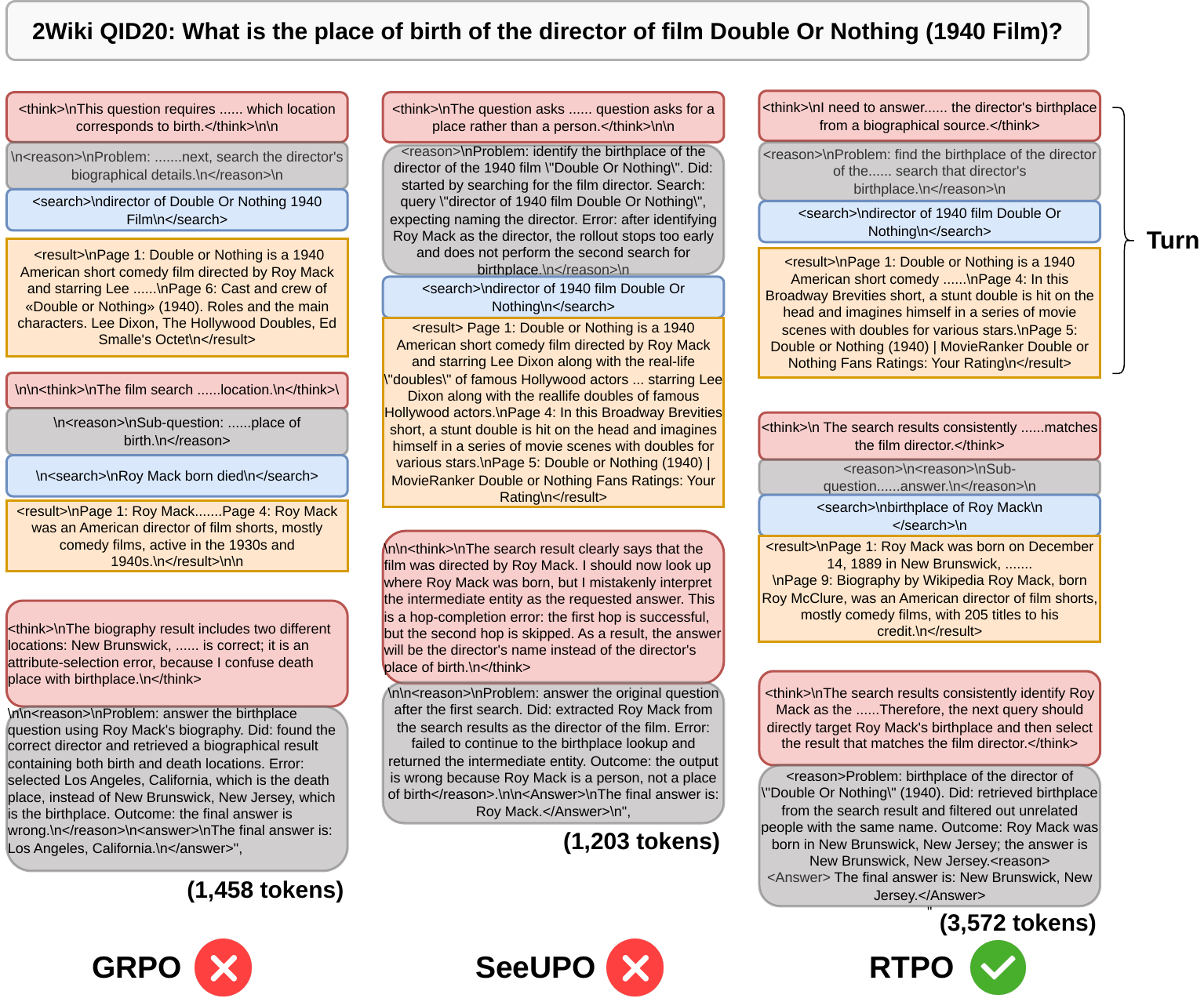}
    \caption{An example of different policy performance on a knowledge-reasoning task through web search tools.}
    \label{fig:searchcase}
\end{figure}

Taking both task categories together, RTPO learns to perform mathematical reasoning with \emph{fewer calls but more reliable internal reasoning}, while performing knowledge reasoning with \emph{denser calls and more thorough external information integration}. This bidirectional adaptation of tool-use behavior indicates that the advantage of RTPO does not stem from simply encouraging or suppressing tool calls, but rather from its turn-level optimization objective, which is able to learn a tool-use strategy matched to the underlying nature of each task.

\subsection{Insights from Rollout--Training Consistency Analysis}
\label{dis for rtc}

Follow-up discussion from Sec~\ref{sec:stability}, a noteworthy observation is that, despite using different conditioning contexts at the rollout and training stages, the baselines' ratios still exhibit a slow drift toward $1$. We find that this phenomenon is closely related to the mechanism revealed by DAgger~\citep{ross2011reduction}: when the training data is continually drawn from the distribution induced by the policy itself, the model gradually adapts to the distribution it actually operates on. In our setting, trajectories generated by summary-rollouts are reinforced under full-history-training, and parameter sharing causes the behaviors under the two conditionings to converge indirectly; the next rollout therefore falls closer to the region considered reasonable under the training context. SCoRe~\citep{kumartraining} observes the same bootstrapped alignment process in multi-turn online RL. In addition, some works point out that the summary can serve as a learnable sufficient statistic: if the summary preserves the decision-relevant information, the optimal policies under the two contexts can converge in an information-theoretic sense~\citep{allen2021learning}; recent work on multi-turn RL further shows that the summary context can evolve into a learnable compact decision state under end-to-end optimization~\citep{lu2025scaling,yu2026memagent}.

However, this empirical alignment differs from the structural consistency of RTPO in three fundamental ways:

(1) Alignment is incomplete and strongly depends on task complexity.
GRPO recovers only to $0.97$ on math after $30$ steps and only to $0.83$ on search after $14$ steps; the longer the horizon and the faster the context accumulates, the harder the alignment becomes, which is reflected in the knowledge reasoning task in Table~\ref{tab:math_knowledge_toolcall_results}.

(2)The alignment process is accompanied by oscillation.
SeeUPO produces a $1.02$ spike at step $9$ on math and exhibits sustained small fluctuations, because the bootstrapped loop is driven by the advantage signal, and the variance of advantage estimates propagates directly into step-to-step jitter of the ratio. RTPO's ratio is structurally guaranteed and is only affected by differences between the training and inference engines.

(3) Empirical alignment consumes additional optimization budget.
Bootstrapped alignment essentially allocates part of the policy's capacity to an implicit objective---pulling the behavior under the summary toward the optimal behavior under the full history. RTPO removes this hidden cost, so that the optimization budget can act directly on the task objective. This is consistent with the lower search-call counts of baselines on long-horizon knowledge reasoning observed in Table~\ref{tab:math_knowledge_toolcall_results}: part of their training dynamics is diverted to \textit{patching the mismatch}.

Overall, the rollout--training ratio reveals not that baselines necessarily fail, but that baselines must rely on training dynamics to compensate for a gap that RTPO does not have by construction, and this compensatory mechanism becomes substantially less effective on long-horizon tasks.

\subsection{Additional Findings for Hard-Search Scenarios}
\label{add subsets}

\begin{table} [b]
\centering
\small
\begin{tabular}{lrrrrrr}
\toprule
\textbf{Benchmark} & \textbf{QS} & \textbf{Off Hit} & \textbf{Off Rate} & \textbf{On Hit} & \textbf{On Rate} & \textbf{$\Delta$} \\
\midrule
\textbf{GAIA} & 103 & 24 & 23.30\% & 30 & 29.13\% & +5.83\% \\
\quad L1 & 39 & 7 & 17.95\% & 10 & 25.64\% & +7.69\% \\
\quad L2 & 52 & 16 & 30.77\% & 18 & 34.62\% & +3.85\% \\
\quad L3 & 12 & 1 & 8.33\% & 2 & 16.67\% & +8.33\% \\
\midrule
\textbf{WebWalker} & 200 & 12 & 6.00\% & 19 & 9.50\% & +3.50\% \\
\midrule
\textbf{XBench} & 100 & 8 & 8.00\% & 15 & 15.00\% & +7.00\% \\
\midrule
\textbf{HLE} & 2096 & 261 & 12.45\% & 254 & 12.12\% & -0.33\% \\
\quad Biology/Medicine & 228 & 37 & 16.23\% & 38 & 16.67\% & +0.44\% \\
\quad Chemistry & 81 & 6 & 7.41\% & 6 & 7.41\% & +0.00\% \\
\quad Computer Science/AI & 220 & 20 & 9.09\% & 24 & 10.91\% & +1.82\% \\
\quad Engineering & 71 & 5 & 7.04\% & 8 & 11.27\% & +4.23\% \\
\quad Humanities/Social Science & 192 & 32 & 16.67\% & 23 & 11.98\% & -4.69\% \\
\quad Math & 947 & 114 & 12.04\% & 112 & 11.83\% & -0.21\% \\
\quad Other & 158 & 29 & 18.35\% & 29 & 18.35\% & +0.00\% \\
\quad Physics & 199 & 18 & 9.05\% & 14 & 7.04\% & -2.01\% \\
\bottomrule
\end{tabular}
\caption{Rechecked output-hit comparison between standard RTPO (on-policy) and its off-policy variant. Values are rounded to the nearest integer for better readability. Output-hit marks a sample correct if the output contains the gold answer or an alias.}
\label{tab:recheck_output_hit}
\end{table}

To quantify the independent contribution of on-policy continuation (Sec.~\ref{sec:onpolicy}), we use Qwen3-8B as the base model and compare standard RTPO with its off-policy variant. Both share the same reverse-order turn-level training and state-matched sibling structure; the only difference lies in how sibling downstream continuations are obtained. RTPO (on-policy) synchronizes the latest parameters $\theta_{>k}$ to the inference engine at the start of turn $k$ and regenerates sibling continuations under $\pi_{\theta_{>k}}$ until termination, so the trajectory-level IS weight is identically one, and the Q-value
estimate is
\begin{equation}
\hat{Q}^{\text{on}}_{j,k}
= r_{j,k} + \gamma^{\tau_{j,k}}\,
  \hat{F}_{j,k}^{\pi_{\theta_{>k}}}.
\end{equation}
RTPO with off-policy variant instead reuses the downstream continuations already generated under $\pi_{\theta_0}$ during the initial rollout stage and
corrects for the staleness via a clamped trajectory-level importance
weight:
\begin{equation}
\hat{Q}^{\text{off}}_{j,k}
= r_{j,k} + \gamma^{\tau_{j,k}}\,
  \bar{\omega}_{j,k}\,
  \hat{F}_{j,k}^{\pi_{\theta_0}},
\qquad
\bar{\omega}_{j,k}
= \mathrm{clamp}\!\Bigg(
  \prod_{h=k+1}^{K-1}\prod_{t=1}^{T_{j,h}}
  \frac{\pi_{\theta_{>k}}(a_{j,h,t}\mid s_{j,h,t})}
       {\pi_{\theta_0}(a_{j,h,t}\mid s_{j,h,t})},\;
  \rho_{\min},\;\rho_{\max}
\Bigg).
\end{equation}
We compare the on-policy RTPO and its off-policy variant on four knowledge-reasoning deep-search benchmarks, as shown in Table~\ref{tab:recheck_output_hit}. We use output-hit accuracy, which marks a sample correct if the full output contains the gold answer or an alias. The benchmarks include GAIA with three difficulty levels, WebWalkerQA, XBench, and HLE with eight subject subsets.

\subsection{Limitations and Future Work}
\label{limits}

Although RTPO provides stronger theoretical guarantees and empirical performance than existing flat-trajectory methods, its algorithmic design has several limitations that motivate future work.

\paragraph{Dependence on trunk quality.}
RTPO uses the boundary states $S_k$ of a trunk trajectory as the forking points for sibling branches. As a result, the training signal at each reverse phase is anchored to the state sequence visited by the trunk. If the trunk makes a poor decision at an early turn, such as $k=0$ or $k=1$, later boundary states may lie in low-value regions where even strong turn-level actions fail to obtain positive terminal rewards. In this case, the turn-level advantages $A_{j,k}^{H}$ may degenerate into near-zero signals, making the corresponding gradient update ineffective. In contrast, beam-style search methods, such as beam search or best-of-$N$, maintain multiple candidate prefixes and can discard low-quality paths earlier. RTPO currently relies on a single trunk anchor and does not explicitly incorporate trunk-level diversity or post-hoc trunk selection. A natural extension is \emph{multi-trunk sampling} or \emph{retroactive trunk selection}, where multiple complete trunks are generated during rollout and a high-reward trunk is selected as the anchor to improve boundary-state coverage. This would increase rollout cost, but would not change the reverse-order training formulation.

\paragraph{Overhead of reverse multi-turn training.}
RTPO decomposes a $K$-turn episode into $K$ sequential turn-level optimization phases. Each phase requires sibling generation, environment restoration, on-policy continuation to termination, and a PPO-style update. Compared with flat-trajectory methods, which perform a single optimization pass over the full trajectory, RTPO incurs additional cost that scales with the number of turns $K$ and the number of sibling rollouts. Moreover, each reverse phase requires synchronizing the latest policy parameters to the inference engine before generating on-policy continuations, introducing additional inference-training latency. In our VeRL-based implementation, this overhead is partially mitigated by batched parallel sibling generation and asynchronous engine scheduling, but it cannot be fully removed. Designing more efficient sibling generation and update schedules is therefore an important direction for future work.

\paragraph{Under-utilization of training tokens.}
RTPO assigns the turn-level advantage in phase $k$ only to the sibling's turn-$k$ output tokens. Prefix tokens in $c_k$ and downstream continuation tokens from turns $k{+}1,\ldots,K{-}1$ receive no gradient. In addition, the trunk trajectory is used only as an anchor and does not directly contribute to gradient updates. Thus, although sibling continuations are necessary for estimating terminal returns, their downstream tokens are discarded during policy optimization. For example, in a $K=5$ episode with an average of 200 tokens per turn, the sibling continuation at phase $k=2$ may generate around 600 downstream tokens, while only the 200 turn-$k$ tokens are used for the PPO update. This reduced token utilization is the cost of causal turn-level credit assignment: by withholding gradients from non-turn-$k$ tokens, RTPO avoids assigning credit to actions that are not causally responsible for the turn-$k$ comparison, as stated in \textit{Theorem~2(c)}. Future work may explore auxiliary objectives, such as language-modeling losses or self-play rewards on downstream tokens, to improve token efficiency while preserving the causal consistency of the turn-level advantage.

\subsection{Broader Impacts}
\label{impact}
The potential positive impact of RTPO is that more stable agentic RL training can reduce failed tool-use trajectories, improve sample efficiency, and support more reliable deployment of LLM agents in research, education, software engineering, and decision-support settings. RTPO may also make multi-turn RL training easier to analyze by separating turn-level decisions from full-trajectory outcomes. By enabling turn-level monitoring, RTPO can further improve our understanding of how agentic workflows learn to plan, search, and use tools over multiple turns. At the same time, stronger turn-refined agentic workflows may increase the capability of LLM agents to act autonomously across long-horizon tasks. If deployed without appropriate safeguards, such systems could produce incorrect outputs with high confidence, misuse external tools, or amplify harmful automation. Therefore, practical deployment should include safety constraints, tool-use monitoring, privacy-preserving data handling, and human oversight, especially in high-stakes domains.

%

\clearpage
\section{Supplementary Theoretical Clarifications, Implementation Details, and Extended Experiments}
\label{sec:supplementary_clarifications}

\subsection{Stateful Tool-Agent Evaluation on \texorpdfstring{$\tau^3$}{tau3}-Airline}
\label{sec:supp_airline}

\paragraph{Experimental setting.}
We evaluate Qwen3-1.7B on a fixed set of 30 training tasks and the 20 held-out
test tasks provided by the $\tau^3$-Airline environment. The agent must query
and modify an airline database through multi-turn tool interactions, and
earlier actions change the state observed in later turns.

\begin{table}[htbp]
    \centering
    \small
    \caption{Online training success rate on $\tau^3$-Airline.}
    \label{tab:supp_airline_training}
    \begin{tabular}{lccccc}
        \toprule
        Method & Step 10 & Step 20 & Step 30 & Step 40 & Step 50 \\
        \midrule
        GRPO                 & 23.44\% & 25.00\% & 31.26\% & 29.69\% & 31.26\% \\
        SeeUPO               & 31.23\% &  9.38\% & 14.83\% &  0.78\% &  0.78\% \\
        Tree-GRPO            & 18.40\% & 14.95\% & 27.48\% & 24.53\% & 14.88\% \\
        REFUEL               & 25.00\% & 31.25\% & 31.25\% & 28.12\% & 21.88\% \\
        \textbf{RTPO ($G=3$)}& 15.63\% & \textbf{37.50\%} & \textbf{50.00\%}
                              & \textbf{56.25\%} & \textbf{40.63\%} \\
        \bottomrule
    \end{tabular}
\end{table}

RTPO obtains the highest online success rate from Step~20 onward and peaks at
56.25\% at Step~40. The reduction to 40.63\% at Step~50 indicates that
continued optimization can produce late-stage degradation when the training
set is small and the reward is sparse.

\begin{table}[htbp]
    \centering
    \small
    \caption{Held-out evaluation on the 20 $\tau^3$-Airline test tasks.
    Pass$^4$ follows the definition used in the main paper.}
    \label{tab:supp_airline_test}
    \resizebox{\textwidth}{!}{
    \begin{tabular}{lcccccc}
        \toprule
        Method & Pass@1 & Pass@4 & Pass$^4$ &
        Normal termination & Generation truncation &
        Avg.\ response tokens \\
        \midrule
        SeeUPO
        & 0.00\% & 0.00\% & 0.00\% & 0.00\% & 98.75\% & 2,023.7 \\
        GRPO
        & 11.25\% & 25.00\% & 0.00\% & 20.00\% & 60.00\% & 2,469.7 \\
        Tree-GRPO
        & 16.25\% & 25.00\% & 10.00\% & 30.00\% & 47.50\% & 2,538.2 \\
        REFUEL
        & 11.25\% & 20.00\% & 0.00\% & 18.75\% & 57.50\% & 2,621.2 \\
        \textbf{RTPO ($G=3$)}
        & \textbf{17.50\%} & \textbf{30.00\%} & \textbf{10.00\%}
        & \textbf{45.00\%} & \textbf{42.50\%} & \textbf{1,688.8} \\
        \bottomrule
    \end{tabular}}
\end{table}

RTPO achieves the highest Pass@1 and Pass@4 and the highest normal
environment-termination rate. Relative to Tree-GRPO, it reduces the
generation-truncation rate from 47.50\% to 42.50\% and the average response
length from 2,538.2 to 1,688.8 tokens. Relative to REFUEL, RTPO improves
Pass@1 from 11.25\% to 17.50\%, improves Pass@4 from 20.00\% to 30.00\%, and
reduces average response length by approximately 35.6\%.

\begin{table}[htbp]
    \centering
    \small
    \caption{Tool-execution quality on the held-out $\tau^3$-Airline tasks.}
    \label{tab:supp_airline_tools}
    \resizebox{\textwidth}{!}{
    \begin{tabular}{lccccc}
        \toprule
        Method & Attempted calls & Executed successfully & Tool-error calls &
        Execution success & Response tokens/success \\
        \midrule
        GRPO      & 118 & 68 & 50 & 57.63\% & 21,952.9 \\
        SeeUPO    &   1 &  0 &  1 &  0.00\% & N/A \\
        Tree-GRPO &   1 &  0 &  1 &  0.00\% & 15,619.5 \\
        \textbf{RTPO}
                  &  83 & \textbf{74} & \textbf{9} & \textbf{89.16\%}
                  & \textbf{9,650.5} \\
        \bottomrule
    \end{tabular}}
\end{table}

RTPO attempts fewer tool calls than GRPO but completes more calls without
execution errors. Tool-error calls decrease from 50 to 9, and the execution
success rate increases from 57.63\% to 89.16\%. RTPO also requires
approximately 9,650.5 response tokens per successful trajectory, which is
38.2\% lower than Tree-GRPO and 56.0\% lower than GRPO.

\subsection{Sensitivity to the Sibling Group Size}
\label{sec:supp_group_size}

The group-size hyperparameter is $G$; at each boundary, RTPO forks
$G-1$ sibling continuations from the same turn-boundary state $S_k$. The
turn-level estimator is
\begin{equation}
    \widehat{A}^{H}_{j,k}
    =
    R_j-\frac{1}{G-1}\sum_{r=1}^{G-1}R_r .
    \label{eq:supp_sibling_estimator}
\end{equation}
Theorem~\ref{thm:advantage-app}(a) gives
\begin{equation}
    \mathbb{E}\!\left[
        \widehat{A}^{H}_{j,k}
        \mid S_k,u_{j,k}
    \right]
    =
    \frac{G-2}{G-1}
    A^\pi(S_k,u_{j,k}) .
    \label{eq:supp_sibling_bias}
\end{equation}
Thus, the finite-group bias is of order $O(1/G)$, and the multiplicative
coefficient approaches one as $G$ increases. Averaging more sibling returns
also reduces sampling noise in the Monte Carlo baseline. Because all
continuations begin from the same boundary state, this comparison does not
reintroduce upstream-state contamination.

\begin{table}[htbp]
    \centering
    \small
    \caption{Effect of sibling group size on signal density and held-out
    performance in $\tau^3$-Airline.}
    \label{tab:supp_group_sensitivity}
    \resizebox{\textwidth}{!}{
    \begin{tabular}{cccccc}
        \toprule
        $G$ & Theoretical coefficient &
        Pass@1 & Pass@4 & Zero-advantage rate & Tokens/success \\
        \midrule
        3 & $1/2$ & 17.50\% & 30.00\% & 89.61\% & 9,650.5 \\
        4 & $2/3$ & \textbf{20.00\%} & 30.00\% &
        \textbf{82.66\%} & \textbf{7,477.9} \\
        \bottomrule
    \end{tabular}}
\end{table}

Increasing $G$ from 3 to 4 reduces the zero-advantage rate from 89.61\% to
82.66\%, improves Pass@1 from 17.50\% to 20.00\%, and reduces inference
tokens per successful trajectory from 9,650.5 to 7,477.9. The larger group
therefore provides denser relative learning signals, but requires more
offline exploration.

\begin{table}[htbp]
    \centering
    \small
    \caption{Performance--cost trade-off for sibling group size.}
    \label{tab:supp_group_cost}
    \resizebox{\textwidth}{!}{
    \begin{tabular}{ccccccc}
        \toprule
        $G$ & GPU-hours & Training-generation tokens & Pass@1 & Pass@4 &
        Avg.\ tool calls & Avg.\ response tokens \\
        \midrule
        3 & 21.48 & 846,207   & 17.50\% & 30.00\% & 1.038 & 1,688.8 \\
        4 & 28.06 & 2,139,039 & 20.00\% & 30.00\% & 0.662 & 1,495.6 \\
        \bottomrule
    \end{tabular}}
\end{table}

Moving from $G=3$ to $G=4$ increases GPU-hours by approximately 30.6\% and
training-generation tokens by approximately 152.8\%, while improving Pass@1
by 2.5 percentage points and leaving Pass@4 unchanged. The resulting policy
uses approximately 36.2\% fewer tool calls and 11.4\% fewer response tokens
at inference time. Consequently, $G=3$ provides the stronger default
cost--performance trade-off, whereas $G=4$ is useful when Pass@1 and concise
inference behavior are prioritized.

For $G=2$, the coefficient in Eq.~\eqref{eq:supp_sibling_bias} is zero, so the
conditional expectation of the relative advantage degenerates to zero.
Therefore, $G=3$, corresponding to two sibling continuations, is the minimum
viable configuration that preserves an informative relative signal.

\subsection{Trunk Quality and Failure Dynamics}
\label{sec:supp_trunk_quality}

\begin{table}[htbp]
    \centering
    \small
    \caption{Aggregate trunk outcomes in the $\tau^3$-Airline training run.}
    \label{tab:supp_trunk_distribution}
    \begin{tabular}{lc}
        \toprule
        Trunk metric & Percentage of all trunks \\
        \midrule
        Final task success                  & 40.00\% \\
        Failed within Turn 1                &  6.88\% \\
        Failed within the first 2 turns     & 22.50\% \\
        Failed within the first 3 turns     & 28.75\% \\
        Failed within the first 5 turns     & 39.38\% \\
        No terminal environment signal      & 10.63\% \\
        Average trunk length                & 5.39 turns \\
        \bottomrule
    \end{tabular}
\end{table}

Only 6.88\% of trunks fail within the first turn, indicating that catastrophic
early failure is not the dominant failure mode. Most errors occur in the
middle or later stages. Moreover, an observed early failure under finite
sampling is not equivalent to a strict dead state, because finite
continuations cannot establish that every possible future policy is unable
to recover.

\begin{table}[htbp]
    \centering
    \small
    \caption{Evolution of failure modes during RTPO training.}
    \label{tab:supp_failure_dynamics}
    \begin{tabular}{lccccc}
        \toprule
        Metric & Step 10 & Step 20 & Step 30 & Step 40 & Step 50 \\
        \midrule
        Success rate
        & 15.63\% & 37.50\% & 50.00\% & 56.25\% & 40.63\% \\
        Normal termination but failure
        & 46.88\% & 53.13\% & 46.88\% & 40.63\% & 59.38\% \\
        No terminal signal
        & 37.50\% & 9.38\% & 3.13\% & 3.13\% & 0.00\% \\
        Failure within Turn 1
        & 3.13\% & 6.25\% & 9.38\% & 6.25\% & 9.38\% \\
        Failure within 2 turns
        & 15.63\% & 15.63\% & 31.25\% & 25.00\% & 25.00\% \\
        Failure within 3 turns
        & 21.88\% & 25.00\% & 34.38\% & 25.00\% & 37.50\% \\
        Failure within 5 turns
        & 25.00\% & 40.63\% & 40.63\% & 31.25\% & 59.38\% \\
        Avg.\ successful turns
        & 4.40 & 3.92 & 1.88 & 2.06 & 1.92 \\
        Avg.\ failed turns
        & 13.30 & 7.30 & 4.06 & 5.50 & 2.84 \\
        \bottomrule
    \end{tabular}
\end{table}

The fraction of trajectories without a terminal environment signal decreases
from 37.50\% at Step~10 to 0\% at Step~50, while the average failed-trajectory
length decreases from 13.30 to 2.84 turns. RTPO therefore substantially
reduces trajectories that stall for a long time or fail to complete the
interaction. Later failures increasingly take the form of fast but incorrect
termination rather than persistent generation until truncation. The drop in
success after Step~40 may reflect over-optimization on a limited task set
under sparse binary rewards.

The implementation does not use heuristic trunk filtering. All trunks are
sampled on-policy from the current model. Filtering low-quality trunks could
reduce sibling-sampling cost, but would alter the actually visited state
distribution and concentrate optimization on manually selected states.
Instead, RTPO retains all trunks and uses same-state sibling returns to
determine whether a boundary supplies an informative relative signal.

\subsection{Selective-Gradient Optimization and Training Efficiency}
\label{sec:supp_efficiency}

\paragraph{Why the loss is restricted to current-turn tokens.}
RTPO applies the policy loss only to the output tokens of the current-turn
siblings. Prefix tokens and the trunk trajectory define the state and
conditioning context that were actually reached, while the downstream
continuation supplies the return used to evaluate the current action.
Assigning the same turn-level advantage to prefix or downstream-continuation
tokens would reintroduce the trajectory-level credit contamination analyzed
in Theorem~\ref{thm:advantage-app}. Although only one turn receives gradients at a particular
reverse stage, every turn is optimized when it becomes the current turn
during the complete reverse sweep. The selective loss is therefore intended
to isolate causal credit rather than to discard particular turns from
training.

\begin{table}[htbp]
    \centering
    \small
    \caption{Training-generation cost and deployment-time response efficiency
    on $\tau^3$-Airline.}
    \label{tab:supp_training_tokens}
    \begin{tabular}{lccc}
        \toprule
        Method & Training-generation tokens & Pass@1 & Response tokens/success \\
        \midrule
        GRPO                 & 604,532 & 11.25\% & 21,952.9 \\
        Tree-GRPO            & 781,325 & 16.25\% & 15,619.5 \\
        \textbf{RTPO ($G=3$)}& 846,207 & \textbf{17.50\%} & \textbf{9,650.5} \\
        \bottomrule
    \end{tabular}
\end{table}

RTPO generates approximately 8.3\% more training tokens than Tree-GRPO and
40.0\% more than GRPO, while attaining the highest Pass@1. Relative to
Tree-GRPO, it reduces response tokens per successful trajectory by 38.2\%;
relative to GRPO, the reduction is approximately 56.0\%. The additional
offline continuation sampling therefore does not translate into more verbose
deployment-time inference.

\paragraph{Wall-clock and GPU-hour overhead.}
Under the same Qwen3-1.7B model, four A100 GPUs, and 160 training trunks, the
measured cost is:

\begin{table}[htbp]
    \centering
    \small
    \caption{End-to-end training cost under the same hardware.}
    \label{tab:supp_wall_clock}
    \begin{tabular}{lcccc}
        \toprule
        Method & Wall-clock & GPU-hours & Generated tokens & Relative cost \\
        \midrule
        GRPO            & 3.80 h & 15.20 & 604,532 & $1.00\times$ \\
        RTPO ($G=3$)    & 5.37 h & 21.48 & 846,207 & $1.41\times$ \\
        \bottomrule
    \end{tabular}
\end{table}

RTPO increases wall-clock time and GPU-hours by approximately 41.3\%, and
training-generation tokens by approximately 40.0\%. The additional cost comes
primarily from sibling-continuation generation, turn-by-turn reverse updates,
and synchronization of the latest policy between reverse stages. RTPO thus
trades higher training-time computation for state-matched credit assignment
and current-policy downstream continuations.

\paragraph{Compute-matched comparison.}
The following comparison uses approximately the same total GPU-hour budget.
It is distinct from the fully trained RTPO result at 21.48 GPU-hours.

\begin{table}[htbp]
    \centering
    \small
    \caption{GRPO and compute-matched RTPO under approximately equal
    training cost.}
    \label{tab:supp_compute_matched}
    \begin{tabular}{lcc}
        \toprule
        Metric & GRPO & RTPO ($G=3$), compute-matched \\
        \midrule
        GPU-hours                    & 15.20   & 15.55 \\
        Pass@1                       & 11.25\% & 11.25\% \\
        Pass@4                       & 25.00\% & \textbf{35.00\%} \\
        Environment completion       & 20.00\% & \textbf{30.00\%} \\
        Generation truncation        & 60.00\% & \textbf{47.50\%} \\
        Total tool calls             & 118     & \textbf{98} \\
        Tool-execution success       & 57.63\% & \textbf{62.24\%} \\
        Average tool calls           & 1.475   & \textbf{1.225} \\
        Average response tokens      & 2,469.7 & \textbf{2,437.7} \\
        \bottomrule
    \end{tabular}
\end{table}

With only approximately 2\% more GPU-hours, RTPO matches GRPO on Pass@1,
improves Pass@4 from 25.00\% to 35.00\%, and increases environment
completion from 20.00\% to 30.00\%. It also reduces truncation, total tool
calls, and average response length while improving tool-execution success.

\paragraph{Matched maximum inference budget.}
All methods below use the same limits of 20 turns, 2,048 tokens per turn, and
8,192 response tokens per trajectory.

\begin{table}[htbp]
    \centering
    \small
    \caption{Performance under the same maximum inference budget.}
    \label{tab:supp_inference_budget}
    \resizebox{\textwidth}{!}{
    \begin{tabular}{lccccc}
        \toprule
        Method & Pass@1 & Pass@4 & Avg.\ tool calls & Avg.\ response tokens &
        Successes/1K tokens \\
        \midrule
        GRPO
        & 11.25\% & 25.00\% & 1.475 & 2,469.7 & 0.046 \\
        Tree-GRPO
        & 16.25\% & 25.00\% & 0.013 & 2,538.2 & 0.064 \\
        \textbf{RTPO ($G=3$)}
        & \textbf{17.50\%} & \textbf{30.00\%} & 1.038 &
        \textbf{1,688.8} & \textbf{0.104} \\
        \bottomrule
    \end{tabular}}
\end{table}

Under the same maximum inference budget, RTPO attains the highest Pass@1,
Pass@4, and number of successful trajectories per 1,000 response tokens,
while producing the shortest average responses. Its gains therefore do not
come from allowing longer test-time trajectories.

\subsection{Tool-Use Behavior on GSM8K}
\label{sec:supp_gsm8k_tools}

The total number of tool calls should be interpreted relative to dataset size:
GSM8K contains 1,319 distinct test problems, while AIME24 and AIME25 each
contain 30. Across all three benchmarks, RTPO improves accuracy while
reducing the total number of calls relative to the vanilla model.

\begin{table}[htbp]
    \centering
    \small
    \caption{Accuracy and total tool calls across mathematical benchmarks.}
    \label{tab:supp_math_tool_calls}
    \begin{tabular}{lcc}
        \toprule
        Benchmark & Vanilla accuracy / calls & RTPO accuracy / calls \\
        \midrule
        AIME24 & 3.33\% / 17   & \textbf{33.33\% / 4} \\
        AIME25 & 13.33\% / 23  & \textbf{26.67\% / 2} \\
        GSM8K  & 76.95\% / 731 & \textbf{94.84\% / 482} \\
        \bottomrule
    \end{tabular}
\end{table}

We further audit the complete trajectories for the first 100 distinct GSM8K
test problems generated by the RTPO Qwen3-4B checkpoint.

\begin{table}[htbp]
    \centering
    \small
    \caption{Audit of Python use in 100 distinct GSM8K trajectories.}
    \label{tab:supp_gsm8k_audit}
    \begin{tabular}{lc}
        \toprule
        Audit metric & Result \\
        \midrule
        Distinct trajectories audited       & 100 \\
        Trajectories using Python            & 36\% \\
        Average calls per trajectory         & 0.36 \\
        Trajectories with zero calls         & 64 \\
        Trajectories with one call           & 36 \\
        Trajectories with two or more calls  & \textbf{0} \\
        Calls with valid Python syntax       & 36/36 \\
        Calls executed successfully          & 36/36 \\
        Correct tool-using trajectories      & 34/36 \\
        Trajectories with repeated calls     & \textbf{0} \\
        \bottomrule
    \end{tabular}
\end{table}

There are no repeated questions and no trajectory invokes Python more than
once. All 36 calls contain valid syntax and execute successfully. In 34 of
the 36 tool-using trajectories, the model has already derived the correct
numerical result before invoking Python. The calls are one-shot arithmetic
checks rather than multi-step tool search, repeated code, or duplicate
execution. For example, on GSM8K QID~55, the model first derives
$30-2=28$ and $28/2=14$, then invokes Python once to verify the result before
returning $\boxed{14}$. Thus, the GSM8K call total reflects many distinct
problems receiving a single low-cost numerical verification, rather than
repeated tool use within a small set of trajectories.

\subsection{On-Policy Continuation and Synchronization Frequency}
\label{sec:supp_sync}

As characterized by Theorem~\ref{thm:onpolicy-app}, standard RTPO synchronizes the rollout model after every reverse stage. When
turn $k$ is optimized, its downstream continuation is therefore generated by
the current policy $\pi_{\theta_{>k}}$, after turns
$k+1,\ldots,K-1$ have been updated. The resulting estimator is
\begin{equation}
    \widehat{Q}^{\mathrm{on}}_{j,k}
    =
    r_{j,k}
    +
    \gamma^{\tau_{j,k}}
    \widehat{F}^{\pi_{\theta_{>k}}}_{j,k},
    \qquad
    \omega_{j,k}\equiv 1 .
    \label{eq:supp_on_policy}
\end{equation}
This construction assigns the turn-$k$ action a return under the current
downstream policy and avoids a product of trajectory-level importance
ratios.

At the opposite synchronization endpoint, the off-policy variant reuses
downstream continuations generated by the initial rollout policy
$\pi_{\theta_0}$ and applies a clamped trajectory-level importance weight:
\begin{align}
    \widehat{Q}^{\mathrm{off}}_{j,k}
    &=
    r_{j,k}
    +
    \gamma^{\tau_{j,k}}
    \overline{\omega}_{j,k}
    \widehat{F}^{\pi_{\theta_0}}_{j,k},
    \label{eq:supp_off_policy}
    \\
    \overline{\omega}_{j,k}
    &=
    \operatorname{clamp}\!\left(
        \prod_{h=k+1}^{K-1}
        \prod_{t=1}^{T_{j,h}}
        \frac{
            \pi_{\theta_{>k}}(a_{j,h,t}\mid s_{j,h,t})
        }{
            \pi_{\theta_0}(a_{j,h,t}\mid s_{j,h,t})
        },
        \rho_{\min},
        \rho_{\max}
    \right).
    \label{eq:supp_off_policy_weight}
\end{align}

\begin{table}[htbp]
    \centering
    \small
    \caption{Output-hit accuracy of off-policy and per-stage on-policy RTPO
    with Qwen3-8B.}
    \label{tab:supp_sync_results}
    \begin{tabular}{lccc}
        \toprule
        Benchmark & Off-policy & On-policy & Difference \\
        \midrule
        GAIA          & 23.30\% & 29.13\% & $+5.83$ pp \\
        GAIA Level 3  &  8.33\% & 16.67\% & $+8.33$ pp \\
        WebWalkerQA   &  6.00\% &  9.50\% & $+3.50$ pp \\
        XBench        &  8.00\% & 15.00\% & $+7.00$ pp \\
        HLE           & 12.45\% & 12.12\% & $-0.33$ pp \\
        \bottomrule
    \end{tabular}
\end{table}

Per-stage on-policy continuation yields consistent gains on the long-horizon
deep-search tasks, with the largest improvement on GAIA Level~3. The two
variants are broadly comparable on HLE, which is dominated by more static and
shorter retrieval. These results localize the cost of stale continuations to
the long-horizon settings targeted by RTPO. Fixed-interval synchronization
and policy-KL-based adaptive synchronization lie between the fully on-policy
and fully off-policy endpoints: they can reduce synchronization cost, but no
longer strictly satisfy $\overline{\omega}_{j,k}\equiv 1$.

\subsection{Budget Consumption Across Training}
\label{sec:supp_budget_dynamics}

The initial policy produces longer trajectories and more tool calls, so early
training stages are more expensive per trunk. The training schedule does not,
however, reserve a fixed number of tool calls for each stage. Every stage
continues under the predefined trunk-sampling and update schedule, and the
actual per-trunk cost decreases as the policy becomes more efficient.

\begin{table}[htbp]
    \centering
    \small
    \caption{Evolution of success, tool use, and generation cost during
    RTPO training.}
    \label{tab:supp_budget}
    \begin{tabular}{ccccc}
        \toprule
        Training point & Task success & Tool calls/trunk &
        Tool-execution success & Generated tokens/trunk \\
        \midrule
        Step 10 & 15.63\% & 7.34 & 74.04\% & 8,013.5 \\
        Step 20 & 37.50\% & 2.06 & 72.73\% & 7,758.8 \\
        Step 30 & 50.00\% & 0.97 & 87.10\% & 3,533.3 \\
        Step 40 & 56.25\% & 1.84 & 94.92\% & 5,469.0 \\
        Step 50 & 40.63\% & 0.94 & 93.33\% & 1,669.3 \\
        \bottomrule
    \end{tabular}
\end{table}

From Step~10 to Step~50, tool calls decrease from 7.34 to 0.94 per trunk,
generated tokens decrease from 8,013.5 to 1,669.3 per trunk, and
tool-execution success increases from 74.04\% to 93.33\%. Tool use therefore
becomes less frequent and more reliable as training progresses.

\subsection{Further Distinctions from Related Methods}
\label{sec:supp_related_work}


\paragraph{SeeUPO.}
SeeUPO and RTPO both use reverse-order sequential updates at the procedural
level, but they differ in motivation, theoretical object, and algorithmic
mechanism. SeeUPO abstracts multi-turn interaction as sequentially executed
multi-agent bandits and uses backward induction to establish monotonic
improvement and global convergence for critic-free backbone algorithms. RTPO
instead begins from three structural inconsistencies in a flattened
multi-turn training pipeline: Rollout--Training Mismatch, Trajectory-Only
Credit Assignment, and Long-Horizon Policy Drift. Its reverse order is one
component of a turn-boundary formulation that is combined with state-matched
sibling comparison and on-policy continuation. The contribution is therefore
not the isolated use of reverse order, but the unified diagnosis,
formalization, algorithm, and guarantees for the three coupled failure
mechanisms.

\paragraph{ArCHer.}
ArCHer addresses delayed reward in long-horizon multi-turn interaction through
a hierarchical actor--critic design. Its high-level component learns
turn-level values with off-policy value-based RL, and its low-level component
uses the critic to train the token-level policy within each turn. RTPO does
not learn an explicit critic; it constructs a turn-level Monte Carlo
advantage from sibling returns sampled from the same boundary state. ArCHer
also does not directly target the rollout--training conditioning mismatch or
the asynchronous downstream-continuation drift analyzed by RTPO.

\paragraph{REFUEL.}
REFUEL and RTPO both avoid an independent critic, but they address different
mismatches. REFUEL uses covariate shift to describe the difference between
training histories generated by a reference policy and deployment histories
generated by the current learner. It iteratively collects self-generated
data and reformulates multi-turn optimization as relative-future regression
tasks. RTPO's Rollout--Training Mismatch occurs within the same sampled batch,
when rollout and likelihood recomputation condition on different contexts
for the same tokens. Thus, REFUEL addresses policy-induced covariate shift
across data-collection stages, whereas RTPO addresses context inconsistency
between rollout and training-time recomputation. The matched
$\tau^3$-Airline results in Table~\ref{tab:supp_airline_test} additionally
show higher final success, more reliable termination, and shorter responses
for RTPO.

\paragraph{R$^3$ and SRL.}
R$^3$ mitigates sparse-reward exploration by moving the curriculum starting
point backward along an expert reasoning trajectory; its reverse mechanism is
a demonstration-based reverse curriculum. SRL also relies on expert
trajectories and derives step-wise supervision from similarity between model
and expert actions. RTPO remains outcome-supervised and on-policy, estimates
turn-level advantages from sibling continuations at the same boundary state,
and jointly addresses conditioning-context mismatch, upstream-state
contamination, and asynchronous continuation drift.


\end{document}